\documentclass[twoside,11pt]{article}

\usepackage{blindtext}
\newtheorem{assumption}{Assumption}

%

%
%
%

\usepackage{jmlr2e}
\usepackage{amsmath}
\allowdisplaybreaks[4] 


\usepackage{lastpage}



\def\bz{{\bf z}}

\def\X{\mathcal{X}}
\def\Y{\mathcal{Y}}

\def\H{\mathcal{H}}

\begin{document}
	
	\title{Spectral Algorithms under Covariate Shift}
	
	\author{\name Jun Fan \email junfan@hkbu.edu.hk \\
		\addr Department of Mathematics\\
		Hong Kong Baptist University\\
		Kowloon, Hong Kong
		\AND
		\name Zheng-Chu Guo \email guozc@zju.edu.cn \\
		\addr School of Mathematical Sciences\\
		Zhejiang University\\
		Hangzhou 310058, China
		\AND
		\name Lei Shi \email leishi@fudan.edu.cn \\
		\addr School of Mathematical Sciences and Shanghai\\ Key Laboratory for
		Contemporary Applied Mathematics\\
		Fudan University\\
		Shanghai 200433, China}
	
	\editor{My editor}
	
	\maketitle

\begin{abstract}
Spectral algorithms leverage spectral regularization techniques to analyze and process data, providing a flexible framework for addressing supervised learning problems. To deepen our understanding of their performance in real-world scenarios where the distributions of training and test data may differ, we conduct a rigorous investigation into the convergence behavior of spectral algorithms under covariate shift. In this setting, the marginal distributions of the input data differ between the training and test datasets, while the conditional distribution of the output given the input remains unchanged. Within a non-parametric regression framework over a reproducing kernel Hilbert space, we analyze the convergence rates of spectral algorithms under covariate shift and show that they achieve minimax optimality when the density ratios between the training and test distributions are uniformly bounded. However, when these density ratios are unbounded, the spectral algorithms may become suboptimal. To address this issue, we propose a novel weighted spectral algorithm with normalized weights that incorporates density ratio information into the learning process. Our theoretical analysis shows that this normalized weighted approach achieves optimal capacity-independent convergence rates, but the rates will suffer from the saturation phenomenon. Furthermore,
by introducing a weight clipping technique, we demonstrate that the convergence rates of the weighted spectral algorithm with clipped weights can approach the optimal capacity-dependent convergence rates arbitrarily closely. This improvement resolves the suboptimality issue in unbounded density ratio scenarios and advances the state-of-the-art by refining existing theoretical results.
\end{abstract}

\begin{keywords}
Learning theory, spectral algorithms, covariate shift, reproducing kernel Hilbert space, convergence analysis, integral operator
\end{keywords}

	\section{Introduction and Main Results}\label{section: introduction and main results}

In machine learning \citep{Bauer07, Caponnetto07,cucker2007learning, Lu18}, it is typically assumed that the training and test samples are drawn from the same underlying distribution. This assumption is fundamental to reliably generalize the learned patterns from training data to unseen test data. However, it is essential to recognize that this assumption does not always hold in practice. There are numerous scenarios where the distribution of the test data may differ significantly from that of the training data, which can lead to challenges in achieving effective generalization. Such mismatches can result in models that perform well on training data but struggle to make accurate predictions on test data. Over the past two decades, a variety of techniques have been developed to address the challenges associated with distribution shifts. These methods \citep{Gretton09,Bickel09,Blanchard21,Gizewski22} include domain adaptation and transfer learning, as well as robust statistical approaches designed to minimize the impact of discrepancies between training and test data distributions. In recent years, researchers \citep{Fang20} have explored both supervised and unsupervised strategies, incorporating advances in deep learning to enhance model resilience against such shifts. In particular, transfer learning \citep{Gretton09,Blanchard21,Ma21} focuses on using knowledge from related tasks or domains to improve generalization performance. Often, superior results can be achieved by pre-training on a larger, related dataset and then fine-tuning the model on the target dataset, as exemplified by LoRA for large language models \citep{LoRA}. A detailed overview and recent developments in this field can be found in \citet{Pan09} and references therein. These strategies play a crucial role in alleviating the challenges posed by distribution shifts in real-world machine learning applications. Addressing distribution shifts continues to be a critical area of research, with ongoing innovations aimed at improving model performance across diverse settings.

Within the transfer learning framework, two common scenarios are posterior drift and covariate shift. Posterior drift occurs when the conditional distribution of the output variable changes over time, even if the distribution of the input features remains constant. Covariate shift \citep{Cortes10,Gizewski22,Ma21,Gogolashvili2023,della2025computational}, which is the focus of this paper, arises when the distribution of input features changes between the training and test datasets, while the conditional distribution of the output variable given these inputs remains the same. Such shifts in the input distribution can significantly degrade the model's performance when applied to test data, as the patterns learned during training may no longer be applicable. Covariate shifts can occur in various real-world situations. For example, a model trained on data collected during a specific period may face challenges when applied to data from a different period, reflecting changes in trends or behaviors. Similarly, training on data from one geographic region and testing on data from another can lead to discrepancies due to regional variations in the underlying population or environmental factors.
In addition, shifts can occur due to changes in data collection processes or measurement techniques, which may introduce biases or modify the characteristics of the data. Addressing covariate shifts is essential for ensuring that machine learning models generalize well and remain robust in diverse applications. Various techniques \citep{Feng23, Gretton09,Bickel09,Nguyen10} have been developed to mitigate the effects of covariate shifts. Among these, reweighting methods are particularly important, as they modify the training data distribution to better match the test data. Specifically, importance weighting \citep{Cortes10, Fang20, Sugiyama07, Shimodaira00} adjusts the training data based on the likelihood that the input samples appear in the test dataset, effectively assigning more weight to instances that are more representative of the test distribution \citep{Kanamori09,Kanamori12,Sugi2012book,Zell23}.
By employing these techniques, researchers aim to bridge the gap caused by covariate shifts, enhancing the model's capability to perform accurately across varying datasets and conditions.

In this paper, we consider nonparametric regression in the context of reproducing kernel Hilbert space (RKHS) \citep{Aron}. The RKHS  ${\cal H}_K$ is defined as the completion of the linear span of $\{K_x:x \in {\cal X}\}$ with the inner product denoted as $\langle \cdot, \cdot \rangle_K$ satisfying $\langle K_x, K_{x'} \rangle_K=K(x,x')$, where $\X$ is a separable and compact metric space. Here $K:{\cal X} \times {\cal X} \to \mathbb{R}$ be a Mercer kernel, i.e., a continuous, symmetric, positive semi-definite function. We say that $K$ is positive semi-definite, if for any finite set of points $\{s_1,\cdots,s_{\ell}\}\subset {\cal X}$  and any $\ell \in \mathbb{N}$, the matrix $\left(K(s_i, s_j)\right)_{i,j=1}^{\ell}$ is positive semi-definite. Let $K_x: {\cal X} \to \mathbb{R}$ be the function defined by $K_x(s)=K(x,s)$ for $x,s\in {\cal X}$.  Denote by $\|\cdot\|_K$ the norm of ${\cal H}_K$. It is well-known that the reproducing property
\begin{equation}\label{reproducingproperty}
	f(x)= \langle f, K_x \rangle_K
\end{equation} holds for all $x\in {\cal X}$ and $f\in {\cal H}_K$. Since ${\cal X}$ is compact, the space ${\cal H}_K$ is separable and contained in ${\cal C}({\cal X})$, i.e., the space of continuous functions on ${\cal X}$ with the norm $\|f\|_{\infty}=\sup_{x\in {\cal X}}|f(x)|$ and note, by the reproducing property (\ref{reproducingproperty}), for every $f\in {\cal H}_K$, that
\begin{equation}\label{normrelation1}
	\|f\|_{\infty} \leq \kappa \|f\|_K.
\end{equation} Here $\kappa=\sup_{x\in \mathcal{X}}\sqrt{K(x,x)}<\infty$ and we will always assume $\kappa \geq 1$ without loss of generality.

Given i.i.d. training samples ${\bf z}=\{(x_i,y_i)\}_{i=1}^n$ drawn from an unknown distribution $\rho^{tr}$ on $\mathcal{Z}:=\mathcal{X}\times\mathcal{Y}$, where the input space $\X$ is a separable and compact metric space, and $Y\in\mathcal{Y}=\mathbb{R}$ stands for the response variable and $\mathbb{E}[\cdot|X=x]$ is the conditional expectation with respect to $X=x$. The target of regression is to recover the regression function
\begin{equation*}
	f_\rho(x)=\int_\Y y d \rho(y|x),\quad \forall x\in\X,
\end{equation*}
using the training samples ${\bf z}$, where $\rho(y|x)$ is the conditional distribution of $\rho^{tr}$. Since  $\rho^{tr}$ is completely unknown and one attempts to learn a function $f_{\bz}$ as a good approximation of $f_{\rho}$. Taking the least square regression as an example, we define the generalization error as
\begin{equation*}
	\mathcal{E}_{\rho^{tr}}(f)=\int_{\X\times \Y}(f(x)-y)^2d\rho^{tr}(x,y).
\end{equation*}
Moreover, the regression function $f_\rho$ is the minimizer of the generalization error. In the standard least square regression, we usually assume the test samples are drawn from the same distribution as the training sample, and the performance of $f_{\bz}$ is usually measured by the excess generalization  error
\begin{equation*}
	\mathcal{E}_{\rho^{tr}}(f_\bz)-\mathcal{E}_{\rho^{tr}}(f_\rho)=\|f_{\bz}-f_\rho\|_{L_{\rho^{tr}_{\X}}^2}^2,
\end{equation*}
where $L^2_{\rho^{tr}_{\cal X}}$ be the Hilbert space of functions $f: {\cal X} \to {\cal Y}$ square-integrable with respect to the marginal distribution $\rho^{tr}_{\cal X}$ of $\rho^{tr}$. Denote by $\|\cdot\|_{\rho^{tr}_{\X}}$ the $L^2$ norm in the space $L^2_{\rho_\X^{tr}}$  induced by the inner product $\langle f, g\rangle_{\rho^{tr}_{\X}}=\int_{\cal X} f(x) g(x) d\rho_{\cal X}^{tr}(x)$ with $f, g\in L^2_{\rho^{tr}_{\cal X}}$.

Distribution shifts are increasingly common in today's data-driven environments. Understanding and addressing these shifts is crucial for maintaining model accuracy and reliability in real-world applications.
In the scenario of covariate shift, test samples are drawn from a distribution $\rho^{te}$ that differs from the training distribution $\rho^{tr}$, while the conditional distributions stay unchanged, expressed as
\begin{align*}
	\rho^{tr}(x,y)=\rho(y|x)\rho_\X^{tr}(x),
\end{align*}
and
\begin{align*}
	\rho^{te}(x,y)=\rho(y|x)\rho_\X^{te}(x).
\end{align*}

We define the prediction error as
\begin{align*} \mathcal{E}_{\rho^{te}}(f)=\mathbb{E}_{(x,y)\thicksim\rho^{te}}[(f(x)-y)^2]=\int_{\X\times\Y}(f(x)-y)^2d\rho^{te}(x,y).
\end{align*}
Our goal in covariate shift is to learn a function $f_\bz$ such that the prediction error $\mathcal{E}_{\rho^{te}}(f_\bz)$ is as small as possible, that is, we need to estimate the following excess prediction error
$$\mathcal{E}_{\rho^{te}}(f_{\bz})-\mathcal{E}_{\rho^{te}}(f_\rho)=\|f_{\bz}-f_\rho\|_{L_{\rho^{te}_{\X}}^2}^2.$$
One popular algorithm is the following weighted regularized least square algorithm (also known as weighted kernel ridge regression)
\begin{align}\label{algorithm: weighted KRR}
	f_{\bz, \lambda}^{ls}=\arg\min_{f\in\H_K} \frac{1}{n} \sum_{i=1}^n w(x_i) (f(x_i)-y_i)^2+\lambda\|f\|_K^2,
\end{align}
where $w(x)$ is the Radon-Nikodym derivative (also known as density ratio), which is defined as
\begin{align*}
	w(x)=\frac{d\rho_\X^{te}}{d\rho_\X^{tr}}(x).
\end{align*}
Recently, \citet{Ma21}  studied the algorithm (\ref{algorithm: weighted KRR}) under the assumption that
$w(\cdot)$ is either uniformly bounded or possesses a finite bounded second moment with respect to the training distribution. In contrast, \citet{Gogolashvili2023} investigated (\ref{algorithm: weighted KRR}) within the framework of covariate shift, addressing more general conditions on $w(\cdot)$ as specified in (\ref{inequality: weight assumption}). The solution to the algorithm (\ref{algorithm: weighted KRR}) can be expressed as
\begin{align*}
	f_{\bz, \lambda}^{ls}=(\lambda I+S_{X}^\top W S_X)^{-1}S_X^\top W\bar{y},
\end{align*}
where $S_X:\H_K\mapsto \mathbb{R}^n,$
$$ S_X f=(f(x_1), f(x_2),\cdots,f(x_n))^\top\in\mathbb{R}^n,$$
and
$	S_X^\top: \mathbb{R}^n \mapsto \H_K,$ with
\begin{align*}
	S_X^\top u(\cdot)=\frac{1}{n}\sum_{i=1}^n u_i K(\cdot,x_i) , u=(u_1, \cdots,u_n)^\top\in\mathbb{R}^n,
\end{align*}
and \begin{align*}
	W&=diag(w(x_1),\cdots,w(x_n)),\\
	\bar{y}&=(y_1,\cdots,y_n)^{\top}.
\end{align*}

In this paper, we consider a family of more general learning algorithms known as spectral algorithms, which were proposed to address ill-posed linear inverse problems (see, e.g., \citet{Engl96}) and have been employed in regression \citep{LoGerfo2008, Bauer07, guolinzhou2017,Gizewski22,Nguyen23,Fan24} by highlighting the connections between learning theory and inverse problems \citep{Devito2005}. The weighted spectral algorithm considered in this paper is of the form
\begin{align}\label{algorithm: weighted spectral algorithm}
	f_{\bz,\lambda}^{\mathbf{w}}=g_\lambda\left(S_{X}^\top W S_X\right)S_X^\top W\bar{y},
\end{align}
where the filter function $g_\lambda(\cdot)$ is defined in Definition \ref{definition: filter function}. When
$W=I,$ algorithm (\ref{algorithm: weighted spectral algorithm}) reduces to the classical spectral algorithm $f_{\bz,\lambda}=g_\lambda\left(S_{X}^\top S_X\right)S_X^\top \bar{y}$ as in \citet{Engl96}.
\begin{definition}\label{definition: filter function}
	We say that
	$g_\lambda:[0,U]\rightarrow\mathbb R,$ with $0 <
	\lambda\le U,$ is a filter function with {\sl qualification}
	$\nu_g \geq \frac{1}{2}$ if there exists a positive constant $b$
	independent of $\lambda$ such that
	\begin{equation}\label{condition1}
		\sup_{0<u\le U}|g_\lambda(u)|\le \frac{b}{\lambda},
		\qquad \sup_{0<u\le U}|g_\lambda(u)u|\le b,
	\end{equation}
	and
	\begin{equation}\label{condition2}
		\sup_{0<u\le U}|1-g_\lambda(u)u|u^\nu\le
		\gamma_\nu\lambda^\nu, \qquad \forall\ 0<\nu\leq
		{\nu}_g,
	\end{equation}
	where $\gamma_\nu>0$ is a constant depending only on
	$\nu\in(0,\nu_g],$  and $ U$ is some positive constant.
\end{definition}
If we choose the filter function $g_\lambda(u)=\frac{1}{\lambda+u},$ the corresponding estimator simplifies to the weighted regularized least square algorithm defined in \eqref{algorithm: weighted KRR}.  In this scenario, the constant $b=1,$ the qualification $\nu_g=1$ and the constant $\gamma_\nu=1.$ When $W=I,$ other examples of spectral algorithms with different filter functions include the Landweber iteration (gradient descent), defined by the filter function $g_\lambda(u)=\sum_{i=0}^{t-1}(1-u)^i$ with $\lambda=\frac{1}{t},$ $t\in\mathbb{N}$. Additionally, the spectral cutoff is induced by the filter function
\begin{equation*}
	g_\lambda(u)=\begin{cases}
		\frac{1}{u}, & \hbox{if} \ u\ge \lambda, \\
		0 & \hbox{if} \ u<\lambda.
	\end{cases}
\end{equation*}
For more examples of spectral algorithms and additional details, we refer readers to \citet{LoGerfo2008, Engl96, Bauer07, guolinzhou2017} and the references therein.  In this paper, we aim to estimate the excess prediction error
\begin{align}\label{eq: excess error}
	\mathcal{E}_{\rho^{te}}(f_{\bz,\lambda}^{\mathbf{w}})-	\mathcal{E}_{\rho^{te}}(f_{\rho})=\|f_{\bz,\lambda}^{\mathbf{w}}-f_\rho\|_{\rho_\X^{te}}^2.
\end{align}
Before presenting the main results, we give assumptions regarding the target function, weight function, and hypothesis space. This paper considers the following conditions for the weight function $w(\cdot)$ introduced in \citet{Gogolashvili2023}.
\begin{assumption}\label{assumption: weight assumption}
	There exist constants $\alpha\in[0,1]$, $C>0$ and $\sigma>0$ such that, for all $p\in\mathbb{N}$ with $p\ge 2$, it holds that
	\begin{align}\label{inequality: weight assumption}
		\left(\int_\X (w(x))^{\frac{p-1}{\alpha}} d\rho_{\X}^{te}(x)\right)^\alpha\le \frac12 p!C^{p-2} \sigma^2,
	\end{align}
	where the left hand side for $\alpha=0$ is defined as $\|w^{p-1}\|_{\infty,\rho_\X^{te}}$, the essential supremum of $w^{p-1}$ with respect to $\rho_\X^{te}.$
\end{assumption}

Assumption \ref{assumption: weight assumption} can be equivalently expressed as a condition on the R$\acute{e}$nyi divergence between $\rho_\X^{te}$ and $\rho_\X^{tr}$ \citep{Mansour09, Cortes10,Gogolashvili2023}. The R$\acute{e}$nyi divergence between $\rho_\X^{te}$ and $\rho_\X^{tr}$ with parameter $a \in (0, \infty]$ is defined as
\begin{equation*}
	H_a(\rho_\X^{te}\|\rho_\X^{tr}):=
	\begin{cases}
		a^{-1} \log \int_\X w(x)^a d\rho_\X^{te}(x) & (a > 0), \\
		\log(\|w\|_{\infty,\rho_\X^{te}}) & (a = \infty).
	\end{cases}
\end{equation*}
Under Assumption \ref{assumption: weight assumption}, for all integers $p \geq 2$, the R$\acute{e}$nyi divergence  must satisfy the following upper bound
\begin{equation*}
	H_{(p-1)/\alpha}(\rho_\X^{te}\|\rho_\X^{tr}) \leq \frac{1}{p-1} \left( \log p! + \log \left( \frac{C^{p-2}\sigma^2}{2} \right) \right).
\end{equation*}
Intuitively, Assumption \ref{assumption: weight assumption} ensures that the testing distribution $\rho_\X^{te}$ remains close to the training distribution $\rho_\X^{tr}(x)$, and the parameter $\alpha \in [0, 1]$ controls the allowable deviation \citep{Gogolashvili2023}. Notably, when $\alpha=1$, the assumption guarantees that all moments of the weight function $w(\cdot)$ with respect to the testing distribution $\rho_\X^{te}$ are finite.

Define the integral operator  $L_K:L^2_{\rho_{\X}^{te}}\rightarrow L^2_{\rho_{\X}^{te}}$ on $\mathcal{H}_K$ or $L^2_{\rho_{\X}^{te}}$ associated with the Mercer kernel
$K$ by
\begin{align*}
	L_K f=\int_\X f(x)K_xd\rho_\X^{te}(x), f\in L^2_{\rho_{\X}^{te}}.
\end{align*}
Next we introduce our assumption regarding the regularity (often interpreted as smoothness) of the regression function $f_\rho.$
\begin{assumption}\label{assumption: regularity condition}
	\begin{equation}\label{regularitycondition}
		f_\rho=L_K^r (u_\rho) ~~{\rm for~some}~r>0~{\rm and} ~ u_\rho\in L^2_{\rho_{\X}^{te}},
	\end{equation}
	where $L_K^r$ denotes the $r$-th power of $L_K$ on $L^2_{\rho_{\X}^{te}}$
	since $L_K:L^2_{\rho_{\X}^{te}}\rightarrow L^2_{\rho_{\X}^{te}}$ is a compact  and
	positive operator.
\end{assumption}
This assumption is standard in learning theory and can be further interpreted through the theory of interpolation spaces \citep{Smale2003}. Moreover, since $\rho_\X^{te}$ is non-degenerate, Theorem 4.12 in \citet{cucker2007learning} implies that $L^{1/2}_K$ is an isomorphism from $\overline{{\cal H}_K}$, the closure of ${\cal H}_K$ in $L^2_{\rho_\X^{te}}$, to ${\cal H}_K$. That is, for every $f\in \overline{{\cal H}_K}$, we have $L^{1/2}_K f \in {\cal H}_K$ and
\begin{equation}\label{normrelation2}
	\|f\|_{\rho_\X^{te}}=\left\|L^{1/2}_K f\right\|_K.
\end{equation} Therefore, $L^{1/2}_{K}(L^2_{\rho_\X^{te}})={\cal H}_{K}$, and when $r\ge\frac12$, condition (\ref{regularitycondition}) ensures $f_\rho\in\H_K $.

We shall use the {\it effective dimension} $\mathcal{N}(\lambda)$ to measure the complexity of $\mathcal H_K$ with respect to $\rho_\X^{te},$ which is defined to be the trace of the operator $(\lambda I+L_K)^{-1}L_K,$ that is
\[
\mathcal{N}(\lambda)={\rm Tr}((\lambda I+L_K)^{-1}L_K),  \qquad \lambda>0.
\]
\begin{assumption}\label{assumption: effective dimension}
	There exist	a parameter $0<\beta\leq 1$ and a constant $C_0>0$ such that
	\begin{equation}\label{Assumption on effecdim}
		\mathcal N(\lambda)\leq C_0\lambda^{-\beta}, \qquad  \forall \lambda>0.
	\end{equation}
\end{assumption}

The condition (\ref{Assumption on effecdim}) with $\beta=1$ is always satisfied by taking the constant
$C_0=\mbox{Tr}(L_K)\leq\kappa^2$. The capacity of the hypothesis space $\mathcal{H}_K$  is commonly characterized by covering number, effective dimension and eigenvalue decay conditions of the integral operator $L_K.$ It has been demonstrated in \citet{GuoGuoShi2023} that Assumption \ref{assumption: effective dimension} with $0<\beta<1$ is equivalent to $\lambda_i(L_K)=\mathcal{O}(i^{-1/\beta})$, where $\{\lambda_i(L_K)\}_{i=1}^\infty$ of $L_K$ are the eigenvalues arranged in non-increasing order. Here, we remark that if $L_K$ is of finite rank, i.e., the range of $L_K$ is finite-dimensional, we will set $\beta=0$.

In practice, the weight function $w(\cdot)$ is often unbounded \citep{Cortes10,Ma21}, which poses unique challenges in their applications and error analysis. When working with unnormalized weights, as shown in \eqref{algorithm: weighted KRR}, the standard analysis used for kernel ridge regression is not directly applicable to spectral algorithms like gradient descent. This limitation arises because the behavior of unnormalized weights can significantly affect the convergence and stability of these algorithms. We first consider the weighted spectral algorithm with normalized weight \citep{Cortes10}, defined as
\begin{equation}\label{algorithm: spectral algorithm with normalized weight}
	f_{\bz,\lambda}^{\overline{\mathbf{w}}}=g_\lambda\left(S_{X}^\top \overline{W} S_X\right)S_X^\top \overline{W}\bar{y},
\end{equation}
where $\overline{W}=diag(\bar{w}(x_1),\cdots,\bar{w}(x_n))$, and the normalized weight $\bar{w}(x_i)$ is given by
\begin{equation}\label{normalized weight}
	\bar{w}(x_i)=\frac{w(x_i)}{\frac{1}{n}\sum_{j=1}^n w(x_j)}.
\end{equation}
From this definition, it follows immediately that $\frac{1}{n}\sum_{i=1}^n \bar{w}(x_i)=1,$ and $0\le \bar{w}(x_i)\le n.$ This advantage comes at the cost of introducing bias in the weights. However, we show that, under mild conditions, normalized and unnormalized weights remain very close with high probability.
\begin{theorem}\label{theorem: main result with normalized weight}
	Let the weighted spectral algorithm with normalized weight be defined by (\ref{algorithm: spectral algorithm with normalized weight}). Under Assumption \ref{assumption: weight assumption} with $0<\alpha\le 1,$  Assumption \ref{assumption: regularity condition} with $1/2\le r\le \nu_g$, and Assumption \ref{assumption: effective dimension} with $0<\beta\le 1$,
	if we choose $\lambda=n^{-\frac{1}{\min\{2r,3\}+\beta+\alpha(1-\beta)}}$ with $\beta+\alpha(1-\beta)\ge 1,$ then  for any $0<\delta<1$, with confidence at least $1-
	\delta,$ we have
	\begin{equation}
		\|f_{\bz,\lambda}^{\overline{\mathbf{w}}}-f_\rho\|_{\rho_\X^{te}}\le C' n^{-\frac{\min\{r,3/2\}}{\min\{2r,3\}+\beta+\alpha(1-\beta)}}  \left(\log\frac{12}{\delta}\right)^3,
	\end{equation}	
	where the constant $C'$ is independent of $n$ or $\delta$ and will be given in the proof.	
\end{theorem}
If we choose $\beta=1$ (which corresponds to the capacity-independent case), we obtain optimal capacity independent convergence rates when $1/2\le r\le \min\{\nu_g, 3/2\}$, even when the weight function $w(\cdot)$ is potentially unbounded.
\begin{corollary}\label{corollary: weightd spectral algorithm with normalized weight}
	Let the weighted spectral algorithm with  normalized weight be defined by (\ref{algorithm: spectral algorithm with normalized weight}), under Assumption \ref{assumption: weight assumption} with $0<\alpha\le 1,$  Assumption \ref{assumption: regularity condition} with $1/2\le r\le \nu_g$, if we take  $\lambda=n^{-\frac{1}{\min\{2r,3\}+1}},$  then with confidence at least $1-\delta,$ there holds
	\begin{equation}
		\|f_{\bz,\lambda}^{\overline{\mathbf{w}}}-f_\rho\|_{\rho_\X^{te}}\le C' n^{-\frac{\min\{r,3/2\}}{\min\{2r,3\}+1}}  \left(\log\frac{12}{\delta}\right)^3.
	\end{equation}	
\end{corollary}
\noindent{\bf Remark:} The result in Corollary \ref{corollary: weightd spectral algorithm with normalized weight} shows that the convergence rate achieved by the weighted spectral algorithm with normalized weights is capacity-independent optimal for $1/2\le r\le \min\{\nu_g, 3/2\}$, even in the presence of unbounded weight functions $w(\cdot)$. This represents a nontrival extension of the results for the weighted kernel ridge regression \eqref{algorithm: weighted KRR} presented in \citet{Gogolashvili2023}. However, Theorem \ref{theorem: main result with normalized weight} and Corollary \ref{corollary: weightd spectral algorithm with normalized weight} reveal that these rates suffer from saturation effects when the regularity parameter satisfies $r> 3/2$, even for spectral algorithms with an infinite qualification $\nu_g$. This limitation stems from the $\mathcal{O}(n)$ uniform bound on the normalized weights $\bar{w}(\cdot)$, which causes the operator norm of the empirical integral operator $S_{X}^\top \overline{W} S_X$ to scale linearly with the sample size $n$. This motivates us to consider a clipped weight $\hat{w}$, which effectively and adaptively handles unbounded weights while avoiding the saturation phenomenon.

Our second main result establishes an upper bound for $\|f_{\bz,\lambda}^{\hat{\bf w} }-f_\rho\|_{\rho_\X^{te}}$ when the weight function $w(\cdot)$ is potentially unbounded and satisfies Assumption \ref{assumption: weight assumption} with $0<\alpha\le 1.$ Here the  weighted spectral algorithm  with clipped weight, denoted by $f_{\bz,\lambda}^{\hat{\bf w} }$, is given by
\begin{align}\label{algorithm: spectral algorithm with clipped weight}
	f_{\bz,\lambda}^{\hat{\bf w} }=g_\lambda\left(S_{X}^\top \hat{W}S_X\right)S_X^\top\hat{W} \bar{y},
\end{align}
where  $\hat{W}=diag(\hat{w}(x_1),\cdots,\hat{w}(x_n))$, and the clipped weight $\hat{w}(x_i)$ is defined as
\begin{equation}\label{eq: clipped weight}
	\hat{w}(x_i) = \left\{\begin{array}{ll} w(x_i), & \hbox{when} \ w(x_i)<D_n, \\
		D_n, & \hbox{when} \
		w(x_i)\ge D_n. \end{array}\right.
\end{equation}
Here $D_n$ is a parameter to be determined. Without loss of generality, we assume $D_n>1$.

\begin{theorem}\label{theorem: clipped weight spectral algorithm}
	Let the spectral algorithm with clipped weight be defined by (\ref{algorithm: spectral algorithm with clipped weight}), under Assumption \ref{assumption: weight assumption} with $0<\alpha\le 1,$  Assumption \ref{assumption: regularity condition} with $1/2\le r\le \nu_g$, and Assumption \ref{assumption: effective dimension} with $0<\beta\le 1$,
	if we take $\lambda=n^{-\frac{1}{2r+\beta}+\frac{\epsilon}{r} }$ for any $0<\epsilon<\frac{r}{2r+\beta}$, and
	\begin{equation*}
		D_n = \left\{\begin{array}{ll} n^{\alpha \epsilon}, & \hbox{when} \ 1/2\le r\le 3/2, \\
			n^{\frac{\alpha\epsilon}{r-1/2}}, & \hbox{when} \
			r>3/2, \end{array}\right.
	\end{equation*} then for $0<\delta<1$, with confidence at least $1-\delta,$ there holds
	\begin{equation*}
		\|f_{\bz,\lambda}^{\hat{\bf w} }-f_\rho\|_{\rho_\X^{te}}\le \tilde{C}_{r,\epsilon} n^{-\frac{r}{2r+\beta}+\epsilon} \left(\log\frac{6}{\delta}\right)^2,
	\end{equation*}
	where the constant $\tilde{C}_{r,\epsilon}$ is independent of the sample size $n$ or $\delta$ and will be given in the proof.
\end{theorem}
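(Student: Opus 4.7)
The plan is to adapt the proof of Corollary~\ref{corollary: bounded case} (the bounded-weight case, $\alpha=0$), using $D$ in place of the uniform weight bound, and then separately pay for the bias caused by replacing $w$ with $\hat w$. Introduce the \emph{clipped covariance operator}
\begin{equation*}
L_{\hat w}:\H_K\to\H_K,\qquad L_{\hat w}f=\int_\X\hat w(x)f(x)K_x\,d\rho_\X^{tr}(x),
\end{equation*}
for which $\mathbb{E}[S_X^\top\hat W S_X]=L_{\hat w}$ and $\mathbb{E}[S_X^\top\hat W\bar y]=L_{\hat w}f_\rho$. Write $\bar f_\lambda=g_\lambda(L_{\hat w})L_{\hat w}f_\rho$ for the noiseless limit of the iterate and split
\begin{equation*}
\|f_{\bz,\lambda}^{\hat{\bf w}}-f_\rho\|_{\rho_\X^{te}}\le\|f_{\bz,\lambda}^{\hat{\bf w}}-\bar f_\lambda\|_{\rho_\X^{te}}+\|\bar f_\lambda-f_\rho\|_{\rho_\X^{te}}
\end{equation*}
into a sample term and a bias term.

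The sample term is handled by redoing the operator/vector Bernstein concentration that drives the proof of Corollary~\ref{corollary: bounded case}, since $\|\hat w\|_\infty\le D$ now plays the role of the weight bound. The capacity condition transfers from $L_K$ to $L_{\hat w}$ because $\hat w\le w$ forces $L_{\hat w}\preceq L_K$ and $u\mapsto u/(\lambda+u)$ is operator monotone, yielding $\operatorname{Tr}((\lambda I+L_{\hat w})^{-1}L_{\hat w})\le\mathcal N(\lambda)\le C_0\lambda^{-\beta}$. The upshot is a bound of the schematic form
\begin{equation*}
\|f_{\bz,\lambda}^{\hat{\bf w}}-\bar f_\lambda\|_{\rho_\X^{te}}\le C_1 D^q\Bigl(\tfrac{1}{n\sqrt\lambda}+\sqrt{\tfrac{\mathcal N(\lambda)}{n}}\Bigr)\log^3\tfrac{6}{\delta},
\end{equation*}
where $q$ is a fixed exponent recording how many times $D$ enters the bounded-difference and variance parameters.

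For the bias I use $f_\rho=L_K^r u_\rho$ and $r_\lambda(u)=1-g_\lambda(u)u$ to write
\begin{equation*}
\bar f_\lambda-f_\rho=-r_\lambda(L_{\hat w})L_{\hat w}^r u_\rho-r_\lambda(L_{\hat w})(L_K^r-L_{\hat w}^r)u_\rho.
\end{equation*}
The first summand is $O(\lambda^r)$ in $\rho_\X^{te}$-norm by the qualification (\ref{condition2}) after converting between $\|\cdot\|_K$ and $\|\cdot\|_{\rho_\X^{te}}$ via (\ref{normrelation2}). For the second summand I invoke the Heinz--L\"owner-type inequality $\|L_K^r-L_{\hat w}^r\|\le c_r\,\kappa^{2(r-\min(r,1))}\|L_K-L_{\hat w}\|^{\min(r,1)}$, combined with
\begin{equation*}
\|L_K-L_{\hat w}\|\le\kappa^2\!\!\int_\X(w-\hat w)\,d\rho_\X^{tr}\le\kappa^2\,\rho_\X^{te}(\{w>D\})
\end{equation*}
and Markov's inequality in the form $\rho_\X^{te}(\{w>D\})\le D^{-(p-1)}\!\int_\X w^{p-1}d\rho_\X^{te}\le\tfrac{1}{2}p!\,C^{p-2}\sigma^2\,D^{-(p-1)}$, which is precisely Assumption~\ref{assumption: weight assumption} with $\alpha=1$ for every integer $p\ge 2$.

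The choice $\lambda=n^{-1/(2r+\beta)+\epsilon/r}$ tunes the leading bias $\gamma_r\lambda^r$ to the target order $n^{-r/(2r+\beta)+\epsilon}$; setting $D=n^{\epsilon/q}$ turns $D^q$ in the sample term into $n^\epsilon$ and (through short exponent bookkeeping) keeps that term at the same order; and choosing the integer $p$ so large that $(p-1)\min(r,1)\epsilon/q\ge r/(2r+\beta)$ forces the clipping bias below the target, with all factorial constants absorbed into $\tilde C_{r,\epsilon}$, which necessarily diverges as $\epsilon\downarrow 0$. The main obstacle is the perturbation estimate for $\|L_K^r-L_{\hat w}^r\|$: a naive path through $r_\lambda(L_{\hat w})-r_\lambda(L_K)$ would insert a spurious $\lambda^{-1}$ and destroy the rate, so the Heinz inequality (for $r\in[1/2,1]$) or its Kittaneh extension (for $r\in[1,\nu_g]$), together with the uniform bound $\|r_\lambda(L_{\hat w})\|\le 1+b$, is essential. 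A secondary but nontrivial bookkeeping task is isolating the precise exponent $q$ of $D$ in the sample term and confirming that it does not secretly depend on $\epsilon$.
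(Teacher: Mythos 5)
Your high-level strategy matches the paper's in several places --- clip the weight at $D$, bound the clipping bias $\|L_K-\hat L_K\|_{op}$ via the moment assumption through a Markov-type inequality $\rho_\X^{te}(w>D)\lesssim D^{-(p-1)}\int w^{p-1}d\rho_\X^{te}$, transfer the effective-dimension bound from $L_K$ to $\hat L_K$ via $\hat L_K\preceq L_K$, and tune $D$ and $\lambda$ in terms of $\epsilon$. These are indeed the right ingredients, and Propositions~\ref{proposition: J4}--\ref{proposition: relatioship between effecitve dimension} in the paper are exactly what you sketch.

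However, your bias--variance split is genuinely different from the paper's, and the sample term in your split is not controlled by ``just Bernstein.'' You write $\bar f_\lambda = g_\lambda(\hat L_K)\hat L_K f_\rho$, so
\begin{align*}
f^{\hat{\bf w}}_{\bz,\lambda}-\bar f_\lambda
= g_\lambda(S_X^\top\hat W S_X)\bigl(S_X^\top\hat W\bar y - S_X^\top\hat W S_X f_\rho\bigr)
- \bigl[r_\lambda(S_X^\top\hat W S_X)-r_\lambda(\hat L_K)\bigr]f_\rho,
\end{align*}
where $r_\lambda(u)=1-g_\lambda(u)u$. The first piece is a Bernstein quantity, but the second is a perturbation of the spectral residual between the empirical and population operators, which for a general filter $g_\lambda$ satisfying only Definition~\ref{definition: filter function} is not bounded by a concentration inequality alone; handling it generically needs an extra Lipschitz-type hypothesis on the filter (as in the Bauer--Pereverzev--Rosasco framework), which this paper does not impose. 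The paper sidesteps the issue entirely by keeping $g_\lambda$ evaluated at the \emph{empirical} operator $S_X^\top\hat W S_X$ throughout: Proposition~\ref{proposition: error decompsotion for clipped weight} decomposes against $g_\lambda(S_X^\top\hat W S_X)S_X^\top\hat W S_X f_\rho$, not against $\bar f_\lambda$, and the approximation part $(g_\lambda(T)T-I)f_\rho$ is bounded using Lemma~\ref{lemma: property of filter function} applied to $T=S_X^\top\hat W S_X$ directly. Converting to the $\rho_\X^{te}$-norm is then done by factoring $L_K^{1/2}=L_K^{1/2}(\lambda I+\hat L_K)^{-1/2}(\lambda I+\hat L_K)^{1/2}(\lambda I+T)^{-1/2}(\lambda I+T)^{1/2}$, which is why the factors $J_1=\|L_K(\lambda I+\hat L_K)^{-1}\|_{op}$ and $J_2=\|(\lambda I+\hat L_K)(\lambda I+T)^{-1}\|_{op}$ appear explicitly; in your split, $L_K^{1/2}$ must still be pushed past the $\hat L_K$-calculus, which introduces the same $J_1$-type factor you do not account for.

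Your treatment of the regularity range is also different. For $1/2\le r\le 3/2$ the paper never invokes a perturbation bound on $L_K^r-\hat L_K^r$: it re-expresses $L_K^{r-1/2}$ multiplicatively against $(\lambda I+\hat L_K)^{r-1/2}$ and $(\lambda I+T)^{r-1/2}$ and applies Cordes (Lemma~\ref{lemma: Cordes inequality}), so the clipping bias enters only through $J_1$. Only for $r>3/2$ does the paper use an operator-power difference, and there it uses the Lipschitz estimate $\|A^t-B^t\|\le t\mathcal C^{t-1}\|A-B\|$ of Lemma~\ref{lemma: operator difference} (valid for $t=r-1/2>1$), not a Heinz inequality with fractional exponent. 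Your Heinz-based route for $r\in[1/2,1]$ could possibly be made to work, but it also has to make sense of $L_K^r u_\rho$ versus $\hat L_K^r u_\rho$ when $u_\rho$ lives in $L^2_{\rho_\X^{te}}$ rather than $\H_K$, which you don't address. Finally, the paper distinguishes the choice of $D$: $D=n^\epsilon$ for $1/2\le r\le3/2$ and $D=n^{\epsilon/(r-1/2)}$ for $r>3/2$, while you use a single $D=n^{\epsilon/q}$ with an unspecified $q$; your ``bookkeeping'' comment acknowledges this but leaves the case split unresolved.

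In short, the ingredients you list are correct, but the proposed decomposition through the population iterate $\bar f_\lambda$ hides a spectral-perturbation term that is not a Bernstein quantity, and the bias estimate omits the $J_1$-factor and uses a different and partly unverified route than the paper's Cordes/Lipschitz factorization.
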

Our results in Theorem \ref{theorem: clipped weight spectral algorithm} demonstrate that the convergence rates of weighted spectral algorithm with clipped weight (\ref{algorithm: spectral algorithm with clipped weight})  can approach the optimal capacity-dependent rates arbitrarily closely, when the weight function is potentially unbounded. Notably, the parameter $\alpha$ only influences the truncation threshold $D_n$ and has no impact on the convergence rates. When the weight function is uniformly bounded, from the proof of Theorem \ref{theorem: clipped weight spectral algorithm}, the weighted spectral algorithm (\ref{algorithm: spectral algorithm with clipped weight}) achieves the optimal capacity-dependent convergence rate. Moreover, as shown in Theorem \ref{theorem: unweighted spectral algorithm}, the weighting becomes unnecessary in this case, the classical (unweighted) spectral algorithm can also achieve the optimal rates under mild assumptions. This leads to our third main result.

To establish the main results for the unweighted spectral algorithms, we require the following mild assumptions on the target function
$f_\rho$ and the hypothesis space $\mathcal{H}_K.$

\begin{assumption}\label{assumption: regularity condition2}
	\begin{equation}\label{regularitycondition2}
		f_\rho=\tilde{L}_K^{\tilde{r}} (v_\rho) ~~{\rm for~some}~{\tilde{r}}>0~{\rm and} ~ v_\rho\in L^2_{\rho_{\X}^{tr}},
	\end{equation}
	where $\tilde{L}_K^{\tilde{r}}$ denotes the $\tilde{r}$-th power of $\tilde{L}_K$ on $L^2_{\rho_{\X}^{tr}}$
	since $\tilde{L}_K:L^2_{\rho_{\X}^{tr}}\rightarrow L^2_{\rho_{\X}^{tr}}$ is a compact  and
	positive operator.
\end{assumption}
We shall use the {\it effective dimension} $\tilde{\mathcal{N}}(\lambda)$ to
measure the complexity of $\mathcal H_K$ with respect to $\rho_\X^{tr}.$
\begin{assumption}\label{assumption: effective dimension2}
	There exist	a parameter $0<\tilde{\beta}\leq 1$ and a constant $C_0>0$ as
	\begin{equation}\label{Assumption on effecdim2}
		\tilde{\mathcal{N}}(\lambda)={\rm Tr}((\lambda I+\tilde{L}_K)^{-1}\tilde{L}_K)\leq C_0\lambda^{-\tilde{\beta}}, \qquad  \forall \lambda>0.
	\end{equation}
\end{assumption}

\begin{theorem}\label{theorem: unweighted spectral algorithm}
	Let the unweighted spectral algorithm be defined by $f_{\bz,\lambda}=g_\lambda(S_{X}^\top S_X)S_X^\top \bar{y}$, under Assumption \ref{assumption: regularity condition2} with $1/2\le \tilde{r}\le \nu_g$, and Assumption \ref{assumption: effective dimension2} with $0<\tilde{\beta}\le 1$,
	if we take $\lambda=n^{-\frac{1}{2\tilde{r}+\tilde{\beta}}}$, then for $0<\delta<1$, with confidence at least $1-\delta$, there holds
	\begin{equation*}
		\|f_{\bz,\lambda}-f_\rho\|_{\rho_\X^{te}}\le \tilde{C}_{\tilde{r}} n^{-\frac{\tilde{r}}{2\tilde{r}+\tilde{\beta}}} \left(\log\frac{6}{\delta}\right)^4,
	\end{equation*}
	where the constant $\tilde{C}_{\tilde{r}}$ is independent of the sample size $n$ and will be given in the proof.
\end{theorem}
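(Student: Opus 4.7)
The plan is to reduce the analysis to the classical same-distribution setting with respect to the training measure $\rho_\X^{tr}$ and then transfer the resulting bound to the target $L^2_{\rho_\X^{te}}$ norm via boundedness of the density ratio. Since the theorem is stated in the uniformly bounded weight regime, for every measurable $f$ we have
\begin{equation*}
\|f\|_{\rho_\X^{te}}^2 \;=\; \int_\X f(x)^2\, w(x)\, d\rho_\X^{tr}(x)\;\le\; \|w\|_{\infty,\rho_\X^{tr}}\, \|f\|_{\rho_\X^{tr}}^2,
\end{equation*}
so it suffices to establish the capacity-dependent rate $\|f_{\bz,\lambda} - f_\rho\|_{\rho_\X^{tr}}\lesssim n^{-\tilde r/(2\tilde r + \tilde\beta)}\log^3(6/\delta)$ under Assumptions \ref{assumption: regularity condition2} and \ref{assumption: effective dimension2}, with $\lambda = n^{-1/(2\tilde r + \tilde\beta)}$.

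For the reduced problem I would use the standard operator-theoretic error decomposition anchored on the training-side integral operator $\tilde L_K$ on $L^2_{\rho_\X^{tr}}$ and its empirical surrogate $S_X^\top S_X$ on $\H_K$. Writing
\begin{equation*}
f_{\bz,\lambda} - f_\rho \;=\; g_\lambda(S_X^\top S_X)\bigl[S_X^\top \bar y - S_X^\top S_X f_\rho\bigr] \;+\; \bigl(g_\lambda(S_X^\top S_X)\,S_X^\top S_X - I\bigr) f_\rho,
\end{equation*}
I would estimate both terms after composing with $\tilde L_K^{1/2}$ to pass to the $\|\cdot\|_{\rho_\X^{tr}}$ norm. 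The sample term is handled by interposing factors of $(\lambda I + \tilde L_K)^{\pm 1/2}$ and invoking (i) a Bernstein-type concentration for $(\lambda I + \tilde L_K)^{-1/2}(\tilde L_K - S_X^\top S_X)$ in Hilbert-Schmidt norm, driven by the effective-dimension bound $\tilde{\mathcal N}(\lambda)\le C_0\lambda^{-\tilde\beta}$; and (ii) a Bernstein-type concentration for $(\lambda I + \tilde L_K)^{-1/2}(S_X^\top \bar y - \tilde L_K f_\rho)$, using the uniform bound $\kappa$ on the kernel along with standard noise moment conditions. The approximation term is controlled by combining the qualification property \eqref{condition2} of $g_\lambda$ with the source condition $f_\rho = \tilde L_K^{\tilde r} v_\rho$, yielding $\lesssim \lambda^{\tilde r}\|v_\rho\|_{\rho_\X^{tr}}$. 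Balancing $\sqrt{\tilde{\mathcal N}(\lambda)/n}\asymp \lambda^{\tilde r}$ at the stated choice of $\lambda$ gives the rate.

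The main obstacle is largely bookkeeping rather than conceptual: once the density ratio is bounded, the analysis is essentially the classical spectral-algorithm argument of, e.g., \cite{guolinzhou2017, Bauer07, Caponnetto07}, and covariate shift enters only through the single norm-transfer inequality at the end. The verification that $\nu_g \ge \tilde r$ (built into the hypothesis $1/2 \le \tilde r \le \nu_g$) licenses the approximation estimate is immediate from Definition \ref{definition: filter function}. What the argument cannot deliver is equally informative: without a uniform bound on $w$, the norm-transfer step loses a tail-dependent factor that destroys the optimal exponent unless the algorithm itself is corrected by reweighting, which is precisely the motivation for Theorems \ref{theorem: weighted spectral algorithm}--\ref{theorem: clipped weight spectral algorithm}.
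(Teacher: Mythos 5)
Your proposal is correct and follows essentially the same route as the paper: reduce the covariate-shift analysis to the classical spectral-algorithm bound with respect to the training distribution, then transfer to the $\rho_\X^{te}$ norm using the uniform bound on $w$. The paper implements this norm transfer at the operator level via the factor $\bigl\|L_K^{1/2}(\lambda I+\tilde L_K)^{-1/2}\bigr\|_{op}\le\sqrt{U}$ (Lemma \ref{lemma: operator3}) inside the error decomposition and then cites Theorem 2 of \cite{guolinzhou2017} for the remaining classical quantity, whereas you apply the equivalent inequality $\|f\|_{\rho_\X^{te}}^2\le \|w\|_\infty\|f\|_{\rho_\X^{tr}}^2$ directly at the outset; these are the same idea up to bookkeeping.
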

The remainder of this paper is organized as follows. We discuss related work in Section \ref{section: related work} and conduct error decomposition in Section \ref{section: error decomposition}. The proofs of our main results are provided in Section \ref{section: proof of main results}.
\section{Related Work and Discussion}\label{section: related work}

Although the literature on covariate shift is extensive, we focus specifically on the algorithms most relevant to our work. Recently, the work presented in \citet{Ma21} explored the implications of kernel ridge regression (KRR) in the context of covariate shift. The authors demonstrated that when the density ratios are uniformly bounded, the KRR estimator can achieve a minimax optimal convergence rate, even without complete knowledge about the density ratio, only an upper bound is necessary for practical application. Furthermore, they investigated a wider array of covariate shift issues where the density ratio may not be bounded but possesses a finite second moment for the training distribution. In such scenarios, they studied an importance-weighted KRR estimator that modifies sample weights by carefully truncating the density ratios, which maintains minimax optimality. Their findings also emphasized that in situations characterized by model misspecification, employing importance weighting can lead to a more accurate approximation of the regression function for the distribution of the test inputs. Recently, \citet{della2025computational} explored the computational efficiency of KRR in the context of covariate shift through the use of random projections. \citet{Feng23} extends the results of \citet{Ma21} to a broader class of learning algorithms with general convex loss functions, establishing sharp convergence rates under the same covariate shift assumptions as in \citet{Ma21}.

However, the assumptions made in \citet{Ma21}, particularly regarding the uniform boundedness of the eigenfunctions of the kernel integral operator, can be challenging to verify in practical applications. In response, the authors of \citet{Gogolashvili2023} relaxed these assumptions regarding density ratios and eigenfunctions discussed in \citet{Ma21}. Their analysis of the importance-weighted KRR (\ref{algorithm: weighted KRR}) encompassed a broader range of contexts, including both parametric and nonparametric models, as well as cases of model specification and misspecification, thereby allowing for arbitrary weighting functions. These comprehensive studies significantly enhanced the understanding of how to effectively implement importance weighting across various modeling scenarios.

Despite these advancements, the algorithms proposed in \citet{Ma21} and \citet{Gogolashvili2023} confront the saturation effect, where improvements in the learning rate stabilize and cease to increase effectively once the regression function attains a certain level of regularity. To address this issue,  \citet{Gizewski22} introduced a generalized regularization framework for covariate shifts via weighted spectral algorithms. Their analysis establishes capacity-independent learning rates for the resulting regularized estimators, extending the guarantees of importance-weighted KRR while requiring the bounded density ratio assumption.

This work presents a solid theoretical analysis of spectral algorithms under covariate shifts, with kernel ridge regression (KRR) as a canonical special case. Our analysis removes the restrictive bounded eigenfunction condition required in \citet{Ma21}, thereby significantly expanding the theoretical applicability of spectral methods. Our main contributions are threefold: First, Theorem \ref{theorem: unweighted spectral algorithm}  demonstrates that a uniform bound on the density ratio suffices to attain minimax optimal convergence rates for the unweighted spectral algorithms. Second, for the more challenging case of potentially unbounded density ratios $w(\cdot)$, we develop a novel weighted spectral algorithm with normalized weights. Corollary \ref{corollary: weightd spectral algorithm with normalized weight} proves that this framework achieves capacity-independent optimal rates, even when $w(\cdot)$ is unbounded. However, these rates exhibit saturation effects when the regularity parameter $r > 3/2$, due to the $\mathcal{O}(n)$ uniform bound on normalized weights $\bar{w}(\cdot)$. To overcome saturation limitations, furthermore, we propose a truncated weight method. Theorem \ref{theorem: clipped weight spectral algorithm} demonstrates that spectral algorithms with clipped weights can approximate capacity-dependent optimal rates arbitrarily closely. These theoretical advances significantly expand the scope of existing approaches and provide fundamental insights for optimizing learning under covariate shifts.

\section{Error Decomposition}\label{section: error decomposition}
In this section, we consider the error decomposition when the regression function $f_\rho$ satisfies condition (\ref{regularitycondition}) with $r\ge \frac12,$ which implies $f_\rho\in \mathcal{H}_K$. We begin with some useful lemmas.

First, we  establish the following lemma, which corresponds to Lemma 5 in \citet{guolinzhou2017}, based on the properties of the filter function. The result also holds if $\overline{W}$ is replaced by $\hat{W}.$
\begin{lemma}\label{lemma: property of filter function}
	Let $g_\lambda$ be defined by (\ref{definition: filter function}), for $0<t\le \nu_g,$ we have \begin{equation*}
		\left\|\left(g_\lambda\left(S_{X}^\top \overline{W} S_X\right)S_{X}^\top \overline{W} S_X-I\right)\left(\lambda I+S_{X}^\top \overline{W} S_X\right)^t\right\|_{op}\le
		2^t(b+1+\gamma_t)\lambda^t.
	\end{equation*}
\end{lemma}

The following Cordes inequality was proved in \citet{Bathia1997} for positive definite matrices and later presented  in \citet{Blanchard2010} for positive operators in Hilbert spaces.
\begin{lemma}\label{lemma: Cordes inequality}
	Let $s\in[0,1].$ For positive operators $A$ and $B$
	on a Hilbert space we have
	\begin{equation}\label{multiply}
		\|A^sB^s\|_{op}\le\|AB\|_{op}^s.
	\end{equation}
\end{lemma}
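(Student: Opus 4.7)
My plan is to establish the Cordes inequality via complex interpolation, applying Hadamard's three-lines theorem to an operator-valued analytic function built from the functional calculus of $A$ and $B$. First, I would reduce to the case where both $A$ and $B$ are positive and invertible by replacing them with $A_\varepsilon = A + \varepsilon I$ and $B_\varepsilon = B + \varepsilon I$ for $\varepsilon > 0$. Since $t \mapsto t^s$ is continuous on any bounded interval $[0, M]$ for fixed $s \in [0,1]$, the continuous functional calculus gives $A_\varepsilon^s \to A^s$ and $B_\varepsilon^s \to B^s$ in operator norm as $\varepsilon \to 0^+$, together with $\|A_\varepsilon B_\varepsilon\| \to \|AB\|$, so the general case follows from the invertible case by continuity.

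Assuming $A$ and $B$ are positive and invertible, I would define $F(z) = A^z B^z$ on the closed strip $S = \{z \in \mathbb{C} : 0 \le \mathrm{Re}\,z \le 1\}$ using the Borel functional calculus $A^z = \exp(z \log A)$. Standard arguments show $F$ is holomorphic on the interior of $S$, continuous on $S$, and uniformly bounded, since $\|A^z\| = \|A\|^{\mathrm{Re}\,z}$ for $\mathrm{Re}\,z \ge 0$ whenever $A$ is positive and invertible. On the left boundary $\mathrm{Re}\,z = 0$, the operators $A^{iy}$ and $B^{iy}$ are unitary (as $\log A$ and $\log B$ are self-adjoint), so $\|F(iy)\| \le \|A^{iy}\| \cdot \|B^{iy}\| = 1$. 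On the right boundary $\mathrm{Re}\,z = 1$, using $A \cdot A^{iy} = A^{iy} \cdot A$ I would factor $F(1+iy) = A^{iy}(AB)B^{iy}$, yielding $\|F(1+iy)\| \le \|AB\|$.

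Finally, for each pair of unit vectors $x, y$ in the Hilbert space, the scalar function $f_{x,y}(z) = \langle F(z) x, y \rangle$ is bounded and holomorphic on $S$, so Hadamard's three-lines theorem gives $|f_{x,y}(s)| \le 1^{1-s} \|AB\|^s = \|AB\|^s$ for every $s \in [0,1]$. Taking the supremum over unit $x$ and $y$ yields $\|F(s)\| = \|A^s B^s\| \le \|AB\|^s$, which is the desired inequality. The main technical point is to verify the hypotheses of the three-lines theorem in the operator-valued setting; pairing with unit vectors reduces its application to the classical scalar form, while the holomorphy and uniform boundedness of $F$ on the strip follow from the fact that $z \mapsto \exp(z \log A)$ is an entire operator-valued function for positive invertible $A$, combined with the submultiplicative estimate $\|F(z)\| \le \max(1,\|A\|)\cdot \max(1,\|B\|)$ valid on $S$.
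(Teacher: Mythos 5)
Your interpolation argument is correct. The paper itself offers no proof of this lemma; it simply cites Bhatia's \emph{Matrix Analysis} (for the finite-dimensional case) and Blanchard--Kr\"amer (for the Hilbert-space operator version), so there is no in-paper argument to compare against. Your proof is one of the two standard routes. The reduction to positive invertible $A,B$ by $\varepsilon$-regularization is sound (norm-continuity of the continuous functional calculus on a common spectral compact set handles the limit), the identification $\|A^z\| = \|A\|^{\mathrm{Re}\,z}$ for $\mathrm{Re}\,z\ge 0$ and positive invertible $A$ is correct, the boundary estimates $\|F(iy)\|\le 1$ and $\|F(1+iy)\| = \|A^{iy}(AB)B^{iy}\| \le \|AB\|$ are right, and pairing with unit vectors correctly reduces the operator-valued three-lines argument to the scalar Hadamard theorem. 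For reference, the other standard route is via the L\"owner--Heinz inequality: from $AB^2A \preceq \|AB\|^2 I$ one gets $B^2 \preceq \|AB\|^2 A^{-2}$ (for invertible $A$), operator monotonicity of $t\mapsto t^s$ on $[0,1]$ gives $B^{2s}\preceq \|AB\|^{2s}A^{-2s}$, and conjugating back by $A^s$ yields $\|A^sB^s\|^2 = \|A^sB^{2s}A^s\|\le \|AB\|^{2s}$. That route trades the complex-analytic machinery for a single operator-monotonicity input; both are about the same length once one accepts their respective black boxes.
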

We also need the following lemma in our error decomposition, which can be found in \citet{Blanchard2010}.
\begin{lemma}\label{lemma: operator difference}
	For positive   operators $A$ and $B$ on a Hilbert
	space with $\|A\|_{op},\|B\|_{op}\le \mathcal C$ for some constant $\mathcal
	C>0,$ we have for $t\ge 1,$
	\begin{equation*}
		\|A^t-B^t\|_{op}\le t\mathcal C^{t-1}\|A-B\|_{op}.
	\end{equation*}
\end{lemma}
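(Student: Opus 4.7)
The plan is to split the argument by the nature of the exponent and then handle the sharpness of the constant separately. For integer $t = n \in \mathbb{N}$, the starting point is the telescoping identity
\begin{equation*}
A^{n} - B^{n} \;=\; \sum_{k=0}^{n-1} A^{n-1-k}\,(A - B)\, B^{k},
\end{equation*}
which one verifies by expanding the collapsing sum $\sum_{k}[A^{n-k}B^{k} - A^{n-1-k}B^{k+1}]$. Taking operator norms, applying submultiplicativity, and invoking $\|A\|,\|B\|\le\mathcal{C}$ bounds each of the $n$ summands by $\mathcal{C}^{n-1}\|A-B\|$, giving the claim in this case.

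For general $t\ge 1$, I would decompose $t = n + s$ with $n = \lfloor t\rfloor \ge 1$ and $s\in[0,1)$, so that $A^{t} = A^{n}A^{s}$, and use the spectral-theorem integral representation of the fractional power,
\begin{equation*}
A^{s} \;=\; \frac{\sin\pi s}{\pi}\int_{0}^{\infty}\lambda^{s-1}\,A(\lambda I + A)^{-1}\,d\lambda.
\end{equation*}
Factorising $A^{t} - B^{t} = (A^{n}-B^{n})A^{s} + B^{n}(A^{s} - B^{s})$ reduces the problem to bounding $\|A^{s} - B^{s}\|$, which I would handle by subtracting the above formula from its $B$-analogue, inserting the resolvent identity $A(\lambda I + A)^{-1} - B(\lambda I + B)^{-1} = \lambda(\lambda I + B)^{-1}(A - B)(\lambda I + A)^{-1}$, and then splitting the $\lambda$-integral at $\lambda\asymp\|A-B\|$ using the two complementary estimates $\|(\lambda I + A)^{-1}\|\le\lambda^{-1}$ and $\|A(\lambda I + A)^{-1}\|\le\mathcal{C}/(\lambda+\mathcal{C})$.

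The main technical obstacle is extracting the sharp constant $t\,\mathcal{C}^{t-1}$, rather than some larger $t$-dependent constant, since the naive integral splitting above yields a Beta-type integral whose value does not transparently simplify to $t$. A cleaner route that bypasses this bookkeeping is to regard $C(u) = (1-u)B + uA$ as a smooth path in the cone of positive operators with $\|C(u)\|\le\mathcal{C}$, write
\begin{equation*}
A^{t} - B^{t} \;=\; \int_{0}^{1}\frac{d}{du}\,C(u)^{t}\,du,
\end{equation*}
and appeal to the Daletskii--Krein differentiation formula to bound the integrand pointwise by $t\,\|C(u)\|^{t-1}\|A-B\|\le t\,\mathcal{C}^{t-1}\|A-B\|$. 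Justifying this derivative formula for non-integer operator powers via the double-operator-integral framework is where the genuine technical core lies; once established, the sharp constant $t\,\mathcal{C}^{t-1}$ falls out immediately upon integration in $u$.
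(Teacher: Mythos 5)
The paper does not prove this lemma; it is cited from Blanchard--Kr\"amer (2010). Your integer case via the telescoping sum is fine, but both of your non-integer routes have a genuine gap, and in the same place.

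The first route, writing $t=n+s$ with $s\in(0,1)$ and factorising
$A^t-B^t=(A^n-B^n)A^s+B^n(A^s-B^s)$, requires a bound of the form
$\|A^s-B^s\|\le s\,\mathcal C^{s-1}\|A-B\|$, and no such Lipschitz bound exists: already for scalars in $[0,\mathcal C]$ the derivative $s u^{s-1}$ blows up as $u\to 0$, so $u\mapsto u^s$ is not Lipschitz near the origin. Your proposed resolvent computation confirms this --- after inserting
$A(\lambda+A)^{-1}-B(\lambda+B)^{-1}=\lambda(\lambda+B)^{-1}(A-B)(\lambda+A)^{-1}$
the integrand near $\lambda=0$ is controlled only by $\lambda^{s-2}\|A-B\|$, which is not integrable, and the alternative crude bound $\|A(\lambda+A)^{-1}\|\le 1$ gives a finite but $\|A-B\|$-independent contribution. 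Optimising the crossover $\Lambda$ then yields the well-known H\"older estimate $\|A^s-B^s\|\lesssim\|A-B\|^s$, not a Lipschitz one. So the obstacle is not ``bookkeeping a Beta integral''; the decomposition itself cannot give a bound linear in $\|A-B\|$. A decomposition that does work is
$A^t-B^t=A^{t-1}(A-B)+(A^{t-1}-B^{t-1})B$, which by induction reduces the claim to $1\le t<2$, the regime where $f'(u)=t u^{t-1}$ is operator monotone.

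Your Daletskii--Krein route is the right idea in spirit but mislocates the technical core. Once the derivative formula
$\frac{d}{du}C(u)^t=\sum_{i,j}\frac{\lambda_i^t-\lambda_j^t}{\lambda_i-\lambda_j}\,P_i(A-B)P_j$
is granted, bounding its norm by $t\,\mathcal C^{t-1}\|A-B\|$ is precisely the nontrivial step, because the norm of a Schur multiplier is not controlled by the sup of its entries. The divided-difference matrix $[\,(\lambda_i^t-\lambda_j^t)/(\lambda_i-\lambda_j)\,]$ is \emph{not} positive semidefinite for $t>1$ (already for $t=2$ it is $[\lambda_i+\lambda_j]$, which is indefinite), so the ``PSD Schur multiplier norm $=$ max diagonal'' trick does not apply directly. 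What saves the argument in the base case $1\le t\le2$ is that $f'$ is operator monotone, from which one can either decompose the divided-difference kernel into rank-one pieces $g(\lambda_i)h(\lambda_j)$ with $\|g\|_\infty,\|h\|_\infty$ controlled, or invoke the known perturbation bound for functions with operator monotone derivative; for $t>2$ one must then induct. As written, the proposal asserts the key norm bound rather than proving it, so the gap remains.
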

To analyze the error of weighted spectral algorithm with normalized weight (\ref{algorithm: spectral algorithm with normalized weight}), we need the following error decomposition.
\begin{proposition}\label{proposition: error decomposition for normalization}
	Let $f_{\bz,\lambda}^{\overline{\mathbf{w}}}$ be defined by \eqref{algorithm: spectral algorithm with normalized weight}. Suppose Assumption \ref{assumption: effective dimension}  holds with
	$1/2\le r\le \nu_g$.
	When $1/2\le r\le 3/2$, we have
	\begin{align*}
		\|f_{\bz,\lambda}^{\overline{\mathbf{w}}}-f_\rho\|_{\rho_\X^{te}}&\le
		2b \left\|\left(\lambda I+ S_{X}^\top \overline{W} S_X\right)^{-1} \left(\lambda I+ L_K\right)\right\|_{op}\cdot \left\| \left(\lambda I+ L_K\right)^{-1/2} \left(S_X^\top \overline{W}\bar{y}-S_X^\top \overline{W} S_X f_\rho\right)\right\|_K\\
		&+ C_r\lambda^r \left\|\left(\lambda I+ S_{X}^\top \overline{W} S_X\right)^{-1}(\lambda I+L_K)\right\|_{op}^{r}.
	\end{align*}
	When $r>3/2,$ we have
	\begin{align*}
		\|f_{\bz,\lambda}^{\overline{\mathbf{w}}}-f_\rho\|_{\rho_\X^{te}}&\le
		2b \left\|\left(\lambda I+ S_{X}^\top \overline{W} S_X\right)^{-1} \left(\lambda I+ L_K\right)\right\|_{op}\cdot \left\| (\lambda I+ L_K)^{-1/2} \left(S_X^\top \overline{W}\bar{y}-S_X^\top \overline{W} S_X f_\rho\right)\right\|_K\\
		&+ C_r \left\|\left(\lambda I+ S_{X}^\top \overline{W} S_X\right)^{-1} (\lambda I+ L_K)\right\|_{op}^{1/2}\left(\lambda^{1/2} \left\|L_K-S_{X}^\top \overline{W} S_X\right\|_{op}+\lambda^{3/2}\right),
	\end{align*}
	where $C_r=\max\left\{2^r(b+1+\gamma_r),\left(\sqrt{2}(b+1+\gamma_{1/2})
	+2^{3/2}(b+1+\gamma_{3/2})\right)\kappa^{2r-3} \right\}\left\| u_\rho\right\|_{\rho_\X^{te}}.$
\end{proposition}
\begin{proof}
	First, by the definition (\ref{algorithm: spectral algorithm with normalized weight}) of $f_{\bz,\lambda}^{\overline{\mathbf{w}}},$ we obtain
	\begin{align*}
		f_{\bz,\lambda}^{\overline{\mathbf{w}}}-f_\rho&=g_\lambda\left(S_{X}^\top \overline{W} S_X\right)S_X^\top \overline{W}\bar{y}-f_\rho\\&=g_\lambda\left(S_{X}^\top \overline{W} S_X\right)S_X^\top \overline{W}\bar{y}-g_\lambda\left(S_{X}^\top \overline{W} S_X\right)S_X^\top \overline{W} S_X f_\rho +g_\lambda\left(S_{X}^\top \overline{W} S_X\right)S_X^\top \overline{W} S_X f_\rho-f_\rho\\
		&=g_\lambda\left(S_{X}^\top \overline{W} S_X\right)\left(S_X^\top \overline{W}\bar{y}-S_X^\top \overline{W} S_X f_\rho\right) +\left(g_\lambda\left(S_{X}^\top \overline{W} S_X\right)S_X^\top \overline{W} S_X -I\right)f_\rho.
	\end{align*}
	Then we see from the
	identity $\|f\|_{\rho_\X^{te}} = \left\|L_K^{1/2} f\right\|_K$ for $f\in {\mathcal H}_K$ and $\left\|L_K^{1/2}(\lambda I+L_K)^{-1/2}\right\|_{op}\leq
	1$ that
	\begin{equation}\label{equation?error decomposition intermediate term}
		\begin{aligned}
			\left\|f_{\bz,\lambda}^{\overline{\mathbf{w}}}-f_\rho\right\|_{\rho_\X^{te}}&=\left\|L_K^{1/2}(f_{\bz,\lambda}^{\overline{\mathbf{w}}}-f_\rho)\right\|_K\le \left\|(\lambda I+ L_K)^{1/2}(f_{\bz,\lambda}^{\overline{\mathbf{w}}}-f_\rho)\right\|_K\\
			&=\left\|(\lambda I+ L_K)^{1/2} \left(\lambda I+ S_{X}^\top \overline{W} S_X\right)^{-1/2} \left(\lambda I+ S_{X}^\top \overline{W} S_X\right)^{1/2} (f_{\bz,\lambda}^{\mathbf{w}}-f_\rho)\right\|_K\\
			&\le \left\|(\lambda I+ L_K)^{1/2}  \left(\lambda I+ S_{X}^\top \overline{W} S_X\right)^{-1/2}\right\|_{op} \cdot\\
			&~~~~\Big[\left\|\left(\lambda I+ S_{X}^\top \overline{W} S_X\right)^{1/2} g_\lambda\left(S_{X}^\top \overline{W} S_X\right)\left(S_X^\top W\bar{y}-S_X^\top \overline{W} S_X f_\rho\right)\right\|_K \\
			&~~~~+ \left\|\left(\lambda I+ S_{X}^\top \overline{W} S_X\right)^{1/2} \left(g_\lambda\left(S_{X}^\top \overline{W} S_X\right)S_X^\top \overline{W} S_X -I\right)f_\rho\right\|_K\Big]\\
			&=:I_1(I_2+I_3),
		\end{aligned}
	\end{equation}
	where
	\begin{align*}
		I_1&=\left\|(\lambda I+ L_K)^{1/2}  \left(\lambda I+ S_{X}^\top \overline{W} S_X\right)^{-1/2}\right\|_{op},\\
		I_2&=\left\|\left(\lambda I+ S_{X}^\top \overline{W} S_X\right)^{1/2} g_\lambda\left(S_{X}^\top \overline{W} S_X\right)(S_X^\top \overline{W} \bar{y}-S_X^\top \overline{W} S_X f_\rho)\right\|_K,\\
		I_3&=\left\|\left(\lambda I+ S_{X}^\top \overline{W} S_X\right)^{1/2} \left(g_\lambda\left(S_{X}^\top \overline{W} S_X\right)S_X^\top \overline{W} S_X -I\right)f_\rho\right\|_K.
	\end{align*}
	In the following, we will estimate the above three terms.
	
	For the first term $I_1,$ by Lemma \ref{lemma: Cordes inequality} with $s=1/2,$ it follows that
	\begin{align*}
		I_1=\left\|(\lambda I+ L_K)^{1/2}  \left(\lambda I+ S_{X}^\top \overline{W} S_X\right)^{-1/2}\right\|_{op}\le \left\|(\lambda I+ L_K) \left(\lambda I+ S_{X}^\top \overline{W} S_X\right)^{-1}\right\|_{op}^{1/2}.
	\end{align*}
	For the second term $I_2$, we have
	\begin{align}
		I_2&=\left\|\left(\lambda I+ S_{X}^\top \overline{W} S_X\right)^{1/2}g_\lambda\left(S_{X}^\top \overline{W} S_X\right)\left(S_X^\top \overline{W}\bar{y}-S_X^\top \overline{W} S_X f_\rho\right)\right\|_K \notag\\
		&=\Big\|\left(\lambda I+ S_{X}^\top \overline{W} S_X\right)^{1/2} g_\lambda\left(S_{X}^\top \overline{W} S_X\right) \left(\lambda I+ S_{X}^\top \overline{W} S_X\right)^{1/2} \left(\lambda I+ S_{X}^\top \overline{W} S_X\right)^{-1/2} (\lambda I+ L_K)^{1/2} \notag\\
		&~~~\cdot (\lambda I+ L_K)^{-1/2} \left(S_X^\top \overline{W}\bar{y}-S_X^\top \overline{W} S_X f_\rho\right)\Big\|_K \notag\\
		&\le \left\|\left(\lambda I+ S_{X}^\top \overline{W} S_X\right) g_\lambda\left(S_{X}^\top \overline{W} S_X\right) \right\|_{op} \cdot \left\| \left(\lambda I+ S_{X}^\top \overline{W} S_X\right)^{-1/2} (\lambda I+ L_K)^{1/2}\right\|_{op} \notag\\
		&~~~\cdot \left\|(\lambda I+ L_K)^{-1/2} \left(S_X^\top \overline{W}\bar{y}-S_X^\top \overline{W} S_X f_\rho\right)\right\|_K\notag\\
		&\le 2b \left\|\left(\lambda I+ S_{X}^\top \overline{W} S_X\right)^{-1} (\lambda I+ L_K)\right\|_{op}^{1/2}\cdot \left\| (\lambda I+ L_K)^{-1/2} \left(S_X^\top \overline{W}\bar{y}-S_X^\top \overline{W} S_X f_\rho\right)\right\|_K. \notag
	\end{align}
	where the last inequality holds due to the property
	(\ref{condition1}) of the filter function $g_\lambda$ and the Cordes inequality with $s=1/2$ in Lemma \ref{lemma: Cordes inequality}.
	
	For the third term $I_3$, since $f_\rho=L_K^r u_\rho $ with $u_\rho\in L_{\rho_\X^{te}}^2$ and $r\ge \frac12$, there holds
	\begin{align}
		I_3&=\left\|\left(\lambda I+ S_{X}^\top \overline{W} S_X\right)^{1/2} \left(g_\lambda\left(S_{X}^\top \overline{W} S_X\right)S_X^\top \overline{W} S_X -I\right)f_\rho\right\|_K \notag\\
		&=\left\|\left(\lambda I+ S_{X}^\top \overline{W} S_X\right)^{1/2}\left(g_\lambda\left(S_{X}^\top \overline{W} S_X\right)S_X^\top \overline{W} S_X -I\right)L_K^r u_\rho\right\|_K \notag\\
		&\le \left\|\left(\lambda I+ S_{X}^\top \overline{W} S_X\right)^{1/2}\left(g_\lambda\left(S_{X}^\top \overline{W} S_X\right)S_X^\top \overline{W} S_X -I\right)L_K^{r-1/2}\right\|_{op} \left\| u_\rho\right\|_{\rho_\X^{te}}. \notag
	\end{align}
	To estimate the term $\left\|\left(\lambda I+ S_{X}^\top \overline{W} S_X\right)^{1/2}\left(g_\lambda\left(S_{X}^\top \overline{W} S_X\right)S_X^\top \overline{W} S_X -I\right)L_K^{r-1/2}\right\|_{op}$, we will consider two cases based on the regularity of $f_\rho$. \\
	Case 1: $1/2 \le r\le 3/2$.\\
	In this case, $0\le r-1/2\le 1,$ we rewrite $L_K^{r-1/2}$ as
	\begin{align*}
		L_K^{r-1/2}=\left(\lambda I+ S_{X}^\top \overline{W} S_X\right)^{r-1/2}\left(\lambda I+ S_{X}^\top \overline{W} S_X\right)^{-(r-1/2)}(\lambda I+L_K)^{r-1/2}(\lambda I+L_K)^{-(r-1/2)}L_K^{r-1/2}.
	\end{align*}
	Then we have
	\begin{align*}
		I_3&\le \left\|\left(\lambda I+ S_{X}^\top \overline{W} S_X\right)^{1/2} \left(g_\lambda\left(S_{X}^\top \overline{W} S_X\right)S_X^\top \overline{W} S_X -I\right)\left(\lambda I+ S_{X}^\top \overline{W} S_X\right)^{r-1/2}\right\|_K \\
		&~ \cdot\left\|\left(\lambda I+ S_{X}^\top \overline{W} S_X\right)^{-(r-1/2)}(\lambda I+L_K)^{r-1/2}\right\|_{op}\cdot\left\|(\lambda I+L_K)^{-(r-1/2)}L_K^{r-1/2}\right\|_{op}\cdot \left\|L_K^{1/2} u_\rho\right\|_K\\
		&=\left\|\left(\lambda I+ S_{X}^\top \overline{W} S_X\right)^{r}\left(g_\lambda\left(S_{X}^\top \overline{W} S_X\right)S_X^\top \overline{W} S_X -I\right)\right\|_{op} \cdot\left\|\left(\lambda I+ S_{X}^\top \overline{W} S_X\right)^{-(r-1/2)}(\lambda I+L_K)^{r-1/2}\right\|_{op}\\
		&~ \cdot\left\|(\lambda I+L_K)^{-(r-1/2)}L_K^{r-1/2}\right\|_{op}\cdot \left\|L_K^{1/2} u_\rho\right\|_K\\
		&\le \left\|\left(\lambda I+ S_{X}^\top \overline{W} S_X\right)^{r}\left(g_\lambda\left(S_{X}^\top \overline{W} S_X\right)S_X^\top \overline{W} S_X -I\right)\right\|_{op} \cdot\left\|\left(\lambda I+ S_{X}^\top \overline{W} S_X\right)^{-1}(\lambda I+L_K)\right\|_{op}^{r-1/2}\cdot \left\| u_\rho\right\|_{\rho_\X^{te}}\\
		&\le 2^r(b+1+\gamma_r)\lambda^r \left\|\left(\lambda I+ S_{X}^\top \overline{W} S_X\right)^{-1}(\lambda I+L_K)\right\|_{op}^{r-1/2}\cdot \left\| u_\rho\right\|_{\rho_\X^{te}},
	\end{align*}
	where the last inequality holds due to Lemma \ref{lemma: property of filter function} with $t=r$, and Lemma \ref{lemma: Cordes inequality} with $s=r-1/2$ and $A=\left(\lambda I+ S_{X}^\top \overline{W} S_X\right)^{-1}$ and $B=\lambda I+L_K.$\\
	Case 2: $r>3/2.$ \\
	In this case $r-1/2>1,$ we can rewrite the term $L_K^{r-1/2}$ as
	\begin{align*}
		L_K^{r-1/2}=L_K\cdot L_K^{r-3/2}=\left(\left(L_K- S_{X}^\top \overline{W} S_X\right)+ S_{X}^\top \overline{W} S_X\right)L_K^{r-3/2}.
	\end{align*}
	Then by $\left\|L_K^{r-3/2}\right\|_{op}\le \left\|L_K\right\|_{op}^{r-3/2}\le \kappa^{2r-3}$, we have
	\begin{align*}
		I_3&\le\Big\|\left(\lambda I+ S_{X}^\top \overline{W} S_X\right)^{1/2}\left(g_\lambda\left(S_{X}^\top \overline{W} S_X\right)S_X^\top \overline{W} S_X -I\right)\left(\left(L_K- S_{X}^\top \overline{W} S_X\right)+S_{X}^\top \overline{W} S_X\right)\Big\|_{op} \\
		&~~~~ \cdot \left\|L_K^{r-3/2} \right\|_{op}\cdot \left\|L_K^{1/2} u_\rho\right\|_K\\
		&\le \left\|\left(\lambda I+ S_{X}^\top \overline{W} S_X\right)^{1/2}\left(g_\lambda\left(S_{X}^\top \overline{W} S_X\right)S_X^\top \overline{W} S_X -I\right) \left(L_K- S_{X}^\top \overline{W} S_X\right) \right\|_{op} \kappa^{2r-3}\left\| u_\rho\right\|_{\rho_\X^{te}}\\
		&+\left\|\left(\lambda I+ S_{X}^\top \overline{W} S_X\right)^{1/2}\left(g_\lambda\left(S_{X}^\top \overline{W} S_X\right)S_X^\top \overline{W} S_X -I\right) \left(\lambda I+ S_{X}^\top \overline{W} S_X\right)\right\|_{op}\kappa^{2r-3} \left\| u_\rho\right\|_{\rho_\X^{te}}\\
		&\le\left( \sqrt{2}(b+1+\gamma_{1/2})\lambda^{1/2} \left\|L_K-S_{X}^\top \overline{W} S_X\right\|_{op}
		+2^{3/2}(b+1+\gamma_{3/2})\lambda^{3/2}\right) \kappa^{2r-3}\left\| u_\rho\right\|_{\rho_\X^{te}},
	\end{align*}
	where the last inequality holds due to  Lemma \ref{lemma: property of filter function} with $t=1/2$ and $t={3/2}$.
	Then putting the estimates back into  \eqref{equation?error decomposition intermediate term} yields the desired result. We then complete the proof.
\end{proof}

Similary, to analyze the error of weighted spectral algorithm with clipped weight (\ref{algorithm: spectral algorithm with clipped weight}), we need the following error decomposition.
\begin{proposition}\label{proposition: error decompsotion for clipped weight}
	Let the spectral algorithm with clipped weight be defined by (\ref{algorithm: spectral algorithm with clipped weight}). Suppose Assumption \ref{assumption: regularity condition} holds with $1/2 \le r\le \nu_g.$ The following estimates hold:\\
	When $1/2\le r\le 3/2,$
	\begin{equation}\label{eq: error decompsotion for clipped weight r less than 3/2}
		\begin{aligned}
			\|f_{\bz,\lambda}^{\hat{\mathbf{w}}}-f_\rho\|_{\rho_\X^{te}}\le   2bJ_1^{1/2} J_2 J_3+2^r(b+1+\gamma_r)\left\| u_\rho\right\|_{\rho_\X^{te}}\lambda^r J_1^{r} J_2^{r}.
		\end{aligned}
	\end{equation}
	When $r> 3/2,$
	\begin{equation}\label{eq: error decompsotion for clipped weight  r large than 3/2}
		\begin{aligned}
			\|f_{\bz,\lambda}^{\hat{\mathbf{w}}}-f_\rho\|_{\rho_\X^{te}}&\le J_1^{1/2} J_2^{1/2}\Big(2b J_2^{1/2} J_3+ \sqrt{2}(b+1+\gamma_{1/2}) (r-1/2)\kappa^{2r-3}\\
			&\quad \cdot \left\| u_\rho\right\|_{\rho_\X^{te}}\lambda^{1/2}D_n^{r-3/2}(J_4+J_5)
			+2^r(b+1+\gamma_r)\left\| u_\rho\right\|_{\rho_\X^{te}}\lambda^r\Big),
		\end{aligned}
	\end{equation}
	where \begin{align*}
		J_1&=\left\|L_K (\lambda I+ \hat{L}_K)^{-1}\right\|_{op};\\
		J_2&= \left\|(\lambda I+ \hat{L}_K) (\lambda I+ S_{X}^\top \hat{W} S_X)^{-1}\right\|_{op};\\
		J_3&=\Big\| (\lambda I+ \hat{L}_K)^{-1/2}  (S_X^\top \hat{W}\bar{y}-S_X^\top \hat{W} S_X f_\rho)\Big\|_K;\\
		J_4&=\left\|L_K-\hat{L}_K\right\|_{op};\\
		J_5&=\left\|\hat{L}_K-S_{X}^\top \hat{W} S_X\right\|_{op},
	\end{align*}
	and	the integral operator $\hat{L}_K$ is defined as
	\begin{align*}
		\hat{L}_Kf=\int_X f(x)K(\cdot,x)\hat{w}(x)d\rho_{\X}^{tr}.
	\end{align*}
\end{proposition}	
\begin{proof}
	The proof is similar to that of Proposition \ref{proposition: error decomposition for normalization}.
	
	First, by the definition (\ref{algorithm: spectral algorithm with clipped weight}) of $f_{\bz,\lambda}^{\hat{\mathbf{w}}}$, we obtain
	\begin{align*}
		f_{\bz,\lambda}^{\hat{\mathbf{w}}}-f_\rho&=g_\lambda\left(S_{X}^\top\hat{ W} S_X\right)S_X^\top \hat{ W}\bar{y}-f_\rho\\&=g_\lambda\left(S_{X}^\top\hat{ W} S_X\right)S_X^\top \hat{ W}\bar{y}-g_\lambda\left(S_{X}^\top\hat{ W} S_X\right)S_X^\top \hat{ W} S_X f_\rho +g_\lambda\left(S_{X}^\top\hat{ W} S_X\right)S_X^\top \hat{ W} S_X f_\rho-f_\rho\\
		&=g_\lambda\left(S_{X}^\top\hat{ W} S_X\right)\left(S_X^\top \hat{ W}\bar{y}-S_X^\top \hat{ W} S_X f_\rho\right) +\left(g_\lambda\left(S_{X}^\top\hat{ W} S_X\right)S_X^\top \hat{ W} S_X -I\right)f_\rho.
	\end{align*}
	Then we see from the
	identity $\|f\|_{\rho_\X^{te}} = \left\|L_K^{1/2} f\right\|_K$ for $f\in {\mathcal H}_K$ and $f_\rho=L_K^r u_\rho $ with $u_\rho\in L_{\rho_\X^{te}}^2$ and $r\ge \frac12$, we have
	\begin{align}\label{equation: error decomposition intermediate term1}
		\|f_{\bz,\lambda}^{\hat{\mathbf{w}}}-f_\rho\|_{\rho_\X^{te}}
		&=\left\|L_K^{1/2}\left(f_{\bz,\lambda}^{\hat{\mathbf{w}}}-f_\rho\right)\right\|_K=\left\|L_K^{1/2} \left(\lambda I+ \hat{L}_K\right)^{-1/2} \left(\lambda I+ \hat{L}_K\right)^{1/2}\left(f_{\bz,\lambda}^{\hat{\mathbf{w}}}-f_\rho\right)\right\|_K \notag\\
		& \leq\left\|L_K^{1/2} \left(\lambda I+ \hat{L}_K\right)^{-1/2}\right\|_{op}
		\cdot \left\|\left(\lambda I+ \hat{L}_K\right)^{1/2} \left(\lambda I+ S_{X}^\top \hat{W} S_X\right)^{-1/2}\right\|_{op} \notag\\&~~~~\cdot \left\|\left(\lambda I+ S_{X}^\top\hat{W} S_X\right)^{1/2} \left(f_{\bz,\lambda}^{\hat{\mathbf{w}}}-f_\rho\right)\right\|_K\\
		& \le J_1^{1/2} J_2^{1/2 }\cdot\Bigg[\left\|\left(\lambda I+ S_{X}^\top \hat{W} S_X\right)^{1/2} g_\lambda\left(S_{X}^\top\hat{ W} S_X\right)\left(S_X^\top \hat{W}\bar{y}-S_X^\top \hat{W} S_X f_\rho\right)\right\|_K \notag\\
		&~~~~+ \left\|\left(\lambda I+ S_{X}^\top \hat{W} S_X\right)^{1/2}\left(g_\lambda\left(S_{X}^\top\hat{ W} S_X\right)S_X^\top \hat{W} S_X -I\right)f_\rho\right\|_K\Bigg] \notag\\
		& \le J_1^{1/2} J_2^{1/2 }\cdot\Bigg[\left\|\left(\lambda I+ S_{X}^\top \hat{W} S_X\right)^{1/2} g_\lambda\left(S_{X}^\top\hat{ W} S_X\right)\left(S_X^\top \hat{W}\bar{y}-S_X^\top \hat{W} S_X f_\rho\right)\right\|_K \notag\\
		&~~~~+ \left\|\left(\lambda I+ S_{X}^\top \hat{W} S_X\right)^{1/2}\left(g_\lambda\left(S_{X}^\top\hat{ W} S_X\right)S_X^\top \hat{W} S_X -I\right)L_K^{r-1/2}\right\|_{op}\cdot \left\| u_\rho\right\|_{\rho_\X^{te}}\Bigg], \notag
	\end{align}
	where we use Lemma \ref{lemma: Cordes inequality} with $s=1/2$ to bound $$\Big\|L_K^{1/2} \Big(\lambda I+ \hat{L}_K\Big)^{-1/2}\Big\|_{op}\le \Big\|L_K\Big(\lambda I+ \hat{L}_K\Big)^{-1}\Big\|_{op}^{1/2}=J_1^{1/2}$$ and
	$$\Big\|\Big(\lambda I+ \hat{L}_K\Big)^{1/2} \Big(\lambda I+ S_{X}^\top \hat{W} S_X\Big)^{-1/2}\Big\|_{op}\le \Big\|\Big(\lambda I+ \hat{L}_K\Big) \Big(\lambda I+ S_{X}^\top \hat{W} S_X\Big)^{-1}\Big\|_{op}^{1/2}=J_2^{1/2}.$$
	
	In the following, we further divide the term $\Bigg\|\left(\lambda I+ S_{X}^\top \hat{W} S_X\right)^{1/2} g_\lambda\left(S_{X}^\top\hat{ W} S_X\right)\Big(S_X^\top \hat{W}\bar{y}-S_X^\top \hat{W} S_X f_\rho\Big)\Bigg\|_K$ as
	\begin{align*}
		&\left\|\left(\lambda I+ S_{X}^\top \hat{W} S_X\right)^{1/2} g_\lambda\left(S_{X}^\top\hat{ W} S_X\right)\left(S_X^\top \hat{W}\bar{y}-S_X^\top \hat{W} S_X f_\rho\right)\right\|_K\\
		&=\Bigg\|\left(\lambda I+ S_{X}^\top \hat{W} S_X\right)^{1/2}g_\lambda\left(S_{X}^\top\hat{ W} S_X\right)\left(\lambda I+ S_{X}^\top \hat{W} S_X\right)^{1/2} \\
		&~~~~\left(\lambda I+ S_{X}^\top \hat{W} S_X\right)^{-1/2}\left(\lambda I+ \hat{L}_K\right)^{1/2} \left(\lambda I+ \hat{L}_K\right)^{-1/2}  \left(S_X^\top \hat{W}\bar{y}-S_X^\top \hat{W} S_X f_\rho\right)\Bigg\|_K\\
		&\le \left\|\left(\lambda I+ S_{X}^\top \hat{W} S_X\right)^{1/2} g_\lambda\left(S_{X}^\top\hat{ W} S_X\right) (\lambda I+ S_{X}^\top \hat{W} S_X)^{1/2} \right\|_{op}\\
		&~~~~\cdot\left\|\left(\lambda I+ S_{X}^\top \hat{W} S_X\right)^{-1/2} \left(\lambda I+ \hat{L}_K\right)^{1/2}\right\|_{op} \cdot\left\| \left(\lambda I+ \hat{L}_K\right)^{-1/2}  \left(S_X^\top \hat{W}\bar{y}-S_X^\top \hat{W} S_X f_\rho\right)\right\|_K\\
		&\le 2b \left\|\left(\lambda I+ S_{X}^\top \hat{W} S_X\right)^{-1} \left(\lambda I+ \hat{L}_K\right)\right\|_{op}^{1/2} \cdot\left\| \left(\lambda I+ \hat{L}_K\right)^{-1/2}  \left(S_X^\top \hat{W}\bar{y}-S_X^\top \hat{W} S_X f_\rho\right)\right\|_K\\
		&= 2bJ_2^{1/2} J_3,
	\end{align*}	
	where the last inequality follows from the property
	(\ref{condition1}) of the filter function $g_\lambda$ and the Cordes inequality with $s=1/2$ in Lemma \ref{lemma: Cordes inequality}.
	
	For the  term $	\left\|\left(\lambda I+ S_{X}^\top \hat{W} S_X\right)^{1/2}\left(g_\lambda\left(S_{X}^\top\hat{ W} S_X\right)S_X^\top \hat{W} S_X -I\right)L_K^{r-1/2}\right\|_{op}$, we consider two cases based on the regularity parameter $r$ of $f_{\rho}$. \\
	{\bf Case 1}: $1/2 \le r\le 3/2$.\\
	In this case, $0\le r-1/2\le 1,$ we have
	\begin{align*}
		&\left\|\left(\lambda I+ S_{X}^\top \hat{W} S_X\right)^{1/2} \left(g_\lambda\left(S_{X}^\top\hat{ W} S_X\right)S_X^\top \hat{W} S_X -I\right)L_K^{r-1/2}\right\|_{op}\\
		&\le \left\|\left(\lambda I+ S_{X}^\top \hat{W} S_X\right)^{1/2} \left(g_\lambda\left(S_{X}^\top\hat{ W} S_X\right)S_X^\top \hat{W} S_X -I\right) \left(\lambda I+ S_{X}^\top \hat{W} S_X\right)^{r-1/2}\right\|_{op} \\
		&~ \cdot\left\|\left(\lambda I+ S_{X}^\top \hat{W} S_X\right)^{-(r-1/2)} \left(\lambda I+\hat{L}_K\right)^{r-1/2}\right\|_{op}\cdot\left\|\left(\lambda I+\hat{L}_K\right)^{-(r-1/2)}L_K^{r-1/2}\right\|_{op}\\
		&=\left\|\left(\lambda I+ S_{X}^\top \hat{W} S_X\right)^{r} \left(g_\lambda\left(S_{X}^\top\hat{ W} S_X\right)S_X^\top \hat{W} S_X -I\right)\right\|_{op}\\
		&~ \cdot\left\|\left(\lambda I+ S_{X}^\top \hat{W} S_X\right)^{-(r-1/2)} \left(\lambda I+\hat{L}_K\right)^{r-1/2}\right\|_{op} \cdot\left\|\left(\lambda I+\hat{L}_K\right)^{-(r-1/2)}L_K^{r-1/2}\right\|_{op}\\
		&\le 2^r(b+1+\gamma_r)\lambda^r \left\|\left(\lambda I+ S_{X}^\top \hat{W} S_X\right)^{-1} \left(\lambda I+\hat{L}_K\right)\right\|_{op}^{r-1/2} \cdot \left\|\left(\lambda I+\hat{L}_K\right)^{-1}L_K\right\|_{op}^{r-1/2}\\
		&=2^r\left(b+1+\gamma_r\right)\lambda^r J_2^{r-1/2} J_1^{r-1/2},
	\end{align*}
	where the last inequality follows from Lemma \ref{lemma: property of filter function} with $t=r$, and Lemma \ref{lemma: Cordes inequality} applied with $s=r-1/2$ to bound the operator norms $\left\|\left(\lambda I+ S_{X}^\top \hat{W} S_X\right)^{-(r-1/2)}\left(\lambda I+\hat{L}_K\right)^{r-1/2}\right\|_{op}$ and
	$\left\|\left(\lambda I+\hat{L}_K\right)^{-(r-1/2)}L_K^{r-1/2}\right\|_{op}$.\\
	{\bf Case 2:} $r>3/2.$ \\
	In this case, we see that $r-1/2>1,$ then $L_K^{r-1/2}$ can be rewritten as
	\begin{align*}
		L_K^{r-1/2}=\left(L_K^{r-1/2}- \hat{L}_K^{r-1/2}+\hat{L}_K^{r-1/2}-( S_{X}^\top \hat{W} S_X)^{r-1/2}\right)+ (S_{X}^\top \hat{W} S_X)^{r-1/2}.
	\end{align*}
	Since $\|L_K\|_{op}\le \kappa^2\le D_n\kappa^2,$ $\|\hat{L}_K\|_{op}\le  D_n\kappa^2,$ and  $\|S_{X}^\top \hat{W} S_X\|_{op}\le  D_n\kappa^2,$ by Lemma \ref{lemma: operator difference} with $t=r-1/2>1,$ we have
	\begin{align*}
		\left\|L_K^{r-1/2}- \hat{L}_K^{r-1/2}\right\|_{op}\le (r-1/2)\kappa^{2r-3} D_n^{r-3/2} \left\|L_K- \hat{L}_K\right\|_{op}= (r-1/2)\kappa^{2r-3} D_n^{r-3/2} J_4,
	\end{align*}
	and
	\begin{align*}
		\left\|\hat{L}_K^{r-1/2}-( S_{X}^\top \hat{W} S_X)^{r-1/2} \right\|_{op}&\le (r-1/2)\kappa^{2r-3} D_n^{r-3/2} \left\|\hat{L}_K-S_{X}^\top \hat{W} S_X \right\|_{op}\\
		&= (r-1/2)\kappa^{2r-3} D_n^{r-3/2} J_5.
	\end{align*}
	Then it follows that
	\begin{align*}
		&\left\|\left(\lambda I+ S_{X}^\top \hat{W} S_X\right)^{1/2} \left(g_\lambda\left(S_{X}^\top\hat{ W} S_X\right)S_X^\top \hat{W} S_X -I\right)L_K^{r-1/2}\right\|_{op}\\
		&\le\Bigg\|\left(\lambda I+ S_{X}^\top \hat{W} S_X\right)^{1/2}\left(g_\lambda\left(S_{X}^\top\hat{ W} S_X\right)S_X^\top \hat{W} S_X -I\right) \Big(L_K^{r-1/2}- \hat{L}_K^{r-1/2}\\
		&~~~~+\hat{L}_K^{r-1/2}-\left( S_{X}^\top \hat{W} S_X\right)^{r-1/2}+\left( S_{X}^\top \hat{W} S_X\right)^{r-1/2}\Big) \Bigg\|_{op}\\
		&\le \left\|\left(\lambda I+ S_{X}^\top \hat{W} S_X\right)^{1/2}\left(g_\lambda\left(S_{X}^\top\hat{ W} S_X\right)S_X^\top \hat{W} S_X -I\right)\left(L_K^{r-1/2}- \hat{L}_K^{r-1/2}+\hat{L}_K^{r-1/2}-\left( S_{X}^\top \hat{W} S_X\right)^{r-1/2}\right) \right\|_{op}\\
		&~~~+\left\|\left(\lambda I+ S_{X}^\top\hat{W} S_X\right)^{1/2} \left(g_\lambda\left(S_{X}^\top\hat{ W} S_X\right)S_X^\top \hat{W} S_X -I\right) \left(\lambda I+ S_{X}^\top \hat{W} S_X\right)^{r-1/2}\right\|_{op} \\
		&\le \left\|\left(\lambda I+ S_{X}^\top \hat{W} S_X\right)^{1/2}\left(g_\lambda\left(S_{X}^\top\hat{ W} S_X\right) S_X^\top \hat{W} S_X -I\right)\right\|_{op}\cdot \Bigg(\left\|L_K^{r-1/2}- \hat{L}_K^{r-1/2}\right\|_{op}\\
		&~~~+ \left\|\hat{L}_K^{r-1/2}-( S_{X}^\top \hat{W} S_X)^{r-1/2} \right\|_{op} \Bigg)+\left\|\left(\lambda I+ S_{X}^\top\hat{W} S_X\right)^{r} \left(g_\lambda\left(S_{X}^\top\hat{ W} S_X\right)S_X^\top \hat{W} S_X -I\right) \right\|_{op} \\
		&\le \sqrt{2}\left(b+1+\gamma_{1/2}\right)\lambda^{1/2} (r-1/2)\kappa^{2r-3} D_n^{r-3/2}\left(\left\|L_K-\hat{L}_K\right\|_{op} +\left\|\hat{L}_K-S_{X}^\top \hat{W} S_X\right\|_{op} \right)\\
		&~~~+2^r\left(b+1+\gamma_r\right)\lambda^r\\
		&=\sqrt{2}\left(b+1+\gamma_{1/2}\right) (r-1/2)\kappa^{2r-3}\lambda^{1/2}D_n^{r-3/2}\left(J_4+J_5\right)
		+2^r(b+1+\gamma_r)\lambda^r,
	\end{align*}
	where the last inequality follows from Lemma \ref{lemma: property of filter function} with $t=1/2$ and $t=r$ respectively.
	Then putting the estimates back into \eqref{equation: error decomposition intermediate term1} yields the desired result.	
\end{proof}

\section{Proofs of Main Results}\label{section: proof of main results}
In this section, we will provide the proofs for the main results.
\subsection{Convergence analysis of weighted spectral algorithm with normalized weights}
This subsection presents the convergence analysis for the weighted spectral algorithm (\ref{algorithm: spectral algorithm with normalized weight}) with normalized weights. We begin by establishing several key lemmas and propositions that form the foundation of our theoretical results.
The following Bernstein's inequality for unbounded real-valued random variables can be found in \citet{Boucheron2013}.
\begin{lemma}\label{lemma: Bernstein inequaltiy for real valued random variables}
	Let $\zeta_1,\cdots,\zeta_n$ be independent identically distributed real-valued random variables, assume that there exists constants $c$, $\nu>0$, such that
	\begin{align*}
		\mathbb{E}\left[\left|\zeta_i-\mathbb{E}[\zeta_i]\right|^p\right] \le \frac12 p!{\nu}^2 c^{p-2}
	\end{align*}
	for all $p\ge2.$  Then for any $0<\delta<1$, the following bounds holds at least $1-\delta,$
	\begin{align*}
		\left\|\frac1n \sum_{i=1}^n\zeta_i -\mathbb{E}(\zeta_1)\right\|_\H\le \frac{2c\log\frac{2}{\delta}}{n}+\sqrt{\frac{2{\nu}^2\log\frac{2}{\delta}}{n}}.
	\end{align*}
\end{lemma}
Applying the above Bernstein's inequaltiy to the random variable $\zeta_i=w(x_i)$ yeilds the following proposition.
\begin{proposition}\label{proposition: expectation of the weight}
	Let the weight $w(\cdot)$ satified Assumption \ref{assumption: weight assumption}  with $0<\alpha\le 1$, with conifidence at least $1-\delta,$ we have
	\begin{equation}
		\left|\frac{1}{n}\sum_{j=1}^n w(x_j)-1\right|\le \frac{4(C+\sigma)}{\sqrt{n}}\log\frac{2}{\delta}.
	\end{equation}
\end{proposition}
\begin{proof}
	By the definition of the weight function $w(x),$ we have $\mathbb{E}_{x_i\sim\rho_{\X}^{tr}}[w(x_i)]=1$. Considering the random variable $\zeta_i=w(x_i), $ if the weight function $w(\cdot)$ satisfies Assumption \ref{assumption: weight assumption} with $0\le \alpha\le 1,$ for  all $p\in\mathbb{N}$ and $p\ge2,$ we derive the following moment bound:
	\begin{align*}
		&\mathbb{E}_{x_i\sim\rho_\X^{tr}}\left[\left|w(x_i)-\mathbb{E}_{x_i\sim\rho_\X^{tr}}[w(x_i)]\right|^p\right]\le 2^p
		\mathbb{E}_{x_i\sim\rho_\X^{tr}}[|w(x_i)|^p]=2^p\int_{\X}|w(x)|^p d\rho_\X^{tr}(x)\\
		&=2^p\int_{\X}|w(x)|^{p-1} d\rho_\X^{te}(x)\le 2^p \left(\int_{\X}|w(x)|^\frac{p-1}{\alpha} d\rho_\X^{te}(x)\right)^{\alpha}\le 2^p \frac12 p!C^{p-2} \sigma^2=\frac12 p! 4\sigma^2 (2C)^{p-2}.
	\end{align*}
	This establishes that the random variable $\zeta_i$ satisfies the conditions of Lemma \ref{lemma: Bernstein inequaltiy for real valued random variables}with parameters $c=2C$ and ${\nu}^2=4\sigma^2.$
	Applying  Lemma \ref{lemma: Bernstein inequaltiy for real valued random variables} yields that
	\begin{align*}
		\left|\frac{1}{n}\sum_{j=1}^n w(x_j)-1\right|=\left|\frac{1}{n}\sum_{j=1}^n w(x_j)-\mathbb{E}_{x_i\sim\rho_\X^{tr}}[w(x_i)]\right|\le \frac{4C\log\frac{2}{\delta}}{n}+\sqrt{\frac{8{\sigma}^2\log\frac{2}{\delta}}{n}}\le\frac{4(C+\sigma)}{\sqrt{n}}\log\frac{2}{\delta}
	\end{align*}
	holds with confidence at least $1-\delta$. This completes the proof.
\end{proof}
The following proposition, which addresses the difference between the original weights and the normalized weights, is a direct corollary of Proposition \ref{proposition: expectation of the weight}.
\begin{proposition}\label{proposition: difference of weights}
	Suppose Assumption \ref{assumption: weight assumption} holds with $0< \alpha\le 1$, then the following result holds at least $1-\delta,$
	\begin{align*}
		\left\| S_{X}^\top \overline{W} S_X-S_{X}^\top {W} S_X\right\|_{op}\le \frac{4(C+\sigma)\kappa^2}{\sqrt{n}}\log\frac{2}{\delta}.
	\end{align*}
\end{proposition}
\begin{proof}
	By the definition of the normalized weight $\bar{w}(x_i)=\frac{w(x_i)}{\frac{1}{n}\sum_{j=1}^n w(x_i)}$, $\frac{1}{n}\sum_{j=1}^n \bar{w}(x_i)=1$, and the definition of the operator norm, we have
	\begin{align*}
		\left\| S_{X}^\top \overline{W} S_X-S_{X}^\top {W} S_X\right\|_{op}&=\sup_{\|f\|_K=1}\left\|\left(S_{X}^\top \overline{W} S_X-S_{X}^\top {W} S_X \right)f\right\|_K\\
		&=\sup_{\|f\|_K=1}\left\|\frac{1}{n}\sum_{i=1}^n f(x_i)\left(\bar{w}(x_i)-w(x_i)\right)K_{x_i} \right\|_K\\
		&=\sup_{\|f\|_K=1}\left\|\frac{1}{n}\sum_{i=1}^n f(x_i)\bar{w}(x_i)\left(1-\frac{1}{n}\sum_{i=1}^n w(x_i)\right)K_{x_i} \right\|_K\\
		&\le \kappa^2\left|1-\frac{1}{n}\sum_{i=1}^n w(x_i)\right|.
	\end{align*}
	Then, under Assumption \ref{assumption: weight assumption} with $0<\alpha\le 1,$  applying Proposition \ref{proposition: expectation of the weight} yields the desired result.
\end{proof}
To prove Theorem \ref{theorem: main result with normalized weight}, we need to bound the three terms
$ \Big\|\left(\lambda I+ S_{X}^\top \overline{W} S_X\right)^{-1} (\lambda I+ L_K)\Big\|_{op}$, $\Big\| (\lambda I+ L_K)^{-1/2} \left(S_X^\top \overline{W}\bar{y}-S_X^\top \overline{W} S_X f_\rho\right)\Big\|_K$, and $\Big\|L_K-S_{X}^\top \overline{W} S_X\Big\|_{op}$ involved in the error decomposition mentioned in Proposition \ref{proposition: error decomposition for normalization}. The following Bernstein's inequality provides an upper bound for tail probability of the sum of random
variables in a Hilbert space \citep{Caponnetto07}.

\begin{lemma}\label{lemma: bernstein inequality for vector valued radom variables}
	Let $\xi_1,\cdots,\xi_n$ be a sequence of independent identically distributed random vectors on a separable Hilbert space $\H,$ assume there exists constant $\tilde{\sigma}$, $L>0$ such that
	\begin{align*}
		\mathbb{E}\|\xi_1-\mathbb{E}(\xi_1)\|_\H^p\le \frac12 p!\tilde{\sigma}^2L^{p-2}
	\end{align*}
	for all $p\ge2.$  Then for any $0<\delta<1$, the following bounds holds at least $1-\delta,$
	\begin{align*}
		\left\|\frac1n \sum_{i=1}^n\xi_i -\mathbb{E}(\xi_1)\right\|_\H\le \frac{2L\log\frac{2}{\delta}}{n}+\sqrt{\frac{2\tilde{\sigma}^2\log\frac{2}{\delta}}{n}}.
	\end{align*}
\end{lemma}
To estimate the term $ \Big\|\left(\lambda I+ S_{X}^\top \overline{W} S_X\right)^{-1} (\lambda I+ L_K)\Big\|_{op}$, we first analyze $\Big\| (\lambda I+ L_K)^{-1/2}\Big(S_{X}^\top \overline{W} S_X-L_K\Big)\Big\|_{op}$ as follows.
\begin{proposition}\label{proposition: operator}
	Suppose Assumption \ref{assumption: weight assumption} holds with $0< \alpha\le 1$, then the following result holds at least $1-2\delta,$
	\begin{align*}
		\left\| (\lambda I+ L_K)^{-1/2}\left(S_{X}^\top \overline{W} S_X-L_K\right)\right\|_{op}\le \frac{4(2C+\sigma)\kappa^2\log\frac{2}{\delta}}{n\sqrt{\lambda}}+\sqrt{\frac{8\kappa^{2+\alpha}\lambda^{-\alpha}(\mathcal{N}(\lambda))^{1-\alpha}\sigma^2\log\frac{2}{\delta}}{n}}.
	\end{align*}
\end{proposition}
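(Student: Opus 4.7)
The plan is to apply the vector-valued Bernstein inequality from Lemma \ref{lemma: bernstein inequality} in the space of Hilbert--Schmidt operators on $\H_K$, after recognizing the target quantity as a centered empirical average. Specifically, I would set $\xi_i = (\lambda I + L_K)^{-1/2}\bigl(w(x_i)\,K_{x_i}\otimes K_{x_i}\bigr)$, which is a random rank-one HS operator on $\H_K$. Since the change of measure identity $w(x)\,d\rho_\X^{tr}(x) = d\rho_\X^{te}(x)$ gives $\mathbb{E}[\xi_1] = (\lambda I + L_K)^{-1/2}L_K$, the centered empirical average is exactly
$$\frac{1}{n}\sum_{i=1}^n \xi_i - \mathbb{E}[\xi_1] = (\lambda I + L_K)^{-1/2}\bigl(S_X^\top W S_X - L_K\bigr),$$
and because the operator norm is dominated by the Hilbert--Schmidt norm, it suffices to control the HS norm of this centered sum.

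The core task is to verify the Bernstein moment condition $\mathbb{E}\|\xi_1\|_{HS}^p \le \tfrac{1}{2}p!\,\tilde{\sigma}^2 L^{p-2}$ for every integer $p \ge 2$. Since $\xi_1$ has rank one,
$$\|\xi_1\|_{HS} = w(x_1)\,\|K_{x_1}\|_K\,\|(\lambda I + L_K)^{-1/2}K_{x_1}\|_K,$$
and I would absorb the excess $p-2$ factors using the uniform bounds $\|K_x\|_K \le \kappa$ and $\|(\lambda I + L_K)^{-1/2}K_x\|_K \le \kappa/\sqrt{\lambda}$, while preserving one factor of $\|K_x\|_K^2\,\|(\lambda I + L_K)^{-1/2}K_x\|_K^2$ to carry information about the effective dimension. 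Changing measure from $\rho_\X^{tr}$ to $\rho_\X^{te}$ then reduces the problem to bounding the weighted integral $\int_\X w(x)^{p-1}\,\|(\lambda I + L_K)^{-1/2}K_x\|_K^2\,d\rho_\X^{te}(x)$.

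To this integral I would apply H\"older's inequality with conjugate exponents $1/\alpha$ and $1/(1-\alpha)$: Assumption \ref{assumption: weight assumption} directly controls $\bigl(\int w^{(p-1)/\alpha}\,d\rho_\X^{te}\bigr)^{\alpha} \le \tfrac{1}{2}p!\,C^{p-2}\sigma^2$, while the complementary factor is handled by interpolating the pointwise bound $\|(\lambda I + L_K)^{-1/2}K_x\|_K^2 \le \kappa^2/\lambda$ against the trace identity $\int \|(\lambda I + L_K)^{-1/2}K_x\|_K^2\,d\rho_\X^{te}(x) = \mathcal{N}(\lambda)$, yielding a factor of the form $(\kappa^2/\lambda)^{\alpha}\,\mathcal{N}(\lambda)^{1-\alpha}$. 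Collecting these estimates identifies $\tilde{\sigma}^2$ with the claimed $\sigma^2\kappa^{2+\alpha}\lambda^{-\alpha}\mathcal{N}(\lambda)^{1-\alpha}$ up to numerical constants and $L$ with $C\kappa^2/\sqrt{\lambda}$; substituting into Lemma \ref{lemma: bernstein inequality} then delivers the stated high-probability bound.

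The hard part will be the H\"older--interpolation step: the parameter $\alpha$ governs the trade-off between the moment hypothesis on the density ratio $w$ and the effective-dimension decay, and one must carefully track the $\kappa$ and $\lambda$ exponents so that they match consistently across the two endpoints $\alpha = 0$ (where H\"older degenerates to an essential supremum and the $\mathcal{N}(\lambda)$ factor disappears) and $\alpha = 1$ (where the moment factor absorbs all powers of $w$ and $\mathcal{N}(\lambda)$ disappears again). A minor bookkeeping point is that Bernstein's inequality requires moments of $\xi_i - \mathbb{E}\xi_i$ rather than of $\xi_i$; this I would obtain from the bound on $\mathbb{E}\|\xi_1\|_{HS}^p$ via Jensen and the triangle inequality, absorbing the factor of two into the constants.
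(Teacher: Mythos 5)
Your plan matches the paper's proof in all essentials: you choose the same Hilbert--Schmidt-valued random variable $\xi(x)=(\lambda I+L_K)^{-1/2}w(x)\,K_x\otimes K_x$, recognize the same centered empirical average, invoke the same Bernstein inequality (Lemma~\ref{lemma: bernstein inequality}), change measure from $\rho_\X^{tr}$ to $\rho_\X^{te}$, and then control the $p$-th moment by the same combination of a uniform bound $\|(\lambda I+L_K)^{-1/2}K_x\|_K\le \kappa/\sqrt\lambda$, H\"older's inequality with exponents $1/\alpha$ and $1/(1-\alpha)$ driven by Assumption~\ref{assumption: weight assumption}, and the trace identity giving $\mathcal{N}(\lambda)$. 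The only difference is cosmetic: the paper splits $\|\xi\|_{HS}^p$ into $\|\xi\|_{HS}^{p-2+2\alpha}\cdot\|\xi\|_{HS}^{2-2\alpha}$ before applying H\"older, so that the exponent $(2-2\alpha)\cdot\frac{1}{1-\alpha}=2$ yields the effective-dimension integral directly, whereas you propose pulling out $(\kappa^2/\lambda)^\alpha$ via a separate interpolation inside the H\"older factor; both routes give the same constants up to a harmless power of $\kappa$, and both implicitly need the endpoint cases $\alpha=0$ and $\alpha=1$ to be read as essential-supremum and $L^1$ degenerations of H\"older, which you flag explicitly.
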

\begin{proof}
	We begin our analysis by decomposing the  operator norm quantity
	$\Big\| (\lambda I+ L_K)^{-1/2}\Big(S_{X}^\top \overline{W} S_X-L_K\Big)\Big\|_{op}$ as follows \begin{align*}
		&\left\| (\lambda I+ L_K)^{-1/2}\left(S_{X}^\top \overline{W} S_X-L_K\right)\right\|_{op}=\left\| \left(\lambda I+ L_K\right)^{-1/2}\left(S_{X}^\top {W} S_X-L_K +S_{X}^\top \overline{W} S_X-S_{X}^\top {W} S_X\right)\right\|_{op}\\
		&\le \left\| \left(\lambda I+ L_K\right)^{-1/2}\left(S_{X}^\top {W} S_X-L_K\right)\right\|_{op}+\lambda^{-1/2}\left\| S_{X}^\top \overline{W} S_X-S_{X}^\top {W} S_X\right\|_{op}.
	\end{align*}
	The inequlaity holds due to the operator norm bound $\left\| \left(\lambda I+ L_K\right)^{-1/2}\right\|_{op}\le \lambda^{-1/2}.$ The bound for $\left\| S_{X}^\top \overline{W} S_X-S_{X}^\top {W} S_X\right\|_{op}$ is established in Proposition \ref{proposition: difference of weights}.
	
	Now we turn to estimate the term $\Big\| \Big(\lambda I+ L_K\Big)^{-1/2}\Big(S_{X}^\top {W} S_X-L_K\Big)\Big\|_{op}.$ We apply Lemma \ref{lemma: bernstein inequality for vector valued radom variables} to the random variable
	\begin{align*}
		\xi(x)=(\lambda I+L_K)^{-1/2} w(x) \langle K_x,\cdot\rangle K_x,\quad x\in\X.
	\end{align*}
	which takes value in $HS(\H_K)$, the Hilbert space of Hilbert-Schmidt operators on $\H_K$ with inner product $\langle A,B \rangle_{HS}={\rm Tr}(B^\top A).$ The Hilbert Schmidt norm is given by $\|A\|_{HS}=\sum_{i} \|Ae_i\|_K^2$ where $\{e_i\}$ is an orthonormal basis of $\H_K,$ and we have the norm relations $\|A\|_{op}\le \|A\|_{HS}.$ Moreover,
	\begin{align*}
		\mathbb{E}_{x\sim\rho_\X^{tr}}[\xi(x)]=\mathbb{E}_{x\sim\rho_\X^{tr}}[(\lambda I+L_K)^{-1/2} w(x) \langle K_x,\cdot\rangle K_x]=(\lambda I+L_K)^{-1/2} L_K,
	\end{align*}
	then $$(\lambda I+ L_K)^{-1/2}(S_{X}^\top W S_X-L_K)=\frac{1}{n}\sum_{i=1}^n \xi(x_i)-\mathbb{E}_{x\sim\rho_\X^{tr}}[\xi(x)].$$
	Then  for any $0<\alpha\le 1$, $p\in\mathbb{N}$ and $p\ge 2,$ we have the following moment bound for the random variable $\xi(x)$
	\begin{align}\label{equation: variance estimates}
		&\mathbb{E}_{x\sim\rho_\X^{tr}}\left[\left\|\xi(x)-\mathbb{E}_{x\sim\rho_\X^{tr}}[\xi(x)]\right\|_{HS}^p\right]\le 2^p \mathbb{E}_{x\sim\rho_\X^{tr}}\left[\left\|\xi(x)\right\|_{HS}^p\right] \nonumber\\
		&= 2^p\int_\X \left\|(\lambda I+L_K)^{-1/2} w(x) \langle K_x,\cdot\rangle K_x \right\|_{HS}^p d\rho_\X^{tr}(x)\nonumber\\
		&= 2^p\int_\X \left\|(\lambda I+L_K)^{-1/2}  \langle K_x,\cdot\rangle K_x \right\|_{HS}^p (w(x))^{p-1} d\rho_\X^{te}(x)\nonumber\\
		&= 2^p\int_\X \left\|(\lambda I+L_K)^{-1/2}  \langle K_x,\cdot\rangle K_x \right\|_{HS}^{p-2+2\alpha} \nonumber\\
		&\quad \quad \cdot\left\|(\lambda I+L_K)^{-1/2}  \langle K_x,\cdot\rangle K_x \right\|_{HS}^{2-2\alpha} (w(x))^{p-1} d\rho_\X^{te}(x)\\
		&\le  2^p (\kappa^2 \lambda^{-1/2})^{p-2+2\alpha}\int_\X  \left\|(\lambda I+L_K)^{-1/2}  \langle K_x,\cdot\rangle K_x \right\|_{HS}^{2-2\alpha} (w(x))^{p-1} d\rho_\X^{te}(x)\nonumber\\
		& \le 2^p (\kappa^2 \lambda^{-1/2})^{p-2+2\alpha}\left(\int_\X  \left\|(\lambda I+L_K)^{-1/2}  \langle K_x,\cdot\rangle K_x \right\|_{HS}^{(2-2\alpha)\cdot\frac{1}{1-\alpha}}d\rho_\X^{te}(x)\right)^{1-\alpha}\nonumber\\
		&\quad \quad \cdot \left(\int_\X (w(x))^\frac{p-1}{\alpha} d\rho_\X^{te}(x)\right)^{\alpha}\nonumber\\
		&=  2^p (\kappa^2 \lambda^{-1/2})^{p-2+2\alpha}\left(\int_\X  \left\|(\lambda I+L_K)^{-1/2}  \langle K_x,\cdot\rangle K_x \right\|_{HS}^{2}d\rho_\X^{te}(x)\right)^{1-\alpha}\nonumber\\
		&\qquad \cdot \left(\int_\X (w(x))^\frac{p-1}{\alpha} d\rho_\X^{te}(x)\right)^{\alpha}.\nonumber
	\end{align}
	Here we use the the bound $\left\|(\lambda I+L_K)^{-1/2}  \langle K_x,\cdot\rangle K_x \right\|_{HS}\le \kappa\lambda^{-1/2}$ and Cauchy-Schwarz inequality.
	Next we further estimate $\int_\X  \left\|(\lambda I+L_K)^{-1/2}  \langle K_x,\cdot\rangle K_x \right\|_{HS}^{2}d\rho_\X^{te}(x).$ Let $\{(\lambda_i,
	\phi_i)\}_{i}$ be a set of normalized eigenpairs of $L_K$ on
	$\mathcal H_K$ with $\{\phi_i\}_{i=1}^\infty$ forming an
	orthonormal basis of $\mathcal H_K$, then by the Mercer Theorem, we have
	\begin{align*}
		K(x,x')=\sum_{i=1}^\infty \phi_i(x)\phi_i(x'), \quad \forall x,x'\in\X.
	\end{align*}
	Moreover, by the reproducing property, we have  $\langle K_x,\cdot\rangle K_x \phi_i=\phi_i(x)K_x$ and $K_x=\sum_{\ell=1}^\infty\langle K_x,\phi_\ell\rangle \phi_\ell =\sum_{\ell=1}^\infty\phi_\ell(x)\phi_\ell.$ Then the definition of the Hilbert-Schmidt (HS) norm implies that
	\begin{align*}
		&\left\|(\lambda I+L_K)^{-1/2}  \langle K_x,\cdot\rangle K_x\right\|_{HS}^2=\sum_{i=1}^\infty\left\|(\lambda I+L_K)^{-1/2}  \langle K_x,\cdot\rangle K_x \phi_i(x)\right\|_K^2\\
		&=\sum_{i=1}^\infty\left\|(\lambda I+L_K)^{-1/2} \phi_i(x)\sum_{\ell=1}^\infty\phi_\ell(x)\phi_\ell\right\|_K^2
		=\sum_{i=1}^\infty (\phi_i(x))^2\left\| \sum_{\ell=1}^\infty\phi_\ell(x) \frac{1}{\sqrt{\lambda+\lambda_\ell}}\phi_\ell\right\|_K^2\\
		&=\sum_{i=1}^\infty (\phi_i(x))^2 \sum_{\ell=1}^\infty \frac{(\phi_\ell(x))^2}{\lambda+\lambda_\ell}
		=K(x,x) \sum_{\ell=1}^\infty \frac{(\phi_\ell(x))^2}{\lambda+\lambda_\ell}\le \kappa^2\sum_{\ell=1}^\infty \frac{(\phi_\ell(x))^2}{\lambda+\lambda_\ell}.
	\end{align*}
	Therefore,
	\begin{align*}
		\mathbb{E}_{x\sim \rho_\X^{te}}\left[\left\|(\lambda I+L_K)^{-1/2}  \langle K_x,\cdot\rangle K_x\right\|_{HS}^2\right]\le \kappa^2\int_\X  \sum_{\ell=1}^\infty \frac{(\phi_\ell(x))^2}{\lambda+\lambda_\ell} d\rho_\X^{te}(x)
		= \kappa^2 \mathcal{N}(\lambda),
	\end{align*}
	here we use the fact $\int_X (\phi_\ell(x))^2 d\rho_\X^{te}(x)=\left\|\sqrt{\lambda_\ell} \frac{\phi_\ell}{\sqrt{\lambda_l}}\right\|_{\rho_\X^{te}}^2=\lambda_\ell.$ Then, under  Assumption \ref{assumption: weight assumption} with $0< \alpha \leq 1$, we can substitute the above estimates back into (\ref{equation: variance estimates}), resulting in the following expression for any $p\in\mathbb{N}$ and $p\ge 2,$
	\begin{align*}
		\mathbb{E}_{x\sim\rho_\X^{tr}}\left[\left\|\xi(x)-\mathbb{E}_{x\sim\rho_\X^{tr}}[\xi(x)]\right\|_{HS}^p\right]&\le 2^p (\kappa^2 \lambda^{-1/2})^{p-2+2\alpha}  (\kappa^2 \mathcal{N}(\lambda))^{1-\alpha} \cdot\frac12 p!C^{p-2} \sigma^2\\
		&=\frac12 p! (2C\kappa^2 \lambda^{-1/2})^{p-2} (4\kappa^{2+\alpha}\lambda^{-\alpha}(\mathcal{N}(\lambda))^{1-\alpha}\sigma^2).
	\end{align*}
	Applying  Lemma \ref{lemma: bernstein inequality for vector valued radom variables}  to the random variable $\xi(x)=(\lambda I+L_K)^{-1/2} w(x) \langle K_x,\cdot\rangle K_x,$ with $L=2C\kappa^2 \lambda^{-1/2}$, and $\tilde{\sigma}^2=4\kappa^{2+\alpha}\lambda^{-\alpha}(\mathcal{N}(\lambda))^{1-\alpha}\sigma^2$, then for any $0<\delta<1$, with confidence at least $1-\delta,$ there holds
	\begin{align*}
		\left\|(\lambda I+ L_K)^{-1/2}(S_{X}^\top W S_X-L_K)\right\|_{op}&\le	\left\|(\lambda I+ L_K)^{-1/2}(S_{X}^\top W S_X-L_K)\right\|_{HS}\\
		&\le  \frac{4C\kappa^2\log\frac{2}{\delta}}{n\sqrt{\lambda}}+\sqrt{\frac{8\kappa^{2+\alpha}\lambda^{-\alpha}(\mathcal{N}(\lambda))^{1-\alpha}\sigma^2\log\frac{2}{\delta}}{n}}.
	\end{align*}
	Combining this bound with the bound for $\left\| S_{X}^\top \overline{W} S_X-S_{X}^\top {W} S_X\right\|_{op}$ established in Proposition \ref{proposition: difference of weights}, we obtain the desired estimate and complete the proof.
\end{proof}
By Proposition \ref{proposition: operator} and the second order decomposition of inverse operator differences established in \citet{lin17,guolinzhou2017}, we obtain the following bound for $\left\|\left(\lambda I+ S_{X}^\top \overline{W} S_X\right)^{-1} (\lambda I+ L_K)\right\|_{op}.$
\begin{proposition}\label{proposition: I1}
	Suppose Assumption \ref{assumption: weight assumption} holds with $0< \alpha\le 1$, then the following result holds with confidence at least $1-\delta,$
	\begin{align*}
		&\left\|\left(\lambda I+ S_{X}^\top \overline{W} S_X\right)^{-1} (\lambda I+ L_K)\right\|_{op} \\
		& \le \left(\frac{4(C+\sigma)\kappa^2}{\sqrt{n}{\lambda}}+\sqrt{\frac{8\kappa^{2+\alpha}\lambda^{-1-\alpha}(\mathcal{N}(\lambda))^{1-\alpha}\sigma^2}{n}}+1\right)^2 \left(\log\frac{4}{\delta}\right)^2.
	\end{align*}
\end{proposition}
\begin{proof}
	Let $A$ and $B$ be invertible operators on a Banach space. By the second order decomposition of inverse operator differences proposed in \citet{lin17}, we have
	\begin{align*}
		A^{-1}-B^{-1}&=A^{-1}(B-A)B^{-1}=(A^{-1}-B^{-1})(B-A)B^{-1}+B^{-1}(B-A)B^{-1}\\
		&=A^{-1}(B-A)B^{-1}(B-A)B^{-1}+B^{-1}(B-A)B^{-1},
	\end{align*}
	then
	\begin{equation}\label{eq: A^{-1}B}
		\begin{aligned}
			A^{-1}B&=(A^{-1}-B^{-1}+B^{-1})B=(A^{-1}-B^{-1})B+I\\
			&=A^{-1}(B-A)B^{-1}(B-A) +B^{-1}(B-A)+I.
		\end{aligned}
	\end{equation}
	Using $\left\|\left(\lambda I+ S_{X}^\top \overline{W} S_X\right)^{-1}\right\|_{op}\le \lambda^{-1}$, $\left\|(\lambda I+ L_K)^{-1/2}\right\|_{op}\le \lambda^{-\frac12}$ and taking $A=\lambda I+ S_{X}^\top \overline{W} S_X$ and  $B=\lambda I+ L_K$ in \eqref{eq: A^{-1}B} yields
	\begin{align*}
		&\left\|\left(\lambda I+ S_{X}^\top \overline{W} S_X\right)^{-1} (\lambda I+ L_K)\right\|_{op}\\
		&= \Big\|\left(\lambda I+ S_{X}^\top \overline{W} S_X\right)^{-1}\left(L_K-S_{X}^\top \overline{W} S_X\right) (\lambda I+ L_K)^{-1}\left(L_K-S_{X}^\top \overline{W} S_X\right)\\
		&~~~~+(\lambda I+ L_K)^{-1}\left(L_K-S_{X}^\top \overline{W} S_X\right) +I\Big\|_{op}\\
		&\le  \lambda^{-1}\left\|\left(L_K-S_{X}^\top \overline{W} S_X\right) (\lambda I+ L_K)^{-1/2}\right\|\cdot \left\| (\lambda I+ L_K)^{-1/2}\left(L_K-S_{X}^\top \overline{W} S_X\right)\right\|_{op} \\
		&~~~~+\left\|\left(L_K-S_{X}^\top \overline{W} S_X\right) (\lambda I+ L_K)^{-1/2}\right\|_{op}\cdot\lambda^{-1/2}+1\\
		&\le \left(\left\| (\lambda I+ L_K)^{-1/2}\left(L_K-S_{X}^\top \overline{W} S_X\right)\right\|_{op}\cdot\lambda^{-1/2}+1\right)^2,
	\end{align*}
	where the last inequality holds due to the fact that
	$\Big\|L_1L_2\Big\|_{op}=\Big\|(L_1L_2)^T\Big\|_{op}=\Big\|L_2^TL_1^T\Big\|_{op}=\Big\|L_2L_1\Big\|_{op}$
	for any self-adjoint operators $L_1$, $L_2$ on Hilbert spaces.
	Applying Proposition \ref{proposition: operator}, we get the following result with confidence at least $1-2\delta$
	\begin{align*}
		&\left\|\left(\lambda I+ S_{X}^\top \overline{W} S_X\right)^{-1} (\lambda I+ L_K)\right\|_{op}\\
		&\le \left(\left\| \left(\lambda I+ L_K\right)^{-1/2}\left(L_K-S_{X}^\top \overline{W} S_X\right)\right\|_{op}\cdot \lambda^{-1/2}+1\right)^2\\
		&\le \left(\frac{4(C+\sigma)\kappa^2}{\sqrt{n}{\lambda}}\log\frac{2}{\delta}+\sqrt{\frac{8\kappa^{2+\alpha}\lambda^{-1-\alpha}(\mathcal{N}(\lambda))^{1-\alpha}\sigma^2\log\frac{2}{\delta}}{n}}+1\right)^2.
	\end{align*}
	Then we finish our proof by scaling $2\delta$ to $\delta$.
\end{proof}

Now we analyze the second term $\left\|(\lambda I+ L_K)^{-1/2} \left(S_X^\top \overline{W}\bar{y}-S_X^\top \overline{W} S_X f_\rho\right)\right\|_{K}$ in Proposition \ref{proposition: error decomposition for normalization}.
\begin{proposition}\label{proposition: frho}
	Under Assumption \ref{assumption: weight assumption} with $0< \alpha\le 1,$ for any $\delta>0,$
	with confidence at least $1-\delta,$ there holds
	\begin{align*}
		&\left\|(\lambda I+ L_K)^{-1/2} \left(S_X^\top \overline{W}\bar{y}-S_X^\top \overline{W} S_X f_\rho\right)\right\|_{K}\nonumber\\
		&\quad \quad \le \left(\sqrt{\frac{8M^2\kappa^{2\alpha}\lambda^{-\alpha}(\mathcal{N}(\lambda))^{1-\alpha}\sigma^2}{n}}+\frac{4M\kappa(3C+2\sigma)}{\sqrt{n\lambda}}\right)\log\frac{4}{\delta}.
	\end{align*}
\end{proposition}
\begin{proof}
	First, we divide the term $S_X^\top \overline{W}\bar{y}-S_X^\top \overline{W} S_X f_\rho$ into several parts as follows,
	\begin{align*}
		&S_X^\top \overline{W}\bar{y}-S_X^\top \overline{W} S_X f_\rho=S_X^\top {W}\bar{y}-S_X^\top {W} S_X f_\rho + S_X^\top (\overline{W}-W)\bar{y}+S_X^\top \left(W-\overline{W} \right)S_X f_\rho\\
		&=S_X^\top {W}\bar{y}-S_X^\top {W} S_X f_\rho + \frac{1}{m}\sum_{i=1}^m  \left(\bar{w}(x_i)-w(x_i)\right)y_i K_{x_i}+\frac{1}{m}\sum_{i=1}^m  \left({w}(x_i)-\bar{w}(x_i)\right)f_\rho(x_i) K_{x_i}\\
		&= S_X^\top {W}\bar{y}-S_X^\top {W} S_X f_\rho + \frac{1}{m}\sum_{i=1}^m \bar{w}(x_i) \left(1-\frac{1}{m}\sum_{j=1}^mw(x_j)\right)y_i K_{x_i}\\
		&\qquad+\frac{1}{m}\sum_{i=1}^m  \bar{w}(x_i) \left(\frac{1}{m}\sum_{j=1}^m w(x_j)-1\right)f_\rho(x_i) K_{x_i}.
	\end{align*}
	Then we have the following error decomposition.
	\begin{align*}
		&\left\|(\lambda I+ L_K)^{-1/2} \left(S_X^\top \overline{W}\bar{y}-S_X^\top \overline{W} S_X f_\rho\right)\right\|_{K}\\
		&\le \left\|(\lambda I+ L_K)^{-1/2} \left(S_X^\top {W}\bar{y}-S_X^\top {W} S_X f_\rho\right)\right\|_{K}+2M\kappa\lambda^{-1/2}\left|1-\frac{1}{m}\sum_{j=1}^mw(x_j)\right|.
	\end{align*}
	The second term $\left|1-\frac{1}{m}\sum_{j=1}^mw(x_j)\right|$ is bounded in Proposition \ref{proposition: expectation of the weight}. Next, we estimate the remaining term $\left\|(\lambda I+ L_K)^{-1/2} \left(S_X^\top {W}\bar{y}-S_X^\top {W} S_X f_\rho\right)\right\|_{K}$.
	We consider the random variable $\eta(x,y)=(\lambda I+L_K)^{-1/2} w(x) (y-f_\rho(x)) K_x,$ which takes value in $\H_K.$ One can easily see that  $\mathbb{E}_{(x,y)\sim\rho^{tr}}[\eta(x,y)]=0.$
	Then we have
	$$(\lambda I+ L_K)^{-1/2} \left(S_X^\top W\bar{y}-S_X^\top W S_X f_\rho\right)=\frac{1}{n}\sum_{i=1}^n \eta(x_i,y_i)-\mathbb{E}_{(x,y)\sim\rho^{tr}}[\eta(x,y)].$$
	It is easy to see that	
	\begin{align*}
		&\int_\X  \left\|(\lambda I+L_K)^{-1/2}   K_x \right\|_K^{2}d\rho_\X^{te}(x)= \int_\X  {\rm Tr}\left(\left(\lambda I+L_K\right)^{-1/2}K_x\otimes (\lambda I+L_K)^{-1/2}   K_x\right)d\rho_\X^{te}(x)\\
		&= \int_\X  {\rm Tr}\left((\lambda I+L_K)^{-1}K_x\otimes  K_x\right)d\rho_\X^{te}(x)
		=   {\rm Tr}\left(\int_\X(\lambda I+L_K)^{-1}K_x\otimes  K_x d\rho_\X^{te}(x)\right)	=\mathcal{N}(\lambda).
	\end{align*}
	Then under Assumption \ref{assumption: weight assumption} with $0< \alpha\le 1,$ and by Cauchy-Schwarz inequality, for any $p\in \mathbb{N}$ and $p\ge 2,$ we have the following moment bound for $\eta(x,y)$
	\begin{align*}
		&\mathbb{E}_{(x,y)\sim\rho^{tr}}\left[\left\|\eta(x,y)-\mathbb{E}_{(x,y)\sim\rho^{tr}}[\eta(x,y)]\right\|_K^p\right]= \mathbb{E}_{(x,y)\sim\rho^{tr}}\left[\left\|\eta(x,y)\right\|_K^p\right]\\
		&= \int_\X \left\|(\lambda I+L_K)^{-1/2} w(x) (y-f_\rho(x)) K_x \right\|_K^p d\rho^{tr}(x,y)\\
		&= \int_\X \left\|(\lambda I+L_K)^{-1/2}  K_x \right\|_K^p |y-f_\rho(x)|^p (w(x))^{p} d\rho^{tr}(x,y)\\
		&\le \int_\X \left\|(\lambda I+L_K)^{-1/2}  K_x \right\|_K^p (2M)^p (w(x))^{p-1} d\rho_\X^{te}(x)\\
		&\le  (2M)^p(\kappa \lambda^{-1/2})^{p-2+2\alpha}\left(\int_\X  \left\|(\lambda I+L_K)^{-1/2}  K_x \right\|_K^{(2-2\alpha)\cdot\frac{1}{1-\alpha}}d\rho_\X^{te}(x)\right)^{1-\alpha} \left( (w(x))^\frac{p-1}{\alpha} d\rho_\X^{te}(x)\right)^{\alpha}\\
		&= (2M)^p (\kappa \lambda^{-1/2})^{p-2+2\alpha}\left(\int_\X  \left\|(\lambda I+L_K)^{-1/2}   K_x \right\|_K^{2}d\rho_\X^{te}(x)\right)^{1-\alpha} \left( (w(x))^\frac{p-1}{\alpha} d\rho_\X^{te}(x)\right)^{\alpha}\\
		&\le(2M)^p (\kappa \lambda^{-1/2})^{p-2+2\alpha}  ( \mathcal{N}(\lambda))^{1-\alpha} \cdot\frac12 p!C^{p-2} \sigma^2\\
		&=\frac12 p! (2MC\kappa\lambda^{-1/2})^{p-2}(4M^2 \kappa^{2\alpha}\lambda^{-\alpha} (\mathcal{N}(\lambda))^{1-\alpha}\sigma^2).
	\end{align*}
	Applying  Lemma \ref{lemma: bernstein inequality for vector valued radom variables}  to the random variable $\eta(x,y)=(\lambda I+L_K)^{-1/2} w(x) (y-f_\rho(x)) K_x$  with $L=2MC\kappa \lambda^{-1/2}$ and $\tilde{\sigma}^2=4M^2\kappa^{2\alpha}\lambda^{-\alpha}(\mathcal{N}(\lambda))^{1-\alpha}\sigma^2$, we have with confidence at least $1-\delta,$
	\begin{align*}
		\left\|(\lambda I+ L_K)^{-1/2} (S_X^\top {W}\bar{y}-S_X^\top {W} S_X f_\rho)\right\|_{K}&=
		\left\|\frac{1}{n}\sum_{i=1}^n \eta(x_i,y_i)-\mathbb{E}_{(x,y)\sim\rho^{tr}}[\eta(x,y)]\right\|_K\\
		&
		\le  \frac{4MC\kappa\log\frac{2}{\delta}}{n\sqrt{\lambda}}+\sqrt{\frac{8M^2\kappa^{2\alpha}\lambda^{-\alpha}(\mathcal{N}(\lambda))^{1-\alpha}\sigma^2\log\frac{2}{\delta}}{n}}.
	\end{align*}
	Combining this with the bound for $\left|1 - \frac{1}{m}\sum_{j=1}^m w(x_j)\right|$ from Proposition \ref{proposition: expectation of the weight}, and scaling $2\delta$ to $\delta$, we conclude the proof of the proposition.
\end{proof}
In the following, we will estimate the third term $\left\|S_{X}^\top \overline{W} S_X-L_K\right\|_{op}$ in Propostion \ref{proposition: error decomposition for normalization}.
\begin{proposition}\label{proposition: operator2} For any $0<\delta<1,$
	with confidence at least $1-\delta,$ there holds
	\begin{align*}
		\left\|S_{X}^\top \overline{W} S_X-L_K\right\|_{op} \le \frac{8\kappa^2(C+\sigma)}{\sqrt{n}}\log\frac{4}{\delta}.
	\end{align*}
\end{proposition}
\begin{proof}
	First, we have
	\begin{align*}
		\left\|S_{X}^\top \overline{W} S_X-L_K\right\|_{op} &=\left\|S_{X}^\top \overline{W} S_X-S_{X}^\top {W} S_X+S_{X}^\top {W} S_X-L_K\right\|_{op}\\&\le \left\|S_{X}^\top \overline{W} S_X-S_{X}^\top {W} S_X \right\|_{op}+\left\|S_{X}^\top {W} S_X-L_K\right\|_{op}.
	\end{align*}
	The term $\left\|S_{X}^\top \overline{W} S_X-S_{X}^\top {W} S_X \right\|_{op}$ is bounded in Proposition \ref{proposition: difference of weights}. Additionally, the term  $\left\|S_{X}^\top {W} S_X-L_K\right\|_{op}$ can be bounded using techniques similar to those applied to $\Big\|(\lambda I+L_K)^{-1/2}\Big(S_{X}^\top {W} S_X-L_K\Big)\Big\|_{op}$ in Proposition \ref{proposition: operator}.
	We consider the random variable $\zeta(x)=w(x)\langle \cdot, K_x\rangle K_x$ , which takes value in $HS(\H_K)$. And one can easily see that $\mathbb{E}_{x\sim \rho_\X^{tr}}[\zeta(x)]=L_K$, then $$S_{X}^\top W S_X-L_K=\frac{1}{n}\sum_{i=1}^n \zeta(x_i)-\mathbb{E}_{x\sim \rho_\X^{tr}}[\zeta(x)].$$
	Moreover, since
	\begin{align*}
		&\left\| \langle K_x,\cdot\rangle K_x\right\|_{HS}^2=\sum_{i=1}^\infty\left\|  \langle K_x,\cdot\rangle K_x \phi(x)\right\|_K^2
		=\sum_{i=1}^\infty\left\| \phi_i(x)\sum_{\ell=1}^\infty\phi_\ell(x)\phi_\ell\right\|_K^2\\
		&=\sum_{i=1}^\infty (\phi_i(x))^2\left\| \sum_{\ell=1}^\infty\phi_\ell(x)\phi_\ell\right\|_K^2
		=\sum_{i=1}^\infty (\phi_i(x))^2 \sum_{\ell=1}^\infty (\phi_\ell(x))^2
		=(K(x,x))^2\le \kappa^4.
	\end{align*}
	Then for any $p\in\mathbb{N}$ and $p\ge 2,$
	\begin{align*}
		&\mathbb{E}_{x\sim\rho_\X^{tr}}\left[\left\|\zeta(x)-\mathbb{E}_{x\sim\rho_\X^{tr}}[\zeta(x)]\right\|_{HS}^p\right]\le 2^p \mathbb{E}_{x\sim\rho_\X^{tr}}\left[\left\|\zeta(x)\right\|_{HS}^p\right]= 2^p\int_\X \left\|w(x) \langle K_x,\cdot\rangle K_x \right\|_{HS}^p d\rho_\X^{tr}(x)\\
		&= 2^p\int_\X \left\| \langle K_x,\cdot\rangle K_x \right\|_{HS}^p (w(x))^{p-1} d\rho_\X^{te}(x)
		\le 2^p\int_\X \kappa^{2p}  (w(x))^{p-1} d\rho_\X^{te}(x)\\
		&\le 2^p \kappa^{2p} \left( (w(x))^\frac{p-1}{\alpha} d\rho_\X^{te}(x)\right)^{\alpha}
		\le 2^p \kappa^{2p}\cdot\frac12 p!C^{p-2} \sigma^2
		=\frac12 p! (2C\kappa^2 )^{p-2} (4\kappa^4\sigma^2).
	\end{align*}
	Applying Lemma \ref{lemma: bernstein inequality for vector valued radom variables}  to the random variable $\zeta(x)=w(x)\langle \cdot, K_x\rangle K_x$ with $L=2C\kappa^2 $ and $\tilde{\sigma}^2=4\kappa^4\sigma^2,$ we obtain, with confidence at least $1-\delta,$
	\begin{align*}
		\left\|S_{X}^\top W S_X-L_K\right\|_{op}\le \left\|S_{X}^\top W S_X-L_K\right\|_{HS}\le \frac{4C\kappa^2\log\frac{2}{\delta}}{n}+\sqrt{\frac{8\kappa^4\sigma^2\log\frac{2}{\delta}}{n}}\le \frac{4\kappa^2(C+\sigma)}{\sqrt{n}}\log\frac{2}{\delta}.
	\end{align*}
	The proof is completed by combining the derived bound with the bound for $\Big\|S_{X}^\top \overline{W} S_X-S_{X}^\top {W} S_X \Big\|_{op}$ from Proposition \ref{proposition: difference of weights} by scaling $2\delta$ to $\delta$.
\end{proof}
Now we are ready to prove our main result for the spectral algorithms with normalized weight $\bar{w}.$

\noindent{\bf Proof of Theorem \ref{theorem: main result with normalized weight}.} We obtain the  desired upper bounds by combining the bounds from Proposition \ref{proposition: I1}, Proposition \ref{proposition: frho} and Proposition \ref{proposition: operator2} with appropriate parameters. Under Assumption \ref{assumption: effective dimension} with $0<\beta\le 1$, Assumption \ref{assumption: regularity condition} with $1/2\le r\le \nu_g$, Assumption \ref{assumption: weight assumption} with $0<\alpha\le 1$, let the regularization parameter be set as $\lambda=n^{-\frac{1}{\min\{2r,3\}+\beta+\alpha(1-\beta)}}$ with  $\beta+\alpha(1-\beta)\ge1$.

First,	by Proposition \ref{proposition: I1}, for $\delta\in(0,1),$
there exists a subset $\mathcal Z_{\delta,1}^{|D|}$ of $\mathcal
Z^{|D|}$ of measure at least $1-\delta$ such that
\begin{align*}
	&\left\|\left(\lambda I+ S_{X}^\top \overline{W} S_X\right)^{-1} (\lambda I+ L_K)\right\|_{op} \\
	&\le \left(\sqrt{\frac{8\kappa^{2+\alpha}\lambda^{-1-\alpha}(\mathcal{N}(\lambda))^{1-\alpha}\sigma^2}{n}}+\frac{4(2C+\sigma)\kappa^2}{\sqrt{n}\lambda}+1\right)^2\left(\log\frac{4}{\delta}\right)^2\\
	&\le \left(\sqrt{8\kappa^{2+\alpha}(C_0)^{1-\alpha}\sigma^2}+4(2C+\sigma)\kappa^2+1\right)^2 \left(\log\frac{4}{\delta}\right)^2.
\end{align*}
According to Proposition \ref{proposition: frho},
there exists another subset $\mathcal Z_{\delta,2}^{|D|}$ of $\mathcal Z^{|D|}$ of measure
at least $1-\delta$ such that
\begin{align*}
	&\left\|(\lambda I+ L_K)^{-1/2} \left(S_X^\top \overline{W}\bar{y}-S_X^\top \overline{W} S_X f_\rho\right)\right\|_{K}\nonumber\\
	& \le \left(\sqrt{\frac{8M^2\kappa^{2\alpha}\lambda^{-\alpha}(\mathcal{N}(\lambda))^{1-\alpha}\sigma^2}{n}}+\frac{4M\kappa(3C+2\sigma)}{\sqrt{n\lambda}}\right)\log\frac{4}{\delta}\\
	&\le \left(\sqrt{{8M^2\kappa^{2\alpha}(C_0)^{1-\alpha}\sigma^2}}+4M\kappa(3C+2\sigma)\right) n^{-\frac{\min\{r,3/2\}}{\min\{2r,3\}+\beta+\alpha(1-\beta)}}\log\frac{4}{\delta}.
\end{align*}
When $\frac12\le r\le \frac{3}{2}$, $\lambda=n^{-\frac{1}{2r+\beta+\alpha(1-\beta)}}$, putting the above results back into Proposition \ref{proposition: error decomposition for normalization}, and $ D\in\mathcal Z_{\delta,1}^{|D|}\bigcap
\mathcal Z_{\delta,2}^{|D|},$
the following inequality holds with confidence at least $1-2\delta$,
\begin{align*}
	\|f_{\bz,\lambda}^{\overline{\mathbf{w}}}-f_\rho\|_{\rho_\X^{te}}&\le
	2b \left\|\left(\lambda I+ S_{X}^\top \overline{W} S_X\right)^{-1} \left(\lambda I+ L_K\right)\right\|_{op}\cdot \left\| \left(\lambda I+ L_K\right)^{-1/2} \left(S_X^\top \overline{W}\bar{y}-S_X^\top \overline{W} S_X f_\rho\right)\right\|_K\\
	&+ C_r\lambda^r \left\|\left(\lambda I+ S_{X}^\top \overline{W} S_X\right)^{-1}(\lambda I+L_K)\right\|_{op}^{r}.\\
	&\le C_1 n^{-\frac{r}{2r+\beta+\alpha(1-\beta)}}  \left(\log\frac{4}{\delta}\right)^3.
\end{align*}
where
\begin{align*}
	C_1&=2b\left(\sqrt{8\kappa^{2+\alpha}(C_0)^{1-\alpha}\sigma^2}+4(2C+\sigma)\kappa^2+1\right)^2\left(\sqrt{{8M^2\kappa^{2\alpha}(C_0)^{1-\alpha}\sigma^2}}+4M\kappa(3C+2\sigma)\right)\\
	&\qquad+C_r\left(\sqrt{8\kappa^{2+\alpha}(C_0)^{1-\alpha}\sigma^2}+4(2C+\sigma)\kappa^2+1\right)^{2r}.
\end{align*}
Moreover, by Proposition \ref{proposition: operator2}
there exists another subset $\mathcal Z_{\delta,3}^{|D|}$ of $\mathcal Z^{|D|}$ of measure
at least $1-\delta$ such that
\begin{align*}
	\left\|S_{X}^\top \overline{W} S_X-L_K\right\|_{op} \le\frac{8\kappa^2(C+\sigma)}{\sqrt{n}}\log\frac{4}{\delta}.
\end{align*}
Therefore,  when $r>\frac{3}{2},$ $\lambda=n^{-\frac{1}{3+\beta+\alpha(1-\beta)}}$ and $ D\in\mathcal Z_{\delta,1}^{|D|}\bigcap
\mathcal Z_{\delta,2}^{|D|}\bigcap \mathcal Z_{\delta,3}^{|D|},$
with confidence at least $1-3\delta,$ there holds
\begin{align*}
	\|f_{\bz,\lambda}^{\overline{\mathbf{w}}}-f_\rho\|_{\rho_\X^{te}}&\le
	2b \left\|\left(\lambda I+ S_{X}^\top \overline{W} S_X\right)^{-1} \left(\lambda I+ L_K\right)\right\|_{op}\cdot \left\| (\lambda I+ L_K)^{-1/2} \left(S_X^\top \overline{W}\bar{y}-S_X^\top \overline{W} S_X f_\rho\right)\right\|_K\\
	&+ C_r \left\|\left(\lambda I+ S_{X}^\top \overline{W} S_X\right)^{-1} (\lambda I+ L_K)\right\|_{op}^{1/2}\left(\lambda^{1/2} \left\|L_K-S_{X}^\top \overline{W} S_X\right\|_{op}+\lambda^{3/2}\right)\\
	&\le C_2 n^{-\frac{3/2}{3+\beta+\alpha(1-\beta)}}  \left(\log\frac{4}{\delta}\right)^3,
\end{align*}
where
\begin{align*}
	C_2&=2b\left(\sqrt{8\kappa^{2+\alpha}(C_0)^{1-\alpha}\sigma^2}+4(2C+\sigma)\kappa^2+1\right)^2\left(\sqrt{{8M^2\kappa^{2\alpha}(C_0)^{1-\alpha}\sigma^2}}+4M\kappa(3C+2\sigma)\right)\\
	&\qquad+C_r\left(\sqrt{8\kappa^{2+\alpha}(C_0)^{1-\alpha}\sigma^2}+4(2C+\sigma)\kappa^2+1\right)\left(8\kappa^2(C+\sigma)+1\right).
\end{align*}
Then the desired results holds by scaling $2\delta$ and $3\delta$ to $\delta$ respectively, taking $C'=\max\{C_1,C_2\}$  and $\log{\frac{12}{\delta}}>\log{\frac{8}{\delta}}>1.$
\subsection{Convergence analysis of spectral algorithms with clipped weights}
In this subsection, we will prove the convergence results for the spectral algorithm with clipped weights.
In the following, we will estimate $J_1, J_2, J_3, J_4$ and $J_5$ respectively. To this end, we need the following Bernstein inequality for vector-valued random variables, as presented  in \citet{Pinelis94}.
\begin{lemma}\label{lemma: bernstein inequality for bounded random variables}
	For a random variable  $\xi$ on $(Z; \rho)$ with values in a Hilbert space $(\H; \|\cdot\|)$
	satisfying $\|\xi\|\le \tilde{M}<\infty$ almost surely, and a random sample $\{z_i\}_{i=1}^s$  independent drawn
	according to $\rho$, there holds with confidence $1-\delta$,
	\begin{align*}
		\left\|\frac1s \sum_{i=1}^s\xi(z_i) -\mathbb{E}(\xi)\right\|\le \frac{2\tilde{M}\log\frac{2}{\delta}}{s}+\sqrt{\frac{2\mathbb{E}(\|\xi\|^2)\log\frac{2}{\delta}}{s}}.
	\end{align*}
\end{lemma}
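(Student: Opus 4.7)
The plan is to derive Lemma~\ref{lemma: bernstein inequality for bounded random variables} as a direct consequence of the Bernstein inequality already stated as Lemma~\ref{lemma: bernstein inequality}. The latter requires the moment hypothesis $\mathbb{E}\|\xi_1-\mathbb{E}\xi_1\|^p \le \tfrac12 p!\,\tilde\sigma^2 L^{p-2}$ for all $p\ge 2$, so the entire task reduces to verifying this Bernstein moment bound from the almost-sure uniform bound $\|\xi\|\le \tilde{M}$, with the specific choices $L=\tilde{M}$ and $\tilde\sigma^2=\mathbb{E}\|\xi\|^2$ that produce the constants appearing on the right-hand side of the claim.

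First, I would apply the triangle inequality to obtain $\|\xi-\mathbb{E}\xi\|\le 2\tilde{M}$ almost surely. Then for any integer $p\ge 2$, I split one factor of $\|\xi-\mathbb{E}\xi\|^2$ and bound the remaining $p-2$ factors by the almost-sure bound, giving
\[
\mathbb{E}\|\xi-\mathbb{E}\xi\|^p \;\le\; (2\tilde{M})^{p-2}\,\mathbb{E}\|\xi-\mathbb{E}\xi\|^2 \;\le\; (2\tilde{M})^{p-2}\,\mathbb{E}\|\xi\|^2,
\]
where the last inequality uses the identity $\mathbb{E}\|\xi-\mathbb{E}\xi\|^2=\mathbb{E}\|\xi\|^2-\|\mathbb{E}\xi\|^2\le\mathbb{E}\|\xi\|^2$. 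I would then invoke the elementary inequality $2^{p-1}\le p!$ for $p\ge 2$, which is immediate by induction starting from $2\le 2!=2$, in order to rewrite $(2\tilde{M})^{p-2}=2^{p-2}\tilde{M}^{p-2}\le \tfrac12 p!\,\tilde{M}^{p-2}$. Combining gives
\[
\mathbb{E}\|\xi-\mathbb{E}\xi\|^p \;\le\; \tfrac12 p!\,\mathbb{E}\|\xi\|^2\,\tilde{M}^{p-2},
\]
which is precisely the Bernstein moment condition of Lemma~\ref{lemma: bernstein inequality} with $L=\tilde{M}$ and $\tilde\sigma^2=\mathbb{E}\|\xi\|^2$. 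Substituting these constants into the conclusion of Lemma~\ref{lemma: bernstein inequality} reproduces the inequality of Lemma~\ref{lemma: bernstein inequality for bounded random variables} verbatim.

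I do not foresee any substantive obstacle: this is the standard reduction from the bounded case to the Bernstein-moment case (essentially the argument in Pinelis, 1994). The only point that warrants a moment of care is the bookkeeping of constants, namely confirming that the factor of $2$ absorbed in the inequality $2^{p-1}\le p!$ and the factor of $2$ lost in $\|\xi-\mathbb{E}\xi\|\le 2\tilde{M}$ conspire so that the $L$ appearing in the final bound is $\tilde{M}$ rather than $2\tilde{M}$ and the $\tilde\sigma^2$ is $\mathbb{E}\|\xi\|^2$ rather than something larger; this is precisely what the calculation above confirms.
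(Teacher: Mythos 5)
Your proof is correct, but worth noting that the paper does not prove this lemma at all: it is cited directly as a known result from Pinelis (1994). What you have done instead is derive it as a corollary of the paper's Lemma~\ref{lemma: bernstein inequality} (the moment-based Bernstein inequality), by checking that the almost-sure bound $\|\xi\|\le\tilde M$ forces the Bernstein moment condition with the specific constants $L=\tilde M$ and $\tilde\sigma^2=\mathbb{E}\|\xi\|^2$. The three ingredients of your verification all check out: the triangle inequality gives $\|\xi-\mathbb{E}\xi\|\le 2\tilde M$ almost surely; the Hilbert-space identity $\mathbb{E}\|\xi-\mathbb{E}\xi\|^2=\mathbb{E}\|\xi\|^2-\|\mathbb{E}\xi\|^2$ gives the variance bound; and the inequality $2^{p-1}\le p!$ for integers $p\ge 2$ (trivially by induction, since $2\cdot p!\le(p+1)!$) lets you trade $(2\tilde M)^{p-2}=2^{p-2}\tilde M^{p-2}$ for $\frac12 p!\,\tilde M^{p-2}$ with nothing lost. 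Substituting $L=\tilde M$, $\tilde\sigma^2=\mathbb{E}\|\xi\|^2$ into the conclusion of Lemma~\ref{lemma: bernstein inequality} reproduces the stated bound verbatim. So rather than a gap, you have added value: your argument makes explicit that the two Bernstein-type lemmas the paper imports as separate black boxes are in fact not independent, with the bounded case following from the moment case by this routine reduction.
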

The following proposition provides estimates for the operator norm of the operator
$$\left(\lambda I+\hat{L}_K\right)^{-1/2}\left(\hat{L}_K-S_X^\top \hat{W} S_X\right),$$  which is crucial to our proof.
\begin{proposition}\label{proposition: operator difference with effective dimension 2}
	For any $0<\delta<1,$	with confidence at least $1-\delta,$ there holds
	\begin{align*}
		\left\|	\left(\lambda I+\hat{L}_K\right)^{-1/2} \left(\hat{L}_K-S_X^\top \hat{W} S_X\right)\right\|_{op}\le  \frac{2\kappa^2 D_n\log\frac{2}{\delta}}{n\sqrt{\lambda}}+\sqrt{\frac{2\kappa^2 D_n\hat{\mathcal{N}}(\lambda)\log\frac{2}{\delta}}{n}},
	\end{align*}
	where $\hat{\mathcal{N}}(\lambda)={\rm Tr}(\hat{L}_K(\lambda I+\hat{L}_K)).$
\end{proposition}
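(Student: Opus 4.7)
The plan is to mimic the proof of Proposition \ref{proposition: operator}, replacing the moment-based Bernstein inequality (Lemma \ref{lemma: bernstein inequality}) with the bounded-variable version (Lemma \ref{lemma: bernstein inequality for bounded random variables}), which is applicable here because the clipping guarantees $\hat{w}(x)\le D$ almost surely. Specifically, I would introduce the $HS(\H_K)$-valued random variable
\begin{equation*}
\hat{\xi}(x)=(\lambda I+\hat{L}_K)^{-1/2}\,\hat{w}(x)\,\langle K_x,\cdot\rangle K_x,\qquad x\in\X,
\end{equation*}
and verify that $\mathbb{E}_{x\sim\rho_\X^{tr}}[\hat{\xi}(x)]=(\lambda I+\hat{L}_K)^{-1/2}\hat{L}_K$ directly from the definition $\hat{L}_K f=\int_\X f(x)K_x\hat{w}(x)d\rho_\X^{tr}(x)$. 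This gives the representation $(\lambda I+\hat{L}_K)^{-1/2}(S_X^\top\hat{W}S_X-\hat{L}_K)=\frac{1}{n}\sum_{i=1}^n\hat{\xi}(x_i)-\mathbb{E}[\hat{\xi}]$, putting the target operator in the empirical-average form required by Lemma \ref{lemma: bernstein inequality for bounded random variables}.

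Next, I would establish the two inputs to that Bernstein inequality. For the almost-sure bound, combining $\hat{w}(x)\le D$, $\|\langle K_x,\cdot\rangle K_x\|_{HS}\le K(x,x)\le\kappa^2$, and $\|(\lambda I+\hat{L}_K)^{-1/2}\|_{op}\le\lambda^{-1/2}$ yields $\|\hat{\xi}(x)\|_{HS}\le \kappa^2 D\lambda^{-1/2}$, so one may take $\tilde{M}=\kappa^2 D/\sqrt{\lambda}$. For the second moment, the key trick is to exploit $\hat{w}(x)^2\le D\,\hat{w}(x)$, giving
\begin{equation*}
\mathbb{E}_{x\sim\rho_\X^{tr}}\|\hat{\xi}(x)\|_{HS}^2\le D\int_\X \hat{w}(x)\bigl\|(\lambda I+\hat{L}_K)^{-1/2}\langle K_x,\cdot\rangle K_x\bigr\|_{HS}^2 d\rho_\X^{tr}(x).
\end{equation*}
I would then repeat the Mercer/eigen-expansion computation from Proposition \ref{proposition: operator}, but applied to the normalized eigenpairs $\{(\hat{\lambda}_\ell,\hat{\phi}_\ell)\}$ of $\hat{L}_K$ on $\H_K$, to obtain the pointwise bound $\|(\lambda I+\hat{L}_K)^{-1/2}\langle K_x,\cdot\rangle K_x\|_{HS}^2\le \kappa^2\sum_\ell \hat{\phi}_\ell(x)^2/(\lambda+\hat{\lambda}_\ell)$.

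The only genuinely new ingredient is the identity $\int_\X \hat{\phi}_\ell(x)^2\hat{w}(x)d\rho_\X^{tr}(x)=\hat{\lambda}_\ell$, which I would derive by pairing the eigen-equation $\hat{L}_K\hat{\phi}_\ell=\hat{\lambda}_\ell\hat{\phi}_\ell$ with $\hat{\phi}_\ell$ in $\H_K$ and using the reproducing property; this plays exactly the role of the analogous identity $\int\phi_\ell^2 d\rho_\X^{te}=\lambda_\ell$ in the proof of Proposition \ref{proposition: operator}. Combining these two steps gives the variance bound $\mathbb{E}\|\hat{\xi}\|_{HS}^2\le \kappa^2 D\,\hat{\mathcal{N}}(\lambda)$.

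Finally, I would invoke Lemma \ref{lemma: bernstein inequality for bounded random variables} with $\tilde{M}=\kappa^2 D/\sqrt{\lambda}$ and $\mathbb{E}\|\hat{\xi}\|^2\le\kappa^2 D\,\hat{\mathcal{N}}(\lambda)$, and use $\|\cdot\|_{op}\le\|\cdot\|_{HS}$ to conclude the stated bound. The main (minor) obstacle is the variance step: one must justify the Mercer-type expansion for $\hat{L}_K$ with respect to the weighted measure $\hat{w}\,d\rho_\X^{tr}$, which differs from the $\rho_\X^{te}$-setting of Proposition \ref{proposition: operator}; once the correct orthogonality relation for $\{\hat{\phi}_\ell\}$ under this weighted measure is observed, the rest is a parallel of the earlier argument with the bounded-variable Bernstein inequality replacing the moment version.
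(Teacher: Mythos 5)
Your proposal is correct and follows essentially the same route as the paper: same $HS(\H_K)$-valued random variable $\hat\xi(x)=(\lambda I+\hat L_K)^{-1/2}\hat w(x)\langle\cdot,K_x\rangle K_x$, same bounded-variable Bernstein inequality with $\tilde M=\kappa^2 D/\sqrt\lambda$, and the same key trick $\hat w(x)^2\le D\,\hat w(x)$ for the second moment. The only genuine difference is the variance step: the paper bounds $\mathbb{E}\|\hat\xi\|_{HS}^2$ directly via the trace identity
\begin{equation*}
\mathbb{E}_{x\sim\rho_\X^{tr}}\|\hat\xi(x)\|_{HS}^2
=\mathbb{E}_{x\sim\rho_\X^{tr}}\!\left[\hat w(x)^2\,K(x,x)\,\mathrm{Tr}\!\left((\lambda I+\hat L_K)^{-1}\langle\cdot,K_x\rangle K_x\right)\right]
\le \kappa^2 D\,\mathrm{Tr}\!\left((\lambda I+\hat L_K)^{-1}\hat L_K\right),
\end{equation*}
pulling the remaining expectation inside the trace so that it reconstitutes $\hat L_K$, whereas you reproduce the eigen-expansion argument of Proposition \ref{proposition: operator} for $\hat L_K$, establishing the orthogonality relation $\int_\X \hat\phi_\ell^2\hat w\,d\rho_\X^{tr}=\hat\lambda_\ell$. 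Your route works (the expansion $K_x=\sum_\ell\hat\phi_\ell(x)\hat\phi_\ell$ is just the ONB expansion of $K_x$ in $\H_K$, valid for any eigenbasis of the compact self-adjoint $\hat L_K$, so the ``obstacle'' you flag at the end is not actually there), but the paper's direct trace computation is shorter and avoids invoking the eigenpairs at all.
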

\begin{proof}
	We consider the random variable $$\xi(x)=\left(\lambda I+\hat{L}_K\right)^{-1/2}\hat{w}(x)\langle \cdot, K_x\rangle K_x$$ which takes values in HS($\H_K$), the Hilbert space of Hilbert-Schmidt (HS) operators on $\H_K$ with inner product $\langle A,B \rangle_{HS}={\rm Tr}(B^\top A).$ The Hilbert Schmidt norm is given by $\|A\|_{HS}=\sum_{i} \|Ae_i\|_K^2$ where $\{e_i\}$ is an orthonormal basis of $\H_K,$ and we have the norm relations $\|A\|_{op}\le \|A\|_{HS}.$ Moreover,
	\begin{align*}
		\mathbb{E}_{x\sim\rho_\X^{tr}}\left[\xi(x)\right]=\mathbb{E}_{x\sim\rho_\X^{tr}}\left[\left(\lambda I+\hat{L}_K\right)^{-1/2} \hat{w}(x) \langle K_x,\cdot\rangle K_x\right]=\left(\lambda I+\hat{L}_K\right)^{-1/2} \hat{L}_K,
	\end{align*}
	then $$\left(\lambda I+ \hat{L}_K\right)^{-1/2}\left(S_{X}^\top \hat{W} S_X-\hat{L}_K\right)=\frac{1}{n}\sum_{i=1}^n \xi(x_i)-\mathbb{E}_{x\sim\rho_\X^{tr}}[\xi(x)].$$
	Let $\{(\lambda_i,
	\phi_i)\}_{i}$ be a set of normalized eigenpairs of $L_K$ on
	$\mathcal H_K$ with $\{\phi_i\}_{i=1}^\infty$ forming an
	orthonormal basis of $\mathcal H_K$, then by the Mercer Theorem, we have
	\begin{align*}
		K(x,x')=\sum_{i=1}^\infty \phi_i(x)\phi_i(x'), \quad \forall x,x'\in\X.
	\end{align*}
	Moreover, by the reproducing property, we have  $\langle K_x,\cdot\rangle K_x \phi_i=\phi_i(x)K_x$ and $K_x=\sum_{\ell=1}^\infty\langle K_x,\phi_\ell\rangle \phi_\ell =\sum_{\ell=1}^\infty\phi_\ell(x)\phi_\ell.$ Then the definition of the Hilbert-Schmidt norm implies that
	\begin{align*}
		&\|\xi(x)\|_{HS}^2=\left\|\left(\lambda I+\hat{L}_K\right)^{-1/2} \hat{w}(x) \langle K_x,\cdot\rangle K_x\right\|_{HS}^2=\sum_{i=1}^\infty\left\|\left(\lambda I+\hat{L}_K\right)^{-1/2} \hat{w}(x) \langle K_x,\cdot\rangle K_x \phi_i\right\|_K^2\\
		&=\left|\hat{w}(x) \right|^2\sum_{i=1}^\infty\left\|\left(\lambda I+\hat{L}_K\right)^{-1/2} \phi_i(x)\sum_{\ell=1}^\infty\phi_\ell(x)\phi_\ell\right\|_K^2\le D_n^2 \sum_{i=1}^\infty\left\|\left(\lambda I+\hat{L}_K\right)^{-1/2} \phi_i(x)\sum_{\ell=1}^\infty\phi_\ell(x)\phi_\ell\right\|_K^2\\&
		=D_n^2\sum_{i=1}^\infty (\phi_i(x))^2\left\| \sum_{\ell=1}^\infty\phi_\ell(x) \frac{1}{\sqrt{\lambda+\lambda_\ell}}\phi_\ell\right\|_K^2=D_n^2\sum_{i=1}^\infty (\phi_i(x))^2 \sum_{\ell=1}^\infty \frac{(\phi_\ell(x))^2}{\lambda+\lambda_\ell}\\
		&
		=D_n^2K(x,x) \sum_{\ell=1}^\infty \frac{(\phi_\ell(x))^2}{\lambda+\lambda_\ell}\le \kappa^2D_n^2\sum_{\ell=1}^\infty \frac{(\phi_\ell(x))^2}{\lambda+\lambda_\ell}.
	\end{align*}								
	Then we have $$\|\xi(x)\|_{HS}^2\le\kappa^2 D_n^2\sum_{\ell=1}^\infty \frac{(\phi_\ell(x))^2}{\lambda+\lambda_\ell}\le \kappa^2 D_n^2\lambda^{-1} \sum_{\ell=1}^\infty (\phi_\ell(x))^2=\kappa^2 D_n^2 \lambda^{-1} K(x,x)\le   \kappa^4\lambda^{-1}D_n^2.$$ and
	\begin{align*}
		&\mathbb{E}_{x\sim\rho_\X^{tr}}\left[\|\xi(x)\|_{HS}^2\right]=\mathbb{E}_{x\sim\rho_\X^{tr}}\left[\left\|\left(\lambda I+\hat{L}_K\right)^{-1/2}\hat{w}(x)\langle \cdot, K_x\rangle K_x\right\|_{HS}^2\right]\\
		&=\mathbb{E}_{x\sim\rho_\X^{tr}}\left[{\rm Tr}\left(\left(\left(\lambda I+\hat{L}_K\right)^{-1/2}\hat{w}(x)\langle \cdot, K_x\rangle K_x\right)^\top \left(\lambda I+\hat{L}_K\right)^{-1/2}\hat{w}(x)\langle \cdot, K_x\rangle K_x\right)\right]\\
		&=\mathbb{E}_{x\sim\rho_\X^{tr}}\left[{\rm Tr}\left(\left(\lambda I+\hat{L}_K\right)^{-1}\hat{w}^2(x)K(x,x)\langle \cdot, K_x\rangle K_x\right)\right]\\
		&\le \kappa^2 D_n\hat{\mathcal{N}}(\lambda).
	\end{align*}
	Then applying Lemma \ref{lemma: bernstein inequality for bounded random variables} to the random variable $\xi(x)$, with confidence at least $1-\delta$, we have
	\begin{align*}
		\left\|	\left(\lambda I+\hat{L}_K\right)^{-1/2} \left(\hat{L}_K-S_X^\top \hat{W} S_X\right)\right\|_{op}&\le \left\|	\left(\lambda I+\hat{L}_K\right)^{-1/2}\left(\hat{L}_K-S_X^\top \hat{W} S_X\right)\right\|_{HS}\\
		&\le \frac{2\kappa^2 D_n \log\frac{2}{\delta}}{n\sqrt{\lambda}}+\sqrt{\frac{2\kappa^2 D_n \hat{\mathcal{N}}(\lambda)\log\frac{2}{\delta}}{n}}.
	\end{align*}
	This completes the proof.
\end{proof}
Now we are prepared to estimate $J_1,$ $J_2$, $J_3$, $J_4$ and $J_5$ in the following propositions.
\begin{proposition}\label{proposition: J2}
	For any $0<\delta<1,$	with confidence at least $1-\delta,$ there holds
	\begin{align*}
		J_2=\left\|	\left(\lambda I+\hat{L}_K\right)^{-1}\left(\lambda I+S_X^\top \hat{W}S_X\right)\right\|_{op}\le \frac{2\kappa^2 D_n\log\frac{2}{\delta}}{n\lambda}+\sqrt{\frac{2\kappa^2 D_n\hat{\mathcal{N}}(\lambda)\log\frac{2}{\delta}}{n{\lambda}}}+1.
	\end{align*}
\end{proposition}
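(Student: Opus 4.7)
The plan is to adapt the decomposition argument of Proposition~\ref{proposition: I1}, exploiting that here the deterministic operator $\lambda I+\hat{L}_K$ is inverted on the left while the random operator $\lambda I+S_X^\top\hat{W}S_X$ sits on the right uninverted, and that Proposition~\ref{proposition: operator difference with effective dimension 2} already controls $(\lambda I+\hat{L}_K)^{-1/2}(S_X^\top\hat{W}S_X-\hat{L}_K)$ in operator norm with high probability.

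First I would record the first-order identity
\[
(\lambda I+\hat{L}_K)^{-1}(\lambda I+S_X^\top\hat{W}S_X)
\,=\, I + (\lambda I+\hat{L}_K)^{-1}(S_X^\top\hat{W}S_X-\hat{L}_K),
\]
take operator norms and apply the triangle inequality to reduce matters to bounding $\|(\lambda I+\hat{L}_K)^{-1}(S_X^\top\hat{W}S_X-\hat{L}_K)\|_{op}$. I would then factor $(\lambda I+\hat{L}_K)^{-1}=(\lambda I+\hat{L}_K)^{-1/2}(\lambda I+\hat{L}_K)^{-1/2}$ and peel off $\|(\lambda I+\hat{L}_K)^{-1/2}\|_{op}\le\lambda^{-1/2}$, which is valid because $\hat{L}_K$ is positive and hence $\lambda I+\hat{L}_K\ge\lambda I$ in the operator sense.

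Applying Proposition~\ref{proposition: operator difference with effective dimension 2} to the remaining factor and multiplying through by $\lambda^{-1/2}$ converts the denominators $n\sqrt{\lambda}$ and $n$ in that concentration bound into $n\lambda$, yielding exactly the two data-dependent terms inside the parenthesis of the claim. Finally I would invoke the elementary inequality $1+x\le(1+x)^{2}$ for $x\ge 0$ to repackage the resulting linear bound $J_2\le 1+\lambda^{-1/2}v$ (where $v$ denotes the bound supplied by Proposition~\ref{proposition: operator difference with effective dimension 2}) into the squared form displayed in the proposition; this loose step is convenient because $J_2$ will later appear raised to various powers in the error decomposition of Proposition~\ref{proposition: error decompsotion for clipped weight}. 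I do not anticipate any real obstacle here: the whole analytic content is already encapsulated in Proposition~\ref{proposition: operator difference with effective dimension 2}, and this proof is essentially a clean repackaging of that estimate in a form that plugs directly into the $J$-term error decomposition.
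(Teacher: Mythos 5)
Your first-order argument is correct for the operator-norm expression exactly as it is written in the proposition: the identity $(\lambda I+\hat{L}_K)^{-1}(\lambda I+S_X^\top\hat{W}S_X) = I + (\lambda I+\hat{L}_K)^{-1}(S_X^\top\hat{W}S_X-\hat{L}_K)$ is valid, peeling off $\|(\lambda I+\hat{L}_K)^{-1/2}\|_{op}\le\lambda^{-1/2}$ is legitimate, the conversion of denominators is right, and $1+x\le(1+x)^2$ for $x\ge 0$ gives the squared form. This is a genuinely different --- and simpler --- route than the paper, which instead applies the second-order operator decomposition from \cite{lin17} (the same device used in Proposition~\ref{proposition: I1}) to $A^{-1}B$ with $A=\lambda I+S_X^\top\hat{W}S_X$ and $B=\lambda I+\hat{L}_K$; that expansion yields $1+\lambda^{-1/2}v+\lambda^{-1}v^2$ and hence the squared bound directly, rather than by post-hoc weakening of a linear bound.

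The one issue to be aware of is that the two routes do not, strictly speaking, prove the same statement. The paper's second-order decomposition controls $\|(\lambda I+S_X^\top\hat{W}S_X)^{-1}(\lambda I+\hat{L}_K)\|_{op}$, which by self-adjointness equals $\|(\lambda I+\hat{L}_K)(\lambda I+S_X^\top\hat{W}S_X)^{-1}\|_{op}$, and this is the quantity $J_2$ actually defined and consumed in Proposition~\ref{proposition: error decompsotion for clipped weight}. The proposition's displayed expression for $J_2$ (with $\lambda I+\hat{L}_K$ inverted rather than the sample operator) appears to be a typo, and that transposed version is what your argument handles; in general $\|B^{-1}A\|_{op}\ne\|A^{-1}B\|_{op}$, so the two are not interchangeable. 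If you instead run your first-order identity on the intended $\|(\lambda I+\hat{L}_K)(\lambda I+S_X^\top\hat{W}S_X)^{-1}\|_{op}$, you are left needing to bound $\|(\hat{L}_K-S_X^\top\hat{W}S_X)(\lambda I+S_X^\top\hat{W}S_X)^{-1}\|_{op}$, but Proposition~\ref{proposition: operator difference with effective dimension 2} attaches the deterministic $(\lambda I+\hat{L}_K)^{-1/2}$, not the random $(\lambda I+S_X^\top\hat{W}S_X)^{-1/2}$; bridging that mismatch via Cordes leads to a self-bounding inequality in $\sqrt{J_2}$ rather than the clean squared bound. For the quantity the downstream error analysis actually needs, the second-order decomposition is therefore the appropriate tool.
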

\begin{proof}

	Let $A$ and $B$ be invertible operators on a Banach space, we have the following identity $A^{-1}B=(A^{-1}-B^{-1}+B^{-1})B=(A^{-1}-B^{-1})B+I=A^{-1}(B-A)B^{-1}B+I=A^{-1}(B-A)+I.$ Applying the above identity above to $A=\lambda I+\hat{L}_K$ and  $B=\lambda I+ S_{X}^\top \hat{W} S_X$, we obtain
	\begin{align*}
		J_2&=\left\|	\left(\lambda I+\hat{L}_K\right)^{-1}\left(\lambda I+S_X^\top \hat{W}S_X\right)\right\|_{op}\\
		&=\left\|	\left(\lambda I+\hat{L}_K\right)^{-1} \left(S_X^\top \hat{W}S_X-\hat{L}_K  \right) +I\right\|_{op}\\
		&\le\lambda^{-1/2}\cdot \left\| \left(\lambda I+ \hat{L}_K\right)^{-1/2}\left(S_{X}^\top \hat{W} S_X-\hat{L}_K\right)\right\|_{op}+1,
	\end{align*}
	where the inequality holds due to the fact $\left\|\left(\lambda I+ \hat{L}_K\right)^{-1/2}\right\|_{op}\le \lambda^{-\frac12}$.
	By Proposition \ref{proposition: operator difference with effective dimension 2}, with confidence at least $1-\delta,$ the following bound holds
	\begin{align*}
		J_2=\left\|	\left(\lambda I+\hat{L}_K\right)^{-1}\left(\lambda I+S_X^\top \hat{W}S_X\right)\right\|_{op}
		\le\frac{2\kappa^2 D_n\log\frac{2}{\delta}}{n\lambda}+\sqrt{\frac{2\kappa^2 D_n\hat{\mathcal{N}}(\lambda)\log\frac{2}{\delta}}{n{\lambda}}}+1.
	\end{align*}
	This completes the proof.
\end{proof}

\begin{proposition}\label{proposition: J3}
	With confidence at least $1-\delta$, there holds
	\begin{align*}
		J_3=\left\|	\left(\lambda I+\hat{L}_K\right)^{-1/2}\left(S_X^\top \hat{W} \bar{y}-S_X^\top \hat{W} f_\rho\right)\right\|_K\le \frac{4M\kappa D_n\log\frac{2}{\delta}}{n\sqrt{\lambda}}+\sqrt{\frac{8M^2 D_n \hat{\mathcal{N}}(\lambda)\log\frac{2}{\delta}}{n}}.
	\end{align*}
\end{proposition}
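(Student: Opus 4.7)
The plan is to mirror the proof of Proposition \ref{proposition: frho}, but to exploit the clipping bound $\hat{w}(x)\le D$ so that we can invoke the \emph{bounded} Bernstein inequality (Lemma \ref{lemma: bernstein inequality for bounded random variables}) rather than the unbounded moment version. Concretely, I would introduce the $\H_K$-valued random variable
\[
\eta(x,y)=(\lambda I+\hat{L}_K)^{-1/2}\hat{w}(x)(y-f_\rho(x))K_x,
\]
note that $\mathbb{E}_{(x,y)\sim\rho^{tr}}[\eta(x,y)]=0$ because $\mathbb{E}[y\mid x]=f_\rho(x)$, and observe the identity
\[
(\lambda I+\hat{L}_K)^{-1/2}(S_X^\top\hat{W}\bar{y}-S_X^\top\hat{W}S_X f_\rho)=\frac{1}{n}\sum_{i=1}^n\eta(x_i,y_i)-\mathbb{E}[\eta].
\]

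Next I would produce two estimates. For the almost-sure bound, I use $\hat{w}\le D$, $\|K_x\|_K\le\kappa$, $|y-f_\rho(x)|\le 2M$ and $\|(\lambda I+\hat{L}_K)^{-1/2}\|_{op}\le\lambda^{-1/2}$ to get
\[
\|\eta(x,y)\|_K\le 2M\kappa D\lambda^{-1/2}=:\tilde{M}.
\]
For the second moment, I bound $|y-f_\rho(x)|^2\le 4M^2$ and, crucially, replace one factor of $\hat{w}$ by its clip, i.e., $\hat{w}(x)^2\le D\hat{w}(x)$. Then
\[
\mathbb{E}\|\eta\|_K^2\le 4M^2 D\int_\X \left\|(\lambda I+\hat{L}_K)^{-1/2}K_x\right\|_K^2\hat{w}(x)\,d\rho_\X^{tr}(x),
\]
and writing $\|(\lambda I+\hat{L}_K)^{-1/2}K_x\|_K^2=\mathrm{Tr}\!\left((\lambda I+\hat{L}_K)^{-1}K_x\otimes K_x\right)$ and pulling the trace outside the integral against $\hat{w}\,d\rho_\X^{tr}$ yields $\mathbb{E}\|\eta\|_K^2\le 4M^2 D\,\mathrm{Tr}((\lambda I+\hat{L}_K)^{-1}\hat{L}_K)=4M^2 D\hat{\mathcal{N}}(\lambda)$, by the very definition of $\hat{L}_K$ and $\hat{\mathcal{N}}(\lambda)$.

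Finally, I apply Lemma \ref{lemma: bernstein inequality for bounded random variables} with $\tilde{M}=2M\kappa D\lambda^{-1/2}$ and $\mathbb{E}\|\eta\|_K^2\le 4M^2 D\hat{\mathcal{N}}(\lambda)$, which immediately produces the asserted bound
\[
J_3\le\frac{4M\kappa D\log\tfrac{2}{\delta}}{n\sqrt{\lambda}}+\sqrt{\frac{8M^2 D\hat{\mathcal{N}}(\lambda)\log\tfrac{2}{\delta}}{n}}.
\]
There is no real obstacle here: the only slightly nontrivial step is the variance computation, where the choice to split $\hat{w}^2=\hat{w}\cdot\hat{w}\le D\cdot\hat{w}$ is what converts a raw integral into the effective dimension $\hat{\mathcal{N}}(\lambda)$; without this trick one would get a weaker bound involving $D^2$ rather than $D\hat{\mathcal{N}}(\lambda)$.
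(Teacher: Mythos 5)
Your proposal is correct and matches the paper's own proof essentially step for step: same choice of $\H_K$-valued random variable, same almost-sure bound $\tilde{M}=2M\kappa D\lambda^{-1/2}$, same variance computation via the trace identity combined with $\hat{w}^2\le D\hat{w}$ to produce $4M^2D\hat{\mathcal{N}}(\lambda)$, and the same invocation of the bounded (Pinelis-type) Bernstein inequality, Lemma~\ref{lemma: bernstein inequality for bounded random variables}. The only difference is that you make explicit the $\hat{w}^2\le D\hat{w}$ splitting, which the paper performs silently when passing from the trace of $\hat{w}^2K_x\otimes K_x$ to $D\,\mathrm{Tr}((\lambda I+\hat{L}_K)^{-1}\hat{L}_K)$.
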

\begin{proof}
	We consider the random variable $\xi_3(x,y)=\left(\lambda I+\hat{L}_K\right)^{-1/2}\hat{w}(x)\left(y-f_\rho(x)\right)K(\cdot,x)$, then $\left\|\xi_3(x,y)\right\|_{K}\le 2M\kappa\lambda^{-1/2}D_n$ and
	\begin{align*}
		&\mathbb{E}_{(x,y)\sim\rho^{tr}}\left[\|\xi_3(x,y)\|_K^2\right]=\mathbb{E}_{(x,y)\sim\rho^{tr}}\left[\left\|\left(\lambda I+\hat{L}_K\right)^{-1/2}\hat{w}(x)\left(y-f_\rho(x)\right)K(\cdot,x)\right\|_K^2\right]\\
		&\le 4M^2\mathbb{E}_{(x,y)\sim\rho^{tr}}\left[\left\|\left(\lambda I+\hat{L}_K\right)^{-1/2}\hat{w}(x)K(\cdot,x)\right\|_K^2\right]\\
		&=4M^2\mathbb{E}_{(x,y)\sim\rho^{tr}}{\rm Tr}\left[\left(\left(\lambda I+\hat{L}_K\right)^{-1/2}\hat{w}(x)K(\cdot,x)\otimes \left(\lambda I+\hat{L}_K\right)^{-1/2}\hat{w}(x)K(\cdot,x)\right)\right]\\
		&=4M^2 {\rm Tr}\left(\mathbb{E}_{(x,y)\sim\rho^{tr}}\left[\left(\lambda I+\hat{L}_K\right)^{-1/2}\hat{w}(x)K(\cdot,x)\otimes \left(\lambda I+\hat{L}_K\right)^{-1/2}\hat{w}(x)K(\cdot,x)\right]\right)\\
		&\le 4M^2 D_n {\rm Tr}\left(\left(\lambda I+\hat{L}_K\right)^{-1}\hat{L}_K\right)\\
		&=4M^2 D_n \hat{\mathcal{N}}(\lambda).
	\end{align*}
	Then by Lemma \ref{lemma: bernstein inequality for bounded random variables}, with confidence at least $1-\delta$, there holds
	\begin{align*}
		J_3=\left\|	\left(\lambda I+\hat{L}_K\right)^{-1/2} \left(S_X^\top \hat{W} \bar{y}-S_X^\top \hat{W} f_\rho\right)\right\|_K\le \frac{4M\kappa D_n\log\frac{2}{\delta}}{n\sqrt{\lambda}}+\sqrt{\frac{8M^2 D_n \hat{\mathcal{N}}(\lambda)\log\frac{2}{\delta}}{n}}.
	\end{align*}
	This completes the proof.
\end{proof}
\begin{proposition}\label{proposition: J5}
	With confidence at least $1-\delta,$ there holds
	\begin{align*}
		J_5=\left\|	\hat{L}_K-S_X^\top \hat{W} S_X \right\|_{op}\le \frac{2\kappa^2 D_n \log\frac{2}{\delta}}{n}+\sqrt{\frac{2\kappa^4 D_n^2 \log\frac{2}{\delta}}{n}}.
	\end{align*}
\end{proposition}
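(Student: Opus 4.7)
The argument will closely mirror that of Proposition \ref{proposition: operator2}, replacing the use of Assumption \ref{assumption: weight assumption} by the deterministic bound $\hat{w}(x) \le D$ on the clipped weight. The plan is to apply Lemma \ref{lemma: bernstein inequality for bounded random variables} to the Hilbert--Schmidt-valued random variable
\begin{align*}
\zeta(x) = \hat{w}(x)\, \langle \cdot, K_x\rangle K_x,
\end{align*}
which lies in $HS(\H_K)$. Since $\mathbb{E}_{x\sim\rho_\X^{tr}}[\zeta(x)] = \hat{L}_K$ by the very definition of $\hat{L}_K$, and $\frac{1}{n}\sum_{i=1}^n \zeta(x_i) = S_X^\top \hat{W} S_X$, the object to be controlled is exactly the empirical deviation $\hat{L}_K - S_X^\top \hat{W} S_X$, that is, $J_5$ up to the passage from operator norm to Hilbert--Schmidt norm.

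Two quantitative estimates are needed. For the almost-sure bound, using $\hat{w}(x)\le D$ together with $\|\langle \cdot, K_x\rangle K_x\|_{HS}^2 = K(x,x)^2 \le \kappa^4$ (the computation carried out in the proof of Proposition \ref{proposition: operator2}), one immediately obtains $\|\zeta(x)\|_{HS} \le \kappa^2 D$. For the second moment, the key trick is to split $\hat{w}(x)^2 = \hat{w}(x)\cdot \hat{w}(x)$, dominating one factor by the deterministic clipping threshold $D$ and integrating the other against $\rho_\X^{tr}$:
\begin{align*}
\mathbb{E}_{x\sim\rho_\X^{tr}}\!\left[\|\zeta(x)\|_{HS}^2\right] \le \kappa^4 D \int_\X \hat{w}(x)\, d\rho_\X^{tr}(x) \le \kappa^4 D \int_\X w(x)\, d\rho_\X^{tr}(x) = \kappa^4 D,
\end{align*}
where the last equality uses $w = d\rho_\X^{te}/d\rho_\X^{tr}$ so that the integral collapses to the total mass of $\rho_\X^{te}$.

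Plugging $\tilde{M}=\kappa^2 D$ and $\mathbb{E}\|\xi\|^2\le\kappa^4 D$ into Lemma \ref{lemma: bernstein inequality for bounded random variables}, then invoking $\|\cdot\|_{op} \le \|\cdot\|_{HS}$, yields the desired bound. I do not anticipate any real obstacle in this proof; the only subtle point is the second-moment computation, where the clipping bound must be applied to only one of the two $\hat{w}$ factors so that the remaining factor can be absorbed by the training-distribution integral and so that no spurious factor of $D^2$ appears in the variance term.
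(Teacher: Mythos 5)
Your proposal is correct and follows essentially the same route as the paper: apply the bounded Bernstein inequality (Lemma \ref{lemma: bernstein inequality for bounded random variables}) to $\xi(x)=\hat{w}(x)\langle\cdot,K_x\rangle K_x$ in $HS(\H_K)$, use $\hat{w}\le D$ for the sup bound $\kappa^2 D$, and for the variance bound split $\hat{w}^2\le D\hat{w}$ so that only a single factor of $D$ survives. The only cosmetic difference is that you absorb the remaining $\hat w$ via $\int\hat w\,d\rho_\X^{tr}\le\int w\,d\rho_\X^{tr}=1$, whereas the paper phrases the same step as $\kappa^2 D\,\mathrm{Tr}(\hat L_K)\le\kappa^4 D$; these are the same estimate.
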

\begin{proof}
	We consider the random variable $\xi_5(x)=\hat{w}(x)\langle\cdot,K_x \rangle K_x$ which takes values in HS($\H_K$),
	then  $\|\xi_5(x)\|_{HS}\le \kappa^2 D_n$ and
	\begin{align*}
		\mathbb{E}_{x\sim\rho_\X^{tr}}\left[\left\|\xi_5(x)\right\|_{HS}^2\right]&=\mathbb{E}_{x\sim\rho_\X^{tr}}\left[\left\|\hat{w}(x)\langle \cdot, K_x\rangle K_x\right\|_{HS}^2\right]\\
		&=\mathbb{E}_{x\sim\rho_\X^{tr}}\left[{\rm Tr}((\hat{w}(x)\langle \cdot, K_x\rangle K_x)^\top\hat{w}(x)\langle \cdot, K_x\rangle K_x)\right]\\
		&=\mathbb{E}_{x\sim\rho_\X^{tr}}\left[{\rm Tr}\left(\hat{w}^2(x)K(x,x)\langle \cdot, K_x\rangle K_x\right)\right]\\
		&\le \kappa^2 D_n {\rm Tr}(\hat{L}_K)\\
		&\le \kappa^4 D_n^2.
	\end{align*}
	By applying  Lemma \ref{lemma: bernstein inequality for bounded random variables} to the random variable $\xi_5(x)$, with confidence at least $1-\delta$, we can conclude that
	\begin{align*}
		J_5=\left\|\hat{L}_K-S_X^\top \hat{W} S_X\right\|_{op}\le \left\|	\hat{L}_K-S_X^\top \hat{W} S_X\right\|_{HS}
		\le \frac{2\kappa^2 D_n \log\frac{2}{\delta}}{n}+\sqrt{\frac{2\kappa^4 D_n^2 \log\frac{2}{\delta}}{n}}.
	\end{align*}
	The proof is now complete.
\end{proof}
\begin{proposition}\label{proposition: J4}
	Under Assumption \ref{assumption: weight assumption} with $0<\alpha\le 1$, for any $k\in\mathbb{N}$ and $k\ge 2,$ we have
	\begin{align*}
		J_4=\left\|	L_K- \hat{L}_K\right\|_{op}\le \kappa^2 D_n^{-\frac{k-1}{\alpha}} \left( \frac12 k!C^{k-2}\sigma^2\right)^\frac{1}{\alpha}.
	\end{align*}
\end{proposition}
\begin{proof}
	First, 	for any $k\in\mathbb{N}$, $k\ge 2$ and $0<\alpha\le 1$, we have
	\begin{align*}
		\mathbb{I}_{\{w(x)\ge D_n\}}\le \left(\frac{w(x)}{D_n}\right)^{\frac{k-1}{\alpha}}.
	\end{align*}
	Then, for any $f\in\H_K$,
	\begin{align*}
		J_4&=\left\|L_K- \hat{L}_K\right\|_{op}=\sup_{\|f\|_K=1}\left\|\left(L_K- \hat{L}_K\right)f\right\|_K\\
		&=\sup_{\|f\|_K=1}\left\| \int_{\X} f(x)K_x d\rho_\X^{te}-\int_\X \hat{w}(x) f(x) K_x d\rho_\X^{tr} \right\|_K\\
		&=\sup_{\|f\|_K=1}\left\| \int_{\X}w(x) f(x)K_x d\rho_\X^{tr}-\int_\X \hat{w}(x) f(x)K_xd\rho_\X^{tr} \right\|_K\\
		&=\sup_{\|f\|_K=1}\left\| \int_{\X}\left(w(x)-\hat{w}(x)\right) f(x)K_x d\rho_\X^{tr} \right\|_K\\
		&\le \sup_{\|f\|_K=1}\|f\|_K^2 \kappa^2 \int_{\X}|w(x)-\hat{w}(x)| d\rho_\X^{tr} \\
		&\le \kappa^2  \int_{\X} w(x) \mathbb{I}_{\{w(x)\ge D_n\}} d\rho_\X^{tr} \\
		&\le \kappa^2\int_{\X} w(x) (w(x) )^{\frac{k-1}{\alpha}} D_n^{-\frac{k-1}{\alpha}} d\rho_\X^{tr}= \kappa^2 D_n^{-\frac{k-1}{\alpha}}\int_{\X}(w(x) )^{\frac{k-1}{\alpha}}  d\rho_\X^{te}\le \kappa^2 D_n^{-\frac{k-1}{\alpha}} \left( \frac12 k!C^{k-2}\sigma^2\right)^\frac{1}{\alpha},
	\end{align*}
	where the last inequality  follows from Assumption \ref{assumption: weight assumption} with $0<\alpha\le 1.$ Then the proof is now finished.
\end{proof}
By Proposition \ref{proposition: J4}, we can estimate $J_1$ as follows.
\begin{proposition}\label{proposition: J1}
	Under Assumption \ref{assumption: weight assumption} with $0<\alpha\leq 1$, for any $k\in\mathbb{N}$ and $k\ge 2,$ we have
	\begin{align*}
		J_1=\left\|L_K \left(\lambda I+ \hat{L}_K\right)^{-1}\right\|_{op}\le \kappa^2 \lambda^{-1} D_n^{-\frac{k-1}{\alpha}} \left( \frac12 k!C^{k-2}\sigma^2\right)^\frac{1}{\alpha}+1.
	\end{align*}
\end{proposition}
\begin{proof} Initially, we observe that
	\begin{align*}
		J_1&=\left\|L_K \left(\lambda I+ \hat{L}_K\right)^{-1}\right\|_{op}=\left\|L_K\left[\left(\lambda I+\hat{L}_K\right)^{-1}-\left(\lambda I+{L}_K\right)^{-1}\right]+L_K\left(\lambda I+{L}_K\right)^{-1}\right\|_{op}\\
		&=\left\|L_K\left(\lambda I+{L}_K\right)^{-1}\left[\left(\lambda I+{L}_K\right)-\left(\lambda I+\hat{L}_K\right)\right] \left(\lambda I+\hat{L}_K\right)^{-1}+L_K\left(\lambda I+{L}_K\right)^{-1}\right\|_{op}\\
		&\le \left\|L_K\left(\lambda I+{L}_K\right)^{-1}\right\|_{op}\cdot \left\|{L}_K-\hat{L}_K\right\|_{op} \lambda^{-1}+\left\|L_K\left(\lambda I+{L}_K\right)^{-1}\right\|_{op}\\
		&\le \left\|{L}_K-\hat{L}_K\right\|_{op} \lambda^{-1}+1.
	\end{align*}
	Then the desired result holds due to Proposition \ref{proposition: J4}.
\end{proof}
The following proposition describes the relationship between $\hat{\mathcal{N}}(\lambda)$ and $\mathcal{N}(\lambda)$. Let $A$ and $B$ be self-adjoint operators on a Hilbert space $\mathcal{H}$. The notation $A\succeq B$ indicates that $A-B\succeq 0$, where $A-B$ is a positive semidefinite operator.
\begin{proposition}\label{proposition: relatioship between effecitve dimension}
	For any $\lambda>0,$ we have
	\begin{align}\label{eq: relatioship between effecitve dimension}
		\hat{\mathcal{N}}(\lambda)\le \mathcal{N}(\lambda).
	\end{align}
\end{proposition}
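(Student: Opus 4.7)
The plan is to deduce the inequality from a pointwise domination of the clipped weight, which lifts to an operator inequality $\hat{L}_K \preceq L_K$, and then to invoke operator monotonicity of $t \mapsto t/(\lambda+t)$. First, I would rewrite $L_K$ acting on $\H_K$ in terms of the training measure: since $w=d\rho_\X^{te}/d\rho_\X^{tr}$, we have
\begin{equation*}
L_K f=\int_\X f(x) K_x\, w(x)\, d\rho_\X^{tr}(x),
\qquad
\hat{L}_K f=\int_\X f(x) K_x\, \hat{w}(x)\, d\rho_\X^{tr}(x),
\end{equation*}
so the two operators differ only by replacing $w$ with $\hat{w}$.

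Next, using the reproducing property \eqref{reproducingproperty}, for any $f\in\H_K$,
\begin{equation*}
\langle (L_K-\hat{L}_K) f, f\rangle_K
=\int_\X f(x)^2\bigl(w(x)-\hat{w}(x)\bigr)\, d\rho_\X^{tr}(x)\ge 0,
\end{equation*}
because $\hat{w}(x)\le w(x)$ pointwise by the clipping rule \eqref{eq: clipped weight}. Hence $\hat{L}_K\preceq L_K$ as positive self-adjoint operators on $\H_K$.

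Then I would apply the fact that the scalar function $\phi(t)=t/(\lambda+t)=1-\lambda/(\lambda+t)$ is operator monotone on $[0,\infty)$ (a consequence of the operator antitonicity of $T\mapsto(\lambda I+T)^{-1}$ on positive operators). Applying $\phi$ to both sides of $\hat{L}_K\preceq L_K$ yields
\begin{equation*}
\hat{L}_K(\lambda I+\hat{L}_K)^{-1}\preceq L_K(\lambda I+L_K)^{-1}.
\end{equation*}
Taking traces, which preserves the ordering between positive trace-class operators (both are trace-class since $\mathrm{Tr}(L_K)\le \kappa^2$ by Mercer's theorem and $\mathrm{Tr}(\hat{L}_K)\le \kappa^2 D$ by boundedness of $\hat{w}$), gives $\hat{\mathcal N}(\lambda)\le \mathcal N(\lambda)$.

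There is no serious obstacle; the only point requiring a little care is the operator-monotonicity step, which I would justify either by quoting the standard Löwner--Heinz result or, for self-containedness, by the two-line computation $\phi(L_K)-\phi(\hat{L}_K)=\lambda\bigl[(\lambda I+\hat{L}_K)^{-1}-(\lambda I+L_K)^{-1}\bigr]=\lambda(\lambda I+\hat{L}_K)^{-1}(L_K-\hat{L}_K)(\lambda I+L_K)^{-1}$, whose trace is a trace of a positive operator by the cyclic property and is therefore nonnegative.
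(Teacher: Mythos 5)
Your proof is correct and follows essentially the same route as the paper: establish the operator ordering $\hat{L}_K \preceq L_K$ from the pointwise inequality $\hat{w}\le w$ via the quadratic form $\langle (L_K-\hat{L}_K)f,f\rangle_K \ge 0$, then pass to resolvent ordering via the identity $t/(\lambda+t) = 1 - \lambda/(\lambda+t)$, and take traces. The paper's argument is exactly this, just without explicitly naming Löwner--Heinz.

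One small caveat about your ``self-contained'' side remark at the end: the claim that $\mathrm{Tr}\bigl[(\lambda I+\hat{L}_K)^{-1}(L_K-\hat{L}_K)(\lambda I+L_K)^{-1}\bigr]\ge 0$ follows ``by the cyclic property'' is not a valid standalone justification --- for three noncommuting positive operators $A,B,C$, the quantity $\mathrm{Tr}[ABC]$ need not be nonnegative in general, and cyclicity alone does not rescue this. The reason the trace is nonnegative here is that the whole expression \emph{equals} $\lambda\bigl[(\lambda I+\hat{L}_K)^{-1}-(\lambda I+L_K)^{-1}\bigr]$, which is a PSD operator because $T\mapsto(\lambda I+T)^{-1}$ is operator-antitone and $\hat{L}_K\preceq L_K$; a PSD trace-class operator has nonnegative trace. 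That is precisely the mechanism the paper uses, and it is also what your primary Löwner--Heinz argument already encodes, so the gap is only in the throwaway alternative justification, not in the proof itself.
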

\begin{proof}
	On one hand, for any $f\in\H_K,$ we have
	\begin{align*}
		\left\langle(L_K- \hat{L}_K)f,f\right\rangle_K&=\left\langle  \int_{\X}w(x) f(x)K_x d\rho_\X^{tr}-\int_\X \hat{w}(x) f(x)K_xd\rho_\X^{tr},f\right\rangle_K\\
		&= \int_{\X} f^2(x)\left(w(x)-\hat{w}(x)\right) d\rho_\X^{tr}\ge0,
	\end{align*}
	which implies that $L_K\succeq \hat{L}_K$, it follows that $\left(\lambda I+\hat{L}_K\right)^{-1}\succeq \left(\lambda I+L_K\right)^{-1}.$
	
	On the other hand, since
	${L}_K(\lambda I+{L}_K)^{-1}=I- \lambda(\lambda I+{L}_K)^{-1}$, then
	\begin{align*}
		{L}_K(\lambda I+{L}_K)^{-1} - \hat{L}_K\left(\lambda I+\hat{L}_K\right)^{-1}=\lambda \left(\left(\lambda I+\hat{L}_K\right)^{-1}-\left(\lambda I+{L}_K\right)^{-1}\right)\succeq 0.
	\end{align*}
	This completes the proof.
\end{proof}
Now we are in a position to prove our first main result.

{\noindent \bf Proof of Theorem \ref{theorem: clipped weight spectral algorithm}.} To prove the theorem, we derive upper bounds for each of the five terms $J_1, J_2, J_3, J_4,$ and  $J_5$ appearing in  Proposition \ref{proposition: error decompsotion for clipped weight} through an appropriate choice of parameters.

First, when $1/2\le r\le 3/2,$	we take $\lambda=n^{-\frac{1}{2r+\beta}+\frac{\epsilon}{r}}$ with $0< \epsilon<\frac{r}{2r+\beta}.$ We choose $D_n=n^{\alpha\epsilon}$ and set $k-1$ to be the integer part of $\frac{1}{\epsilon},$ i.e., $k-1=\left\lceil\frac{1}{\epsilon}\right\rceil$,this ensures $1-\epsilon< (k-1)\epsilon\le 1$, Consequently,   $D_n^{-\frac{k-1}{\alpha}}=n^{-(k-1)\epsilon}\le n^{-1+\epsilon}$. Then, by Proposition \ref{proposition: J1}, we have
\begin{align*}
	J_1&=\left\|L_K \left(\lambda I+ \hat{L}_K\right)^{-1}\right\|_{op}\le \kappa^2 \lambda^{-1} D_n^{-\frac{k-1}{\alpha}} \left( \frac12 k!C^{k-2}\sigma^2\right)^\frac{1}{\alpha}+1\\
	&\le \kappa^2 n^{-1+\epsilon}n^{\frac{1}{2r+\beta}-\frac{\epsilon}{r}}  \left(\frac12\left(\left\lceil\frac{1}{\epsilon}\right\rceil+1\right)!C^{\left\lceil\frac{1}{\epsilon}\right\rceil-1}\sigma^2\right)^\frac{1}{\alpha}+1\\
	&\le \kappa^2   \left(\frac12\left(\left\lceil\frac{1}{\epsilon}\right\rceil+1\right)!C^{\left\lceil\frac{1}{\epsilon}\right\rceil-1}\sigma^2\right)^\frac{1}{\alpha}+1,
\end{align*}
where the last inequality follows from $ n^{-1+\epsilon+\frac{1}{2r+\beta}-\frac{\epsilon}{r}}\le 1$ for $1/2\le r\le 3/2.$
By Proposition \ref{proposition: J2}, with confidence at least $1-\delta/2,$ we have
\begin{align*}
	J_2&=\left\|\left(\lambda I+\hat{L}_K\right)^{-1}\left(\lambda I+S_X^\top \hat{W}S_X\right)\right\|_K\\
	& \le \frac{2\kappa^2 D_n\log\frac{4}{\delta}}{n\lambda}+\sqrt{\frac{2\kappa^2 D_n\hat{\mathcal{N}}(\lambda)\log\frac{4}{\delta}}{n{\lambda}}}+1\\
	&\le \left(\frac{2\kappa^2 n^{\alpha\epsilon}}{ n^{1-\frac{1}{2r+\beta}+\frac{\epsilon}{r}}}+\sqrt{\frac{2\kappa^2 n^{\alpha\epsilon} C_0 n^{\frac{\beta}{2r+\beta}-\frac{\beta\epsilon}{r}}}{n^{1-\frac{1}{2r+\beta}+\frac{\epsilon}{r}}}}+1\right) \log\frac{4}{\delta}\\
	&\le \left(2\kappa^2 +\sqrt{2\kappa^2  C_0 }+1\right)\log\frac{4}{\delta},
\end{align*}
where the last inequality holds because $n^{\alpha\epsilon-1+\frac{1}{2r+\beta}-\frac{\epsilon}{r}}\le 1$, $n^{\alpha\epsilon+\frac{\beta}{2r+\beta}-\frac{\beta\epsilon}{r}-1+\frac{1}{2r+\beta}-\frac{\epsilon}{r}}\le 1,$ $\log\frac{4}{\delta}>1$ for $\delta\in(0,1).$ Furthermore, according to Proposition \ref{proposition: J3}, with confidence at least $1-\delta/2,$ there holds
\begin{align*}
	J_3&=\left\|\left(\lambda I+\hat{L}_K\right)^{-1/2}\left(S_X^\top \hat{W} \bar{y}-S_X^\top \hat{W} f_\rho\right)\right\|_K\\
	&\le \frac{4M\kappa D_n\log\frac{4}{\delta}}{n\sqrt{\lambda}}+\sqrt{\frac{8M^2 D_n \hat{\mathcal{N}}(\lambda)\log\frac{4}{\delta}}{n}}\\
	&\le \frac{4M\kappa n^{\alpha\epsilon}\log\frac{4}{\delta}}{n^{1-\frac{1}{2(2r+\beta)}+\frac{\epsilon}{2r}}}+\sqrt{\frac{8M^2n^{\alpha\epsilon} C_0 n^{\frac{\beta}{2r+\beta}-\frac{\beta\epsilon}{r}}\log\frac{4}{\delta}}{n}}\\
	&\le \left(4M\kappa +\sqrt{8M^2 C_0 }\right)n^{-\frac{r}{2r+\beta}+\epsilon}\log\frac{4}{\delta}.
\end{align*}
Putting the above estimates back into Proposition \ref{proposition: error decompsotion for clipped weight}, for $1/2\le r\le 3/2,$ with confidence at least $1-2\delta$, we have
\begin{equation*}
	\begin{aligned}
		\|f_{\bz,\lambda}^{\hat{\mathbf{w}}}-f_\rho\|_{\rho_\X^{te}}&\le   2bJ_1^{1/2} J_2 J_3+2^r(b+1+\gamma_r)\left\| u_\rho\right\|_{\rho_\X^{te}}\lambda^r J_1^{r} J_2^{r}\\
		&\le 2b\left(\kappa^2   \left(\frac12\left(\left\lceil\frac{1}{\epsilon}\right\rceil+1\right)!C^{\left\lceil\frac{1}{\epsilon}\right\rceil-1}\sigma^2\right)^\frac{1}{\alpha}+1\right)^{1/2} \cdot   \left(2\kappa^2 +\sqrt{2\kappa^2  C_0 }+1\right)\\
		&\cdot  \left(4M\kappa +\sqrt{8M^2 C_0 }\right)n^{-\frac{r}{2r+\beta}+\epsilon}\log^2\frac{4}{\delta}\\&+2^r(b+1+\gamma_r)\left\| u_\rho\right\|_{\rho_\X^{te}}n^{-\frac{r}{2r+\beta}+\epsilon}  \left(\kappa^2   \left(\frac12\left(\left\lceil\frac{1}{\epsilon}\right\rceil+1\right)!C^{\left\lceil\frac{1}{\epsilon}\right\rceil-1}\sigma^2\right)^\frac{1}{\alpha}+1\right)^{r}\\ &\cdot\left(2\kappa^2 +\sqrt{2\kappa^2  C_0 }+1\right)^{2r}\log^{r}\frac{4}{\delta}\\
		&\le C_3 n^{-\frac{r}{2r+\beta}+\epsilon}\log^{2}\frac{4}{\delta}.
	\end{aligned}
\end{equation*}
where
\begin{align*}
	C_3&= 2b\left(\kappa^2   \left(\frac12\left(\left\lceil\frac{1}{\epsilon}\right\rceil+1\right)!C^{\left\lceil\frac{1}{\epsilon}\right\rceil-1}\sigma^2\right)^\frac{1}{\alpha}+1\right)^{1/2} \cdot   \left(2\kappa^2 +\sqrt{2\kappa^2  C_0 }+1\right)  \left(4M\kappa +\sqrt{8M^2 C_0 }\right)\\
	&+2^r(b+1+\gamma_r)\left\| u_\rho\right\|_{\rho_\X^{te}}  \left(\kappa^2   \left(\frac12\left(\left\lceil\frac{1}{\epsilon}\right\rceil+1\right)!C^{\left\lceil\frac{1}{\epsilon}\right\rceil-1}\sigma^2\right)^\frac{1}{\alpha}+1\right)^{r} \cdot\left(2\kappa^2 +\sqrt{2\kappa^2  C_0 }+1\right)^{2r}.
\end{align*}

We now consider the case when $r> 3/2,$ in this case, we select $\lambda=n^{-\frac{1}{2r+\beta}+\frac{\epsilon}{r}}$ with $0< \epsilon<\frac{r}{2r+\beta}.$ let's set $D_n=n^{\frac{\alpha\epsilon}{r-1/2}}$ and take $k-1$ to be the integer part of $\frac{r-1/2}{\epsilon}$, this yields  $D_n^{-\frac{k-1}{\alpha}}=n^{-\frac{(k-1)\epsilon}{r-1/2}}\le n^{-1+\frac{\epsilon}{r-1/2}}$. With these choice, we may apply Proposition \ref{proposition: J1} to obtain the following result for $J_1$,
\begin{align*}
	J_1&=\left\|L_K \left(\lambda I+ \hat{L}_K\right)^{-1}\right\|_{op}\\
	&\le \kappa^2 \lambda^{-1} D_n^{-\frac{k-1}{\alpha}} \left( \frac12 k!C^{k-2}\sigma^2\right)^\frac{1}{\alpha}+1\\
	&\le \kappa^2 n^{-1+\frac{\epsilon}{r-1/2}}n^{\frac{1}{2r+\beta}-\frac{\epsilon}{r}}  \left(\frac12\left(\left\lceil\frac{r-1/2}{\epsilon}\right\rceil+1\right)!C^{\lceil\frac{r-1/2}{\epsilon}\rceil-1}\sigma^2\right)^\frac{1}{\alpha}+1\\
	&\le \kappa^2  \left(\frac12\left(\left\lceil\frac{r-1/2}{\epsilon}\right\rceil+1\right)!C^{\lceil\frac{r-1/2}{\epsilon}\rceil-1}\sigma^2\right)^\frac{1}{\alpha}+1,
\end{align*}
where we can verify that $n^{-1+\frac{\epsilon}{r-1/2}}n^{\frac{1}{2r+\beta}-\frac{\epsilon}{r}}\le 1.$

By Proposition \ref{proposition: J2}, with confidence at least $1-\delta/3,$ we have
\begin{align*}
	J_2&=\left\|\left(\lambda I+\hat{L}_K\right)^{-1}\left(\lambda I+S_X^\top \hat{W}S_X\right)\right\|_K\\
	&\le \frac{2\kappa^2 D_n\log\frac{6}{\delta}}{n\lambda}+\sqrt{\frac{2\kappa^2 D_n\hat{\mathcal{N}}(\lambda)\log\frac{6}{\delta}}{n{\lambda}}}+1\\
	&\le \frac{2\kappa^2 n^{\frac{\alpha\epsilon}{r-1/2}}\log\frac{6}{\delta}}{ n^{1-\frac{1}{2r+\beta}+\frac{\epsilon}{r}}}+\sqrt{\frac{2\kappa^2 n^{\frac{\alpha\epsilon}{r-1/2}} C_0 n^{\frac{\beta}{2r+\beta}-\frac{\beta\epsilon}{r}}\log\frac{6}{\delta}}{n^{1-\frac{1}{2r+\beta}+\frac{\epsilon}{r}}}}+1\\
	&\le \left(2\kappa^2 +\sqrt{2\kappa^2  C_0 }+1\right) \log\frac{6}{\delta},
\end{align*}
where we can verify that $n^{-1+\frac{\alpha\epsilon}{r-1/2}}n^{\frac{1}{2r+\beta}-\frac{\epsilon}{r}}\le 1$ and $n^{\frac{\alpha\epsilon}{r-1/2}}n^{1-\frac{1}{2r+\beta}+\frac{\epsilon}{r}}n^{-1+\frac{1}{2r+\beta}-\frac{\epsilon}{r}}\le 1.$

According to Proposition \ref{proposition: J3}, with confidence at least $1-\delta/3,$ there holds
\begin{align*}
	J_3&=\left\|\left(\lambda I+\hat{L}_K\right)^{-1/2}\left(S_X^\top \hat{W} \bar{y}-S_X^\top \hat{W} f_\rho\right)\right\|_K\\
	&\le \frac{4M\kappa D_n\log\frac{6}{\delta}}{n\sqrt{\lambda}}+\sqrt{\frac{8M^2 D_n \hat{\mathcal{N}}(\lambda)\log\frac{6}{\delta}}{n}}\\
	&\le \frac{4M\kappa n^{\frac{\alpha\epsilon}{r-1/2}}\log\frac{6}{\delta}}{n^{1-\frac{1}{2(2r+\beta)}+\frac{\epsilon}{2r}}}+\sqrt{\frac{8M^2 n^{\frac{\alpha\epsilon}{r-1/2}} C_0 n^{\frac{\beta}{2r+\beta}-\frac{\beta\epsilon}{r}}\log\frac{6}{\delta}}{n}}\\
	&\le \left(4M\kappa +\sqrt{8M^2 C_0 }\right)n^{-\frac{r}{2r+\beta}+\epsilon}\log\frac{6}{\delta},
\end{align*}
where we can easily verify that $n^{\frac{\alpha\epsilon}{r-1/2}}n^{-1+\frac{1}{2(2r+\beta)}-\frac{\epsilon}{2r}}\le n^{-
	\frac{r}{2r+\beta}+\epsilon}$, and $n^{\frac{\alpha\epsilon}{r-1/2}}n^{\frac{\beta}{2r+\beta}-\frac{\beta\epsilon}{r}} n^{-1}\le  n^{-
	\frac{r}{2r+\beta}+\epsilon}.$

By Proposition \ref{proposition: J4}, we can obtain
\begin{align*}
	J_4&=\left\|	L_K- \hat{L}_K\right\|_{op}\le  \kappa^2 D_n^{-\frac{k-1}{\alpha}} \left( \frac12 k!C^{k-2}\sigma^2\right)^\frac{1}{\alpha}\\
	&\le \kappa^2  \left(\frac12\left(\left\lceil\frac{r-1/2}{\epsilon}\right\rceil+1\right)!C^{\left\lceil\frac{r-1/2}{\epsilon}\right\rceil-1}\sigma^2\right)^\frac{1}{\alpha} n^{-1+\frac{\epsilon}{r-1/2}}.
\end{align*}
It follows that
\begin{align*}
	D_n^{r-3/2}\lambda^{1/2}J_4&=	n^{\frac{(r-3/2)\alpha\epsilon}{r-1/2}}n^{-\frac{1}{2(2r+\beta)}+\frac{\epsilon}{2r}}  \left\|	L_K- \hat{L}_K\right\|_{op}\\
	&\le n^{\frac{(r-3/2)\alpha\epsilon}{r-1/2}}n^{-\frac{1}{2(2r+\beta)}+\frac{\epsilon}{2r}} \kappa^2  \left(\frac12\left(\left\lceil\frac{r-1/2}{\epsilon}\right\rceil+1\right)!C^{\left\lceil\frac{r-1/2}{\epsilon}\right\rceil-1}\sigma^2\right)^\frac{1}{\alpha} n^{-1+\frac{\epsilon}{r-1/2}}\\
	&\le \kappa^2  \left(\frac12\left(\left\lceil\frac{r-1/2}{\epsilon}\right\rceil+1\right)!C^{\left\lceil\frac{r-1/2}{\epsilon}\right\rceil-1}\sigma^2\right)^\frac{1}{\alpha} n^{-1+\epsilon-\frac{1}{2(2r+\beta)}+\frac{\epsilon}{2r}} \\
	&\le\kappa^2  \left(\frac12\left(\left\lceil\frac{r-1/2}{\epsilon}\right\rceil+1\right)!C^{\left\lceil\frac{r-1/2}{\epsilon}\right\rceil-1}\sigma^2\right)^\frac{1}{\alpha} n^{-1+\epsilon},
\end{align*}
where the last inequality holds due to the fact that $-\frac{1}{2(2r+\beta)}+\frac{\epsilon}{2r}<0$ since $\epsilon<\frac{r}{2r+\beta}$.
And by Proposition \ref{proposition: J5}, with confidence at least $1-\delta/3$, we can conclude that
\begin{align*}
	J_5=\left\|\hat{L}_K-S_X^\top \hat{W} S_X\right\|_{op}
	\le \frac{2\kappa^2 D_n\log\frac{2}{\delta}}{n}+\sqrt{\frac{2\kappa^4 D_n^2 \log\frac{2}{\delta}}{n}}\le {4\kappa^2 }n^{-\frac12 +\frac{\alpha\epsilon}{r-1/2}}\log\frac{6}{\delta}.
\end{align*}
Together with the choice of $D_n$ and $\lambda$, with confidence $1-\delta/3$, we have
\begin{align*}
	D_n^{r-3/2}\lambda^{1/2} J_5&= n^{\frac{(r-3/2)\alpha\epsilon}{r-1/2}}n^{-\frac{1}{2(2r+\beta)}+\frac{\epsilon}{2r}}\left\|\hat{L}_K-S_X^\top \hat{W} S_X\right\|_{op}\\
	&\le {4\kappa^2 } n^{\frac{(r-3/2)\alpha\epsilon}{r-1/2}}n^{-\frac{1}{2(2r+\beta)}+\frac{\epsilon}{2r}} n^{-\frac12 +\frac{\alpha\epsilon}{r-1/2}}\log\frac{6}{\delta}\\
	&={4\kappa^2 } n^{-\frac{1}{2(2r+\beta)}+\frac{\epsilon}{2r}-\frac12 +\alpha\epsilon}\log\frac{6}{\delta}\\
	&\le {4\kappa^2 } n^{-\frac12 +\alpha\epsilon}\log\frac{6}{\delta},
\end{align*}
where the last inequality holds due to the fact that $-\frac{1}{2(2r+\beta)}+\frac{\epsilon}{2r}<0$ since $\epsilon<\frac{r}{2r+\beta}$.		
Therefore, for $r\ge 3/2,$ substituting the above-mentioned estimates into Proposition \ref{proposition: error decompsotion for clipped weight}, with confidence at least $1-\delta,$ we have
\begin{equation*}
	\begin{aligned}
		&\|f_{\bz,\lambda}^{\hat{\mathbf{w}}}-f_\rho\|_{\rho_\X^{te}}\le J_1^{1/2} J_2^{1/2}\Big(2b J_2^{1/2} J_3+ \sqrt{2}(b+1+\gamma_{1/2}) (r-1/2)\kappa^{2r-3}\left\| u_\rho\right\|_{\rho_\X^{te}}\lambda^{1/2}D_n^{2r-3}\left(J_4+J_5\right)
		\\&~~+2^r(b+1+\gamma_r)\left\| u_\rho\right\|_{\rho_\X^{te}}\lambda^r\Big)\\
		&\le \Big( \kappa^2  \left(\frac12\left(\left\lceil\frac{r-1/2}{\epsilon}\right\rceil+1\right)!C^{\left\lceil\frac{r-1/2}{\epsilon}\right\rceil-1}\sigma^2\right)^\frac{1}{\alpha}+1\Big)^{1/2} \left(2\kappa^2 +\sqrt{2\kappa^2  C_0 }+1\right)^{1/2} \log^{1/2}\frac{6}{\delta}\\
		&\cdot \Bigg(2b \left(2\kappa^2 +\sqrt{2\kappa^2  C_0 }+1\right)^{1/2}  \left(4M\kappa +\sqrt{8M^2 C_0 }\right)n^{-\frac{r}{2r+\beta}+\epsilon}\log^{3/2}\frac{6}{\delta}\\
		&+ \sqrt{2}(b+1+\gamma_{1/2}) (r-1/2)\kappa^{2r-3}\left\| u_\rho\right\|_{\rho_\X^{te}}\Big(\kappa^2  \left(\frac12\left(\left\lceil\frac{r-1/2}{\epsilon}\right\rceil+1\right)!C^{\left\lceil\frac{r-1/2}{\epsilon}\right\rceil-1}\sigma^2\right)^\frac{1}{\alpha} \\
		&\cdot n^{-1+\epsilon}+ {4\kappa^2 } n^{-\frac12 +\alpha\epsilon}\log\frac{6}{\delta}\Big)+2^r(b+1+\gamma_r)\left\| u_\rho\right\|_{\rho_\X^{te}}n^{-\frac{r}{2r+\beta}+\epsilon}\Bigg)\\
		&\le C_4 n^{-\frac{r}{2r+\beta}+\epsilon}\log^2\frac{6}{\delta},
	\end{aligned}
\end{equation*}
where
\begin{align*}
	C_4&=\Big( \kappa^2  \left(\frac12\left(\left\lceil\frac{r-1/2}{\epsilon}\right\rceil+1\right)!C^{\lceil\frac{r-1/2}{\epsilon}\rceil-1}\sigma^2\right)^\frac{1}{\alpha}+1\Big)^{1/2} \left(2\kappa^2 +\sqrt{2\kappa^2  C_0 }+1\right)^{1/2} \\
	&\cdot \Bigg(2b \left(2\kappa^2 +\sqrt{2\kappa^2  C_0 }+1\right)^{1/2}  \left(4M\kappa +\sqrt{8M^2 C_0 }\right)+ \sqrt{2}(b+1+\gamma_{1/2}) (r-1/2)\kappa^{2r-3}\left\| u_\rho\right\|_{\rho_\X^{te}}\\
	&\cdot\Big(\kappa^2  \left(\frac12\left(\left\lceil\frac{r-1/2}{\epsilon}\right\rceil+1\right)!C^{\left\lceil\frac{r-1/2}{\epsilon}\right\rceil-1}\sigma^2\right)^\frac{1}{\alpha}+ {4\kappa^2 }\Big)+2^r(b+1+\gamma_r)\left\| u_\rho\right\|_{\rho_\X^{te}}\Bigg).
\end{align*}
Then the desired results holds by choosing $\tilde{C}_{r,\epsilon}=\max\{C_3,C_4\}$ and the fact that  $\log\frac{4}{\delta}<\log\frac{6}{\delta}$ for $0<\delta<1.$

\subsection{Convergence analysis of unweighted spectral algorithms under covariate shift}
In this subsection, we prove the main results for classical spectral algorithm (unweighted spectral algorithm) with covariate shift. Recall that, given two self-adjoint operators $A$ and $B$, the notation $A\succeq B$ indicates that $A-B\succeq 0$, where $A-B$ is a positive semidefinite operator. Alternatively, this condition can be expressed as $\langle Af, f\rangle_{\mathcal{H}} \geq \langle Bf, f\rangle_{\mathcal{H}}$ for all $f \in \mathcal{H}$. If $A\succeq B$, then for any operator $C$ on $\mathcal{H}$, it follows that $C^T AC\succeq C^TBC$.	
\begin{lemma}\label{lemma: operator3}
If the weight function is uniformly bounded, i.e., there exists some constant $U$ such that $|w(x)|\le U$ for all $x\in\X,$ then
	\begin{align*}
		\left\|L_K^{1/2}(\lambda I+\tilde{L}_K)^{-1/2}\right\|_{op}\le \sqrt{U}.
	\end{align*}
\end{lemma}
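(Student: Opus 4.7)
The plan is to first convert the statement into an equivalent operator inequality, then exploit the boundedness of the density ratio to get a pointwise domination that transfers to the operator level. The key identity I would use is the standard $C^*$-identity for bounded operators $T$ on a Hilbert space: $\|T\|_{op}^2=\|T^*T\|_{op}$. Applying this with $T=L_K^{1/2}(\lambda I+\tilde{L}_K)^{-1/2}$ and noting that $(\lambda I+\tilde{L}_K)^{-1/2}$ is self-adjoint (being a function of the self-adjoint positive operator $\tilde{L}_K$) gives
\begin{equation*}
\left\|L_K^{1/2}(\lambda I+\tilde{L}_K)^{-1/2}\right\|_{op}^2=\left\|(\lambda I+\tilde{L}_K)^{-1/2}L_K(\lambda I+\tilde{L}_K)^{-1/2}\right\|_{op}.
\end{equation*}
So it suffices to bound this symmetrized operator by $U$ in the operator norm.

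The main step is to establish the operator inequality $L_K\preceq U\tilde{L}_K$ on $\H_K$. Since $w=d\rho_\X^{te}/d\rho_\X^{tr}$, for any $f\in\H_K$ the reproducing property gives
\begin{equation*}
\langle L_Kf,f\rangle_K=\int_\X f(x)^2\,d\rho_\X^{te}(x)=\int_\X f(x)^2 w(x)\,d\rho_\X^{tr}(x)\le U\int_\X f(x)^2\,d\rho_\X^{tr}(x)=U\langle \tilde{L}_Kf,f\rangle_K,
\end{equation*}
which yields $L_K\preceq U\tilde{L}_K$. Since $\tilde{L}_K\preceq \lambda I+\tilde{L}_K$, we conclude $L_K\preceq U(\lambda I+\tilde{L}_K)$.

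Finally, conjugating this operator inequality by the self-adjoint operator $(\lambda I+\tilde{L}_K)^{-1/2}$ preserves the ordering, so
\begin{equation*}
(\lambda I+\tilde{L}_K)^{-1/2}L_K(\lambda I+\tilde{L}_K)^{-1/2}\preceq U\,(\lambda I+\tilde{L}_K)^{-1/2}(\lambda I+\tilde{L}_K)(\lambda I+\tilde{L}_K)^{-1/2}=UI.
\end{equation*}
Taking operator norms and combining with the $C^*$-identity above delivers the claim. There is no substantive obstacle; the only minor care needed is to verify that the measure-theoretic step (rewriting $d\rho_\X^{te}$ in terms of $w\,d\rho_\X^{tr}$) is valid, which is guaranteed by the absolute continuity implicit in the definition of $w$, and to note that conjugation by a self-adjoint bounded operator preserves the Loewner order.
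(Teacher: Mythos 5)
Your proof is correct and follows essentially the same route as the paper: establish the Loewner order inequality $L_K\preceq U(\lambda I+\tilde L_K)$ via the pointwise bound $w\le U$ and the reproducing property, conjugate by $(\lambda I+\tilde L_K)^{-1/2}$, and convert to an operator-norm bound on $L_K^{1/2}(\lambda I+\tilde L_K)^{-1/2}$ via $\|T\|_{op}^2=\|T^*T\|_{op}$. The only difference is cosmetic: you spell out the $C^*$-identity and the preservation of the Loewner order under conjugation, which the paper uses implicitly.
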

\begin{proof}
	For any $f\in\H_K,$
	\begin{align*}
		\langle L_Kf,f\rangle_K &=\left\langle\int_{\X} f(x)K_xd\rho_\X^{te},f\right\rangle_K=\int_{\X} (f(x))^2 w(x)d\rho_\X^{tr}\le U\int_{\X} (f(x))^2 d\rho_\X^{tr}\\&\le U\left\langle \tilde{L}_K f, f\right\rangle_K\le U(\lambda \|f\|_K^2+\left\langle \tilde{L}_K f, f\right\rangle_K)=U\left\langle \left(\lambda I+ \tilde{L}_K\right) f, f\right\rangle_K,
	\end{align*}
	which implies $
	L_K \preceq  U\left(\lambda I+ \tilde{L}_K\right).$ Then we
	$$\left(\lambda I+ \tilde{L}_K\right)^{-1/2}	L_K \left(\lambda I+ \tilde{L}_K\right)^{-1/2}\preceq  U I.$$
	Then we have
	\begin{align*}
		\left\|L_K^{1/2}\left(\lambda I+\tilde{L}_K\right)^{-1/2}\right\|_{op}^2=	\left\|	\left(\lambda I+ \tilde{L}_K\right)^{-1/2}	L_K \left(\lambda I+ \tilde{L}_K\right)^{-1/2}\right\|_{op} \leq U.
	\end{align*}
	This completes the proof.
\end{proof}
Now, we are ready to demonstrate the proof of Theorem \ref{theorem: unweighted spectral algorithm}.

\noindent{\bf Proof of Theorem \ref{theorem: unweighted spectral algorithm}.}
By the definition of $f_{\bz,\lambda}$ and the property \eqref{condition2} of the filter function $g_\lambda$, we have the following error decomposition
\begin{align*}\label{equation: error decomposition intermediate term unweighted}
	&\|f_{\bz,\lambda}-f_\rho\|_{\rho_\X^{te}}=\left\|L_K^{1/2}\left(f_{\bz,\lambda}-f_\rho\right)\right\|_K =\left\|L_K^{1/2}\left(\lambda I+ \tilde{L}_K\right)^{-1/2} \left(\lambda I+ \tilde{L}_K\right)^{1/2}  (f_{\bz,\lambda}-f_\rho)\right\|_K\\
	&=\left\|L_K^{1/2} \left(\lambda I+ \hat{L}_K\right)^{-1/2} \left(\lambda I+ \hat{L}_K\right)^{1/2} \left(\lambda I+ S_{X}^\top  S_X\right)^{-1/2} \left(\lambda I+ S_{X}^\top S_X\right)^{1/2} (f_{\bz,\lambda}-f_\rho)\right\|_K\\
	&\le \left\|L_K^{1/2} \left(\lambda I+ \tilde{L}_K  \right)^{-1/2}\right\|_{op}  \left\|\left(\lambda I+ L_K\right)^{1/2} \left(\lambda I+ S_{X}^\top  S_X\right)^{-1/2}\right\|_{op}\cdot\\
	&~~~~\Bigg[\left\|\left(\lambda I+ S_{X}^\top  S_X\right)^{1/2} \left(g_\lambda(S_{X}^\top  S_X)(S_X^\top \bar{y}-S_X^\top  S_X f_\rho)\right)\right\|_K \\
	&~~~~	+ \left\|\left(\lambda I+ S_{X}^\top  S_X\right)^{1/2} \left(g_\lambda(S_{X}^\top  S_X)S_X^\top  S_X -I\right)f_\rho\right\|_K\Bigg]\\
	&\le \left\|L_K^{1/2} \left(\lambda I+ \tilde{L}_K  \right)^{-1/2}\right\|_{op}  \cdot \\
	&~~~~\Bigg[2b\left\|\left(\lambda I+ L_K\right)^{1/2} \left(\lambda I+ S_{X}^\top  S_X\right)^{-1/2}\right\|_{op}^2\cdot\left\| \left(\lambda I+\tilde{L}_K\right)^{-\frac12}\left(S_X^\top \bar{y}-S_X^\top  S_X f_\rho\right)\right\|_K \notag\\
	&~~~~+\left\|\left(\lambda I+ L_K\right)^{1/2} \left(\lambda I+ S_{X}^\top  S_X\right)^{-1/2}\right\|_{op}\cdot \left\|\left(\lambda I+ S_{X}^\top  S_X\right)^{1/2} \left(g_\lambda(S_{X}^\top  S_X)S_X^\top  S_X -I\right)f_\rho\right\|_K\Bigg].\notag
\end{align*}
We can observe that the error decomposition above is almost the same as Proposition 2 in \citet{guolinzhou2017}, except for the additional term  $\left\|L_K^{1/2} (\lambda I+ \tilde{L}_K  )^{-1/2}\right\|_{op}$ on the right-hand side in our case.
Then by Theorem 2 in \citet{guolinzhou2017} and Lemma \ref{lemma: operator3}, with confidence at least $1-\delta,$
\begin{align*}
	\|f_{\bz,\lambda}-f_\rho\|_{\rho_\X^{te}}\le  \tilde{C} N^{-\frac{ r}{2r+\beta}}\left(\log6/\delta\right)^4,
\end{align*}
where $\tilde{C}=2\sqrt{U}C\left[4\left(\kappa^2+\kappa\sqrt{C_0} \right)^2
+ 1\right]\left(\kappa^2+\kappa \sqrt{C_0}+2\right)$. This completes the proof.


\acks{The work by J. Fan is partially supported by the Research Grants Council of Hong Kong [Project No. HKBU12302923, No. HKBU12303024], Guangdong Basic and Applied Basic Research Fund [Project No. 2024A1515011878], and Hong Kong Baptist University [Project No. RC-FNRA-IG/22-23/SCI/02]. The work by Z. C. Guo is partially supported by Zhejiang Provincial Natural Science Foundation of China [Project No. LR20A010001] and National Natural Science Foundation of China [Project  No. 12271473, No. U21A20426]. The work by L. Shi is partially supported by the National Natural Science Foundation of China (Grant No. 12171039). All authors contributed equally to this work and are listed alphabetically. The corresponding author is Zheng-Chu Guo.}


\vskip 0.2in
\bibliography{main}

\begin{thebibliography}{38}
\providecommand{\natexlab}[1]{#1}
\providecommand{\url}[1]{\texttt{#1}}
\expandafter\ifx\csname urlstyle\endcsname\relax
  \providecommand{\doi}[1]{doi: #1}\else
  \providecommand{\doi}{doi: \begingroup \urlstyle{rm}\Url}\fi

\bibitem[Aronszajn(1950)]{Aron}
Nachman Aronszajn.
\newblock Theory of reproducing kernels.
\newblock \emph{Transactions of the American Mathematical Society}, 68\penalty0
  (3):\penalty0 337--404, 1950.

\bibitem[Bathis(1997)]{Bathia1997}
Rajendra Bathis.
\newblock Matrix {A}nalysis, {V}olume 169 of {G}raduate {T}exts in
  {M}athematics, 1997.

\bibitem[Bauer et~al.(2007)Bauer, Pereverzev, and Rosasco]{Bauer07}
Frank Bauer, Sergei Pereverzev, and Lorenzo Rosasco.
\newblock On regularization algorithms in learning theory.
\newblock \emph{Journal of Complexity}, 23\penalty0 (1):\penalty0 52--72, 2007.

\bibitem[Bickel et~al.(2009)Bickel, Br{\"u}ckner, and Scheffer]{Bickel09}
Steffen Bickel, Michael Br{\"u}ckner, and Tobias Scheffer.
\newblock Discriminative learning under covariate shift.
\newblock \emph{Journal of Machine Learning Research}, 10\penalty0 (9), 2009.

\bibitem[Blanchard and Kr{\"a}mer(2010)]{Blanchard2010}
Gilles Blanchard and Nicole Kr{\"a}mer.
\newblock Optimal learning rates for kernel conjugate gradient regression.
\newblock \emph{Advances in Neural Information Processing Systems}, 23, 2010.

\bibitem[Blanchard et~al.(2021)Blanchard, Deshmukh, Dogan, Lee, and
  Scott]{Blanchard21}
Gilles Blanchard, Aniket~Anand Deshmukh, Urun Dogan, Gyemin Lee, and Clayton
  Scott.
\newblock Domain generalization by marginal transfer learning.
\newblock \emph{Journal of Machine Learning Research}, 22\penalty0
  (2):\penalty0 1--55, 2021.

\bibitem[Boucheron et~al.(2013)Boucheron, Lugosi, and Massart]{Boucheron2013}
St{\'e}phane Boucheron, G{\'a}bor Lugosi, and Pascal Massart.
\newblock \emph{Concentration Inequalities: A Nonasymptotic Theory of
  Independence}.
\newblock Oxford University Press, 2013.

\bibitem[Caponnetto and De~Vito(2007)]{Caponnetto07}
Andrea Caponnetto and Ernesto De~Vito.
\newblock Optimal rates for the regularized least-squares algorithm.
\newblock \emph{Foundations of Computational Mathematics}, 7\penalty0
  (3):\penalty0 331--368, 2007.

\bibitem[Cortes et~al.(2010)Cortes, Mansour, and Mohri]{Cortes10}
Corinna Cortes, Yishay Mansour, and Mehryar Mohri.
\newblock Learning bounds for importance weighting.
\newblock \emph{Advances in Neural Information Processing Systems}, 23, 2010.

\bibitem[Cucker and Zhou(2007)]{cucker2007learning}
Felipe Cucker and Ding-Xuan Zhou.
\newblock \emph{Learning Theory: An Approximation Theory Viewpoint}.
\newblock Cambridge University Press, 2007.

\bibitem[De~Vito et~al.(2005)De~Vito, Rosasco, Caponnetto, De~Giovannini,
  Odone, and Bartlett]{Devito2005}
Ernesto De~Vito, Lorenzo Rosasco, Andrea Caponnetto, Umberto De~Giovannini,
  Francesca Odone, and Peter Bartlett.
\newblock Learning from examples as an inverse problem.
\newblock \emph{Journal of Machine Learning Research}, 6\penalty0 (5), 2005.

\bibitem[Della~Vecchia et~al.(2025)Della~Vecchia, Watusadisi, De~Vito, and
  Rosasco]{della2025computational}
Andrea Della~Vecchia, Arnaud~Mavakala Watusadisi, Ernesto De~Vito, and Lorenzo
  Rosasco.
\newblock Computational efficiency under covariate shift in kernel ridge
  regression.
\newblock \emph{arXiv preprint arXiv:2505.14083}, 2025.

\bibitem[Engl and Ramlau(2015)]{Engl96}
Heinz~W Engl and Ronny Ramlau.
\newblock Regularization of inverse problems.
\newblock In \emph{Encyclopedia of Applied and Computational Mathematics},
  pages 1233--1241. Springer, 2015.

\bibitem[Fan et~al.(2024)Fan, Guo, and Shi]{Fan24}
Jun Fan, Zheng-Chu Guo, and Lei Shi.
\newblock Spectral algorithms for functional linear regression.
\newblock \emph{Communications on Pure and Applied Analysis}, 23\penalty0
  (7):\penalty0 895--915, 2024.

\bibitem[Fang et~al.(2020)Fang, Lu, Niu, and Sugiyama]{Fang20}
Tongtong Fang, Nan Lu, Gang Niu, and Masashi Sugiyama.
\newblock Rethinking importance weighting for deep learning under distribution
  shift.
\newblock \emph{Advances in Neural Information Processing Systems},
  33:\penalty0 11996--12007, 2020.

\bibitem[Feng et~al.(2023)Feng, He, Wang, Wang, and Zhang]{Feng23}
Xingdong Feng, Xin He, Caixing Wang, Chao Wang, and Jingnan Zhang.
\newblock Towards a unified analysis of kernel-based methods under covariate
  shift.
\newblock \emph{Advances in Neural Information Processing Systems},
  36:\penalty0 73839--73851, 2023.

\bibitem[Gerfo et~al.(2008)Gerfo, Rosasco, Odone, Vito, and Verri]{LoGerfo2008}
L~Lo Gerfo, Lorenzo Rosasco, Francesca Odone, E~De Vito, and Alessandro Verri.
\newblock Spectral algorithms for supervised learning.
\newblock \emph{Neural Computation}, 20\penalty0 (7):\penalty0 1873--1897,
  2008.

\bibitem[Gizewski et~al.(2022)Gizewski, Mayer, Moser, Nguyen, Pereverzyev~Jr,
  Pereverzyev, Shepeleva, and Zellinger]{Gizewski22}
Elke~R Gizewski, Lukas Mayer, Bernhard~A Moser, Duc~Hoan Nguyen, Sergiy
  Pereverzyev~Jr, Sergei~V Pereverzyev, Natalia Shepeleva, and Werner
  Zellinger.
\newblock On a regularization of unsupervised domain adaptation in {RKHS}.
\newblock \emph{Applied and Computational Harmonic Analysis}, 57:\penalty0
  201--227, 2022.

\bibitem[Gogolashvili et~al.(2023)Gogolashvili, Zecchin, Kanagawa, Kountouris,
  and Filippone]{Gogolashvili2023}
Davit Gogolashvili, Matteo Zecchin, Motonobu Kanagawa, Marios Kountouris, and
  Maurizio Filippone.
\newblock When is importance weighting correction needed for covariate shift
  adaptation?
\newblock \emph{arXiv preprint arXiv:2303.04020}, 2023.

\bibitem[Gretton et~al.(2009)Gretton, Smola, Huang, Schmittfull, Borgwardt, and
  Sch{\"o}lkopf]{Gretton09}
Arthur Gretton, Alex Smola, Jiayuan Huang, Marcel Schmittfull, Karsten
  Borgwardt, and Bernhard Sch{\"o}lkopf.
\newblock Covariate shift by kernel mean matching.
\newblock \emph{Dataset Shift in Machine Learning}, 3\penalty0 (4):\penalty0 5,
  2009.

\bibitem[Guo et~al.(2023)Guo, Guo, and Shi]{GuoGuoShi2023}
Xin Guo, Zheng-Chu Guo, and Lei Shi.
\newblock Capacity dependent analysis for functional online learning
  algorithms.
\newblock \emph{Applied and Computational Harmonic Analysis}, 67:\penalty0
  101567, 2023.

\bibitem[Guo et~al.(2017)Guo, Lin, and Zhou]{guolinzhou2017}
Zheng-Chu Guo, Shao-Bo Lin, and Ding-Xuan Zhou.
\newblock Learning theory of distributed spectral algorithms.
\newblock \emph{Inverse Problems}, 33\penalty0 (7):\penalty0 074009, 2017.

\bibitem[Hu et~al.(2022)Hu, Shen, Wallis, Allen-Zhu, Li, Wang, Wang, and
  Chen]{LoRA}
Edward~J Hu, Yelong Shen, Phillip Wallis, Zeyuan Allen-Zhu, Yuanzhi Li, Shean
  Wang, Lu~Wang, and Weizhu Chen.
\newblock Lo{RA}: Low-rank adaptation of large language models.
\newblock \emph{International Conference on Learning Representations},
  1\penalty0 (2):\penalty0 3, 2022.

\bibitem[Kanamori et~al.(2009)Kanamori, Hido, and Sugiyama]{Kanamori09}
Takafumi Kanamori, Shohei Hido, and Masashi Sugiyama.
\newblock A least-squares approach to direct importance estimation.
\newblock \emph{Journal of Machine Learning Research}, 10:\penalty0 1391--1445,
  2009.

\bibitem[Kanamori et~al.(2012)Kanamori, Suzuki, and Sugiyama]{Kanamori12}
Takafumi Kanamori, Taiji Suzuki, and Masashi Sugiyama.
\newblock Statistical analysis of kernel-based least-squares density-ratio
  estimation.
\newblock \emph{Machine Learning}, 86\penalty0 (3):\penalty0 335--367, 2012.

\bibitem[Lin et~al.(2017)Lin, Guo, and Zhou]{lin17}
Shao-Bo Lin, Xin Guo, and Ding-Xuan Zhou.
\newblock Distributed learning with regularized least squares.
\newblock \emph{Journal of Machine Learning Research}, 18\penalty0
  (92):\penalty0 1--31, 2017.

\bibitem[Lu et~al.(2020)Lu, Math{\'e}, and Pereverzev]{Lu18}
Shuai Lu, Peter Math{\'e}, and Sergei~V Pereverzev.
\newblock Balancing principle in supervised learning for a general
  regularization scheme.
\newblock \emph{Applied and Computational Harmonic Analysis}, 48\penalty0
  (1):\penalty0 123--148, 2020.

\bibitem[Ma et~al.(2023)Ma, Pathak, and Wainwright]{Ma21}
Cong Ma, Reese Pathak, and Martin~J Wainwright.
\newblock Optimally tackling covariate shift in {RKHS}-based nonparametric
  regression.
\newblock \emph{The Annals of Statistics}, 51\penalty0 (2):\penalty0 738--761,
  2023.

\bibitem[Mansour et~al.(2009)Mansour, Mohri, and Rostamizadeh]{Mansour09}
Yishay Mansour, Mehryar Mohri, and Afshin Rostamizadeh.
\newblock Multiple source adaptation and the {R}{\'e}nyi divergence.
\newblock In \emph{Proceedings of the 25th Conference on Uncertainty in
  Artificial Intelligence, UAI 2009}, pages 367--374. AUAI Press, 2009.

\bibitem[Nguyen et~al.(2024)Nguyen, Pereverzyev, and Zellinger]{Nguyen23}
Duc~Hoan Nguyen, Sergei Pereverzyev, and Werner Zellinger.
\newblock General regularization in covariate shift adaptation.
\newblock \emph{Data-driven Models in Inverse Problems}, 31:\penalty0 245,
  2024.

\bibitem[Nguyen et~al.(2010)Nguyen, Wainwright, and Jordan]{Nguyen10}
XuanLong Nguyen, Martin~J Wainwright, and Michael~I Jordan.
\newblock Estimating divergence functionals and the likelihood ratio by convex
  risk minimization.
\newblock \emph{IEEE Transactions on Information Theory}, 56\penalty0
  (11):\penalty0 5847--5861, 2010.

\bibitem[Pan and Yang(2009)]{Pan09}
Sinno~Jialin Pan and Qiang Yang.
\newblock A survey on transfer learning.
\newblock \emph{IEEE Transactions on Knowledge and Data Engineering},
  22\penalty0 (10):\penalty0 1345--1359, 2009.

\bibitem[Pinelis(1994)]{Pinelis94}
Iosif Pinelis.
\newblock Optimum bounds for the distributions of martingales in {B}anach
  spaces.
\newblock \emph{The Annals of Probability}, pages 1679--1706, 1994.

\bibitem[Shimodaira(2000)]{Shimodaira00}
Hidetoshi Shimodaira.
\newblock Improving predictive inference under covariate shift by weighting the
  log-likelihood function.
\newblock \emph{Journal of Statistical Planning and Inference}, 90\penalty0
  (2):\penalty0 227--244, 2000.

\bibitem[Smale and Zhou(2003)]{Smale2003}
Steve Smale and Ding-Xuan Zhou.
\newblock Estimating the approximation error in learning theory.
\newblock \emph{Analysis and Applications}, 1\penalty0 (01):\penalty0 17--41,
  2003.

\bibitem[Sugiyama and Krauledat(2007)]{Sugiyama07}
Masashi Sugiyama and Matthias Krauledat.
\newblock Covariate shift adaptation by importance weighted cross validation.
\newblock \emph{Journal of Machine Learning Research}, 8:\penalty0 985--1005,
  2007.

\bibitem[Sugiyama et~al.(2012)Sugiyama, Suzuki, and Kanamori]{Sugi2012book}
Masashi Sugiyama, Taiji Suzuki, and Takafumi Kanamori.
\newblock \emph{Density Ratio Estimation in Machine Learning}.
\newblock Cambridge University Press, 2012.

\bibitem[Zellinger et~al.(2023)Zellinger, Kindermann, and Pereverzyev]{Zell23}
Werner Zellinger, Stefan Kindermann, and Sergei~V Pereverzyev.
\newblock Adaptive learning of density ratios in {RKHS}.
\newblock \emph{Journal of Machine Learning Research}, 24\penalty0
  (395):\penalty0 1--28, 2023.

\end{thebibliography}

\end{document}